\newtheorem{definition}{Definition}
\newtheorem*{corollary}{Corollary}
\newtheorem{lemma}{Lemma}
\definecolor{Blue}{RGB}{128, 128, 255}
\definecolor{Gray}{RGB}{200, 200, 200}
\definecolor{Grayblue}{RGB}{180, 180, 222}
\definecolor{Whiteblue}{RGB}{230, 230, 255}
\definecolor{Blue}{RGB}{128, 128, 255}
\newcolumntype{G}{>{\columncolor{Gray}}X}
\newcolumntype{B}{>{\columncolor{Blue}}X}
\newcolumntype{P}{>{\columncolor{Grayblue}}X}
\newcolumntype{Q}{>{\columncolor{Whiteblue}}X}
\newif\ifpaper
\newcommand{\R}{\mathbb{R}}
\newcommand{\Lce}{\mathcal{L}_{\text{CE}}}
\newcommand{\E}{\mathop{{}\mathbb{E}}}
\newcommand{\D}{\mathcal{D}}
\newcolumntype{C}{>{\centering\arraybackslash}X}
\def\R{\mathbb{R}}
\def\D{\mathcal{D}}
\newcommand{\pin}{p(x|i)}
\newcommand{\ptrout}{p(x|o)}
\definecolor{lightgrey}{rgb}{0.9, 0.9, 0.9}
\definecolor{update}{rgb}{0., 0., 0.}
\definecolor{columnbest}{rgb}{0., 0.4, 0.}
\newcommand{\best}[1]{\textcolor{columnbest}{#1}}
\newcommand{\subtab}{4mm}
\newcommand{\combiStwo}{Combi $s_2$}
\newcommand{\combiSthree}{Combi $s_3$}
\newcommand{\backgroundSone}{BGC $s_1$}
\newcommand{\backgroundStwo}{BGC $s_2$}
\newcommand{\backgroundSthree}{BGC $s_3$}
\newcommand{\Kpo}{K\!+\!1}
\icmltitlerunning{Breaking Down Out-of-Distribution Detection}
\begin{document}

\setlength{\abovedisplayskip}{2.2mm}
\setlength{\belowdisplayskip}{2.2mm}
\setlength{\textfloatsep}{3mm}

\twocolumn[
\icmltitle{Breaking Down Out-of-Distribution Detection: Many Methods Based on OOD Training Data Estimate a Combination of the Same Core Quantities
}

\icmlsetsymbol{equal}{*}

\author{%
  Julian Bitterwolf\\
  Department of Science\\
  University of Tübingen \\
  \texttt{julian.bitterwolf@uni-tuebingen.de \ } \\
   \And
   Alexander Meinke \\
  Department of Science\\
  University of Tübingen \\
   \texttt{alexander.meinke@uni-tuebingen.de} \\
   \AND
   Maximilian Augustin \\
  Department of Science\\
  University of Tübingen \\
   \texttt{maximilian.augustin@uni-tuebingen.de} \\
   \And
   Matthias Hein \\
  Department of Science\\
  University of Tübingen \\
   \texttt{matthias.hein@uni-tuebingen.de} \\
}

\begin{icmlauthorlist}
\icmlauthor{Julian Bitterwolf}{yyy}
\icmlauthor{Alexander Meinke}{yyy}
\icmlauthor{Maximilian Augustin}{yyy}
\icmlauthor{Matthias Hein}{yyy}
\end{icmlauthorlist}

\icmlaffiliation{yyy}{University of Tübingen}
\icmlcorrespondingauthor{}{julian.bitterwolf@uni-tuebingen.de}
\icmlkeywords{Machine Learning, ICML}

\vskip 0.3in
]

\printAffiliationsAndNotice

\begin{abstract}
It is an important problem in trustworthy machine learning to recognize out-of-distribution (OOD) inputs which are inputs unrelated to the in-distribution task. Many out-of-distribution detection methods have been suggested in recent years. The goal of this paper is to recognize common objectives as well as to identify the implicit scoring functions of different OOD detection methods. We focus on the sub-class of methods that use surrogate OOD data during training in order to learn an OOD detection score that generalizes to new unseen out-distributions at test time. We show that binary discrimination between in- and (different) out-distributions is equivalent to several distinct formulations of the OOD detection problem. When trained in a shared fashion with a standard classifier, this binary discriminator reaches an OOD detection performance similar to that of Outlier Exposure. Moreover, we show that the confidence loss which is used by Outlier Exposure has an implicit scoring function which differs in a non-trivial fashion from the theoretically optimal scoring function in the case where training and test out-distribution are the same, which again is similar to the one used when training an Energy-Based OOD detector or when adding a background class. In practice, when trained in exactly the same way, all these methods perform similarly.
\end{abstract}

\section{Introduction}

While deep learning has significantly improved performance in many application domains, there are serious concerns for using deep neural networks in applications which are of safety-critical nature. With one major problem being adversarial samples \citep{SzeEtAl2014,MadEtAl2018}, which are small imperceptible modifications of the image that change the decision of the classifier, another major problem are overconfident predictions \citep{NguYosClu2015,HenGim2017,HeiAndBit2019}
for images not belonging to the classes of the actual task. Here, one distinguishes between far out-of-distribution data, e.g. different forms of noise or completely unrelated tasks like CIFAR-10 vs. SVHN, and close out-of-distribution data which can for example occur in related image classification tasks where the semantic structure is very similar e.g. CIFAR-10 vs. CIFAR-100. Both are important to be distinguished from the in-distribution, but it is conceivable that close out-of-distribution data is the more difficult problem with potentially fatal consequences: in an automated diagnosis system we want that the system recognizes that it ``does not know'' when a new unseen disease comes in rather than assigning high confidence into a known class leading to fatal treatment decisions. Thus out-of-distribution awareness is a key property of trustworthy AI systems.

In this paper, we focus on the setting of OOD detection  where during training time, there is no information available on the distribution of OOD inputs that might appear when the model is used for inference.
It is often reasonable to assume access to a surrogate out-distribution during training.
One can however not assume that these surrogate is related to the OOD inputs that will be encountered at test-time.
A large number of different approaches to OOD detection based on combinations of density estimation, classifier confidence, logit space energy, feature space geometry, behaviour on auxiliary tasks, and other principles has been proposed to tackle this problem. We give a detailed overview of existing OOD detection methods 
in Appendix~\ref{sec:related_work}. However, most OOD detection papers are focused on establishing superior empirical detection performance and provide little theoretical background on differences but also similarities to existing methods. In this paper we want to take a different path as we believe that a solid theoretical basis is needed to make further progress in this field. Our goal is to identify, at least for a particular subclass of techniques, whether the differences are indeed due to a different underlying theoretical principle or whether they are due to the efficiency of different \emph{estimation techniques} for the same underlying detection criterion, called ``scoring function''. In some cases, we will see that one can even disentangle the estimation procedure from the scoring function, so that one can simulate several different scoring functions from one model's estimated quantities.\\
A simple approach to OOD detection is to treat it as a binary discrimination problem between in- and out-of-distribution, or more generally predicting a score how likely the input is OOD.
In this paper, we show that from the perspective of Bayesian decision theory, several established methods are indeed equivalent to this binary discriminator. Differences arise mainly from i) the choice of the training out-distribution, e.g. the popular Outlier Exposure of \citet{HenMazDie2019} has advocated the use of a rich and large set of natural images
as a proxy for the distribution of natural images, and ii) differences in the estimation procedure.
Concretely, the main contributions of this paper are:
\begin{itemize}[leftmargin=3.1mm, topsep=0pt]
    \item We show that several OOD detection approaches
    which optimize an objective that includes predictions on surrogate OOD data
    are equivalent to the binary discriminator between in- and out-distribution when analyzing the rankings induced by the Bayes optimal classifier/density.
    \item We derive the implicit scoring functions for the confidence loss \citep{LeeEtAl2018} used by Outlier Exposure \citep{HenMazDie2019}, for Energy-Based OOD Detection~\citep{liu2020energy}, and for an extra background class for the out-distribution \citep{thulasidasan2021effective}. The confidence scoring function turns out not to be equivalent to the ``optimal'' scoring function of the binary discriminator when training and test out-distributions are the same. 
    \item We show that the combination of a binary discriminator between in- and out-distribution with a standard classifier on the in-distribution, when trained in a shared fashion, yields OOD detection performance competitive with state-of-the-art methods based on surrogate OOD data. 
    \item We show that density estimation is equivalent to discrimination between the in-distribution and uniform noise which indicates why standard density estimates are not suitable for OOD detection, as has frequently been observed.
\end{itemize}
Even though we identify that a simple baseline is competitive with the state-of-the-art, the main aim of this paper is a better understanding of the key components of different OOD detection methods and to identify the key properties which lead to SOTA OOD detection performance. All of our findings are supported by extensive experiments on CIFAR-10 and CIFAR-100 with evaluation on various  challenging out-of-distribution test datasets.

\section{Models for OOD Data and Equivalence of OOD Detection Scores}\label{section:scores}
We first characterize the set of transformations of a scoring function which leaves the OOD detection criteria like AUC or FPR invariant. This is important for the analysis later on, since the scoring functions of different methods are in many cases not identical as functions but yield the same OOD detection performance by those criteria.
Like most work in the literature we consider OOD detection on a compact input domain $X$ with the most important example being image classification where $X = [0,1]^D$.
The most popular approach to OOD detection is the construction of an in-distribution-scoring function $f:X \rightarrow \mathbb{R} \cup \{\pm \infty \}$ such that $f(x)$ tends to be smaller if $x$ is drawn from an out-distribution $p(x|\D\!=\!o)$, short $p(x|o)$, than if it is drawn from the in-distribution $p(x|\D\!=\!i)$, short $p(x|i)$. There is a variety of different performance metrics for this task, with a very common one being the \textit{area under the receiver-operator characteristic curve} (AUC). The AUC for a scoring function $f$ distinguishing between an in-distribution $p(x|i)$ and an out-distribution $p(x|o)$ is given by
\begin{align*}
    \mathrm{AUC}_f& \big(p(x|i), p(x|o)\big) \\
    &= \underset{ \substack{
x \sim p(x|i) \\
y \sim p(z|o) 
} } { \mathlarger{\mathbb{E}}}\left[ \mathds{1}_{f(x)>f(z)} + \frac{1}{2} \mathds{1}_{f(x)=f(z)}\right].
\end{align*}
We define an equivalence of scoring functions based on their AUCs and will show that this equivalence implies equality of other employed performance metrics as well.
\begin{definition}\label{Def:AUCequivalence}
Two scoring functions $f$ and $g$ are equivalent and we write $f\cong g$ if 
\begin{align*}
\mathrm{AUC}_f \big(p(x|i), p(x|o)\big) = \mathrm{AUC}_g \big(p(x|i), p(x|o)\big)    
\end{align*}
for all potential distributions $p(x|i)$ and $p(x|o)$.
\end{definition}
As the AUC is not dependent on the actual values of $f$ but just on the ranking induced by $f$ one obtains the following characterization of the equivalence of two scoring functions.
\begin{restatable}{theorem}{scoreequiv}
\label{thm:score_equiv}
Two scoring functions $f,g$ are equivalent $f\cong g$ if and only if there exists a strictly
monotonously increasing $\phi :\mathrm{range}(g) \rightarrow\mathrm{range}(f)$ such that $f=\phi(g)$. 
\end{restatable}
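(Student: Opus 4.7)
The plan is to prove the two directions separately. The reverse direction is immediate: if $f=\phi\circ g$ with $\phi$ strictly monotonously increasing on $\mathrm{range}(g)$, then $\phi$ is in particular injective, so $f(x)>f(z)\iff g(x)>g(z)$ and $f(x)=f(z)\iff g(x)=g(z)$ for all $x,z\in X$. Consequently the two indicator variables inside the AUC integrand agree pointwise, and $\mathrm{AUC}_f(p(x|i),p(x|o))=\mathrm{AUC}_g(p(x|i),p(x|o))$ for every pair of distributions, giving $f\cong g$.

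For the forward direction I would argue by contrapositive, exposing any ranking disagreement between $f$ and $g$ via a pair of Dirac distributions $p(x|i)=\delta_a$, $p(x|o)=\delta_b$ for suitably chosen $a,b\in X$. Substituting these into the AUC formula collapses the expectation to $\mathds{1}_{f(a)>f(b)}+\tfrac{1}{2}\mathds{1}_{f(a)=f(b)}\in\{0,\tfrac{1}{2},1\}$, which directly encodes whether $f(a)$ is greater than, equal to, or less than $f(b)$; the analogous statement holds for $g$. Enumerating the three possible disagreements, (i) $g(a)=g(b)$ but $f(a)\neq f(b)$, (ii) $g(a)<g(b)$ but $f(a)=f(b)$, and (iii) $g(a)<g(b)$ but $f(a)>f(b)$, each of them produces AUCs that differ (either $\tfrac{1}{2}$ vs.\ $0/1$ or $0$ vs.\ $\tfrac{1}{2}/1$), contradicting $f\cong g$.

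Ruling out all three cases forces, for every $x,z\in X$, the implications $g(x)=g(z)\Rightarrow f(x)=f(z)$ and $g(x)<g(z)\Rightarrow f(x)<f(z)$. The first implication means that the map $\phi:\mathrm{range}(g)\to\mathrm{range}(f)$ defined by $\phi(g(x)):=f(x)$ is well-defined, and the second says that $\phi$ is strictly monotonously increasing, so $f=\phi\circ g$ as desired. The only mildly delicate point I anticipate is whether Dirac measures are admitted as ``potential distributions'' in Definition~\ref{Def:AUCequivalence}; since the AUC expression makes sense for arbitrary probability measures this should be unproblematic, but if one wishes to restrict to distributions with densities the same argument goes through with uniform distributions on vanishingly small balls around $a$ and $b$, because $f$ and $g$ need not be assumed measurable-continuous and the obstruction we need is still produced on a pair of distinguishable points.
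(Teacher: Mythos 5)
Your proposal is correct and follows essentially the same route as the paper's proof: the easy direction via pointwise agreement of the indicators under a strictly increasing $\phi$, and the converse by collapsing the AUC onto a pair of Dirac distributions to read off the relative order of $f$ and $g$ at any two points, then defining $\phi$ on $\mathrm{range}(g)$ by $\phi(g(x)):=f(x)$. The delicate point you flag about admitting Dirac measures applies equally to the paper's own argument, which uses them without comment.
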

\ifpaper
\begin{proof}\noindent
\begin{itemize}
    \item Assume that such a function $\phi$ exists. Then for any pair $x,y$ we have the logical equivalences $g(x) > g(y) \Leftrightarrow f(x) = \phi(g(x)) > \phi(g(y)) = f(y)$ and $g(x) = g(y) \Leftrightarrow f(x) = \phi(g(x)) = \phi(g(y)) = f(y)$. This directly implies that the AUCs are the same, regardless of the distributions.
    \item Assume $f\cong g$. For each $a \in \mathrm{range}(g)$, choose some $\hat{a} \in g^{-1}(a)$. For any pair $x,y \in X$, by regarding the Dirac distributions $\pin = \delta_x$ and $\pout = \delta_y$ that are each concentrated on one of the points, we can infer that $f(x) > f(y) \Leftrightarrow \mathrm{AUC}_f (\pin, \pout) = 1 \Leftrightarrow \mathrm{AUC}_g (\pin, \pout) = 1 \Leftrightarrow g(x) > g(y)$ and similarly $f(x) = f(y) \Leftrightarrow g(x) = g(y)$.
    The latter ensures that the function 
    \begin{align}
        \phi: \mathrm{range}(g) &\rightarrow\mathrm{range}(f) \\
        a &\mapsto f(\hat{a})
    \end{align}
    is independent of the choice of $\hat{a}$ and that $f = \phi \circ g$, and the former confirms that $\phi$ is strictly monotonously increasing. 
\end{itemize}
\end{proof}
\fi
\begin{corollary}
The equivalence between scoring functions in Def.~\ref{Def:AUCequivalence} is an equivalence relation.
\end{corollary}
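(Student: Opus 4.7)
The plan is to verify the three defining properties of an equivalence relation — reflexivity, symmetry, and transitivity — directly from Definition~\ref{Def:AUCequivalence}. Because $\cong$ is specified by an equality of real numbers (the AUC values) holding for every choice of in- and out-distribution, each property of $\cong$ should inherit from the corresponding property of numerical equality. Concretely, reflexivity is $\mathrm{AUC}_f(p(x|i),p(x|o))=\mathrm{AUC}_f(p(x|i),p(x|o))$; symmetry follows because equality of real numbers is symmetric for every fixed pair of distributions; and transitivity follows by chaining the two equalities supplied by $f\cong g$ and $g\cong h$ at the same pair $(p(x|i),p(x|o))$ and then quantifying over all such pairs.

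An alternative, perhaps more illuminating, route uses Theorem~\ref{thm:score_equiv}: reflexivity follows by taking $\phi=\mathrm{id}_{\mathrm{range}(f)}$; symmetry follows because a strictly monotonically increasing map $\phi:\mathrm{range}(g)\to\mathrm{range}(f)$ with $f=\phi(g)$ is a bijection onto its codomain and hence admits a strictly monotonically increasing inverse $\phi^{-1}:\mathrm{range}(f)\to\mathrm{range}(g)$ satisfying $g=\phi^{-1}(f)$; transitivity follows because the composition of two strictly monotonically increasing functions is again strictly monotonically increasing, so if $f=\phi(g)$ and $g=\psi(h)$ then $f=(\phi\circ\psi)(h)$. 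I would present the direct route as the main proof, since it is essentially one line, and mention the characterization route as a remark.

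There is no real obstacle; the corollary is a sanity check that the symbol $\cong$ is justified. The only point worth flagging is that in the symmetry step via Theorem~\ref{thm:score_equiv}, one should make explicit that $\phi$, being strictly monotonically increasing, is injective, and that by construction its codomain has been restricted to $\mathrm{range}(f)$, so it is a bijection whose inverse is well-defined and again strictly monotonically increasing.
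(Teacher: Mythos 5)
Your proof is correct. The paper states this corollary without proof, treating it as immediate; your direct argument (that $\cong$ is defined by an equality of real numbers quantified over all distribution pairs, so reflexivity, symmetry, and transitivity are inherited from equality of reals) is exactly the intended one-line justification, and your alternative route via Theorem~\ref{thm:score_equiv} (identity map, inverse of a strictly increasing bijection onto $\mathrm{range}(f)$, composition of strictly increasing maps) is also valid and correctly handles the one subtlety about invertibility.
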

Another metric is the \textit{false positive rate at a fixed true positive rate q}, denoted as FPR@qTPR. A commonly used value for the TPR is 95\%.  The smaller the FPR@qTPR, the better the OOD discrimination performance.
\begin{restatable}{lemma}{equivfpr}
    Two equivalent scoring functions $f\cong g$ have the same FPR@qTPR for any pair of in- and out-distributions $p(x|i), p(x|o)$ and for any chosen TPR q.
\end{restatable}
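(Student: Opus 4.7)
The plan is to leverage Theorem~\ref{thm:score_equiv}: since $f \cong g$, there exists a strictly monotonically increasing $\phi:\mathrm{range}(g)\to\mathrm{range}(f)$ with $f = \phi\circ g$. The key structural observation is that $\phi$ preserves both strict and non-strict order, so for any threshold $t\in\mathrm{range}(f)$ and $s=\phi^{-1}(t)\in\mathrm{range}(g)$ the super-level sets coincide exactly:
\begin{equation*}
\{x\in X : f(x) \geq t\} = \{x\in X : g(x) \geq s\}.
\end{equation*}
This will be the single fact doing almost all the work.

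From here I would proceed in two short steps. First, I would recall that for any scoring function $h$ and threshold $t$, the TPR and FPR at threshold $t$ on a pair $(p(x|i), p(x|o))$ are
\begin{equation*}
\mathrm{TPR}_h(t) = \Pr_{x\sim p(x|i)}[h(x)\geq t], \qquad \mathrm{FPR}_h(t) = \Pr_{x\sim p(x|o)}[h(x)\geq t],
\end{equation*}
and that FPR@qTPR is the functional
\begin{equation*}
\mathrm{FPR@qTPR}(h) = \inf\{\mathrm{FPR}_h(t) : t \text{ with } \mathrm{TPR}_h(t)\geq q\}
\end{equation*}
of these two threshold-indexed probabilities (this is the standard operational definition; the argument is identical for the symmetric $\sup$ convention). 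Second, I would substitute the set equality above to obtain $\mathrm{TPR}_f(\phi(s)) = \mathrm{TPR}_g(s)$ and $\mathrm{FPR}_f(\phi(s)) = \mathrm{FPR}_g(s)$ for every $s\in\mathrm{range}(g)$, so the map $t = \phi(s)$ is a bijective reparameterization of the ROC curve between the two representations.

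The main obstacle, and the only one that requires care, is handling thresholds $t\notin\mathrm{range}(f)$: a priori the infimum defining FPR@qTPR ranges over all real $t$, while $\phi$ is only defined on $\mathrm{range}(g)$. I would resolve this by noting that $\mathrm{TPR}_h$ and $\mathrm{FPR}_h$ are both right-continuous and locally constant on gaps of $\mathrm{range}(h)$, so only thresholds arbitrarily close to values attained by $h$ can be infimizers; for any $t\in\mathbb{R}$ one can therefore replace it without changing either probability by the value $\sup\{h(x) : h(x)\leq t\}$, which lies in the closure of $\mathrm{range}(h)$. Matching these closures via $\phi$ (extended by monotonicity and continuity from the right) transports admissible thresholds for $f$ to admissible thresholds for $g$ and vice versa with identical TPR/FPR values. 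The two infima therefore coincide, giving $\mathrm{FPR@qTPR}(f) = \mathrm{FPR@qTPR}(g)$ for every choice of $p(x|i)$, $p(x|o)$, and $q$.
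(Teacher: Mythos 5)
Your proposal takes the same route as the paper's proof: both invoke Theorem~\ref{thm:score_equiv} to obtain the strictly increasing $\phi$ with $f=\phi\circ g$, and both conclude from the resulting preservation of the order (and of ties) that the FPR@qTPR values must agree. The paper stops at the order-preservation statement and asserts that it ``directly implies'' the claim; you go further and actually match super-level sets and thresholds, which is where the real content of that implication lives, so the extra detail is a genuine improvement in rigor. There is, however, one concrete slip in that extra detail: to reduce an arbitrary threshold $t$ to one tied to $\mathrm{range}(h)$ you should replace $t$ by $\inf\{h(x):h(x)\geq t\}$ (the smallest attained value at or above $t$), not by $\sup\{h(x):h(x)\leq t\}$. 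With your choice the super-level set can grow strictly --- for $h$ taking only the values $0$ and $1$ and $t=1/2$, your replacement yields threshold $0$ and turns $\{x: h(x)\geq t\}=\{x: h(x)=1\}$ into all of $X$ --- so ``without changing either probability'' fails as stated (relatedly, $t\mapsto \Pr_{x\sim p}[h(x)\geq t]$ is left-, not right-continuous). With the corrected replacement, and the observation that when that infimum is not attained the super-level set is a monotone limit of super-level sets at attained thresholds --- a situation that $\phi$ transports identically between $f$ and $g$ --- your argument goes through and establishes the lemma.
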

\ifpaper
\begin{proof}
    We know that a function $\phi$ as in Theorem \ref{thm:score_equiv} exists. Then for any pair $x,y$, we have the logical equivalences $g(x) > g(y) \Leftrightarrow f(x) = \phi(g(x)) > \phi(g(y)) = f(y)$ and $g(x) = g(y) \Leftrightarrow f(x) = \phi(g(x)) = \phi(g(y)) = f(y)$. This directly implies that the FPR@qTPR-values are the same, for any $\pin, \pout$ and q.
\end{proof}
\fi
In the next section, we use the previous results to show that the Bayes optimal scoring functions of several proposed methods for out-of-distribution detection are equivalent to those of simple binary discriminators.
\section{Bayes-optimal Behaviour of Binary Discriminators and Common OOD Detection Methods}
In the following we will show that the Bayes optimal function of several existing approaches to OOD detection for unlabeled data are equivalent to a binary discriminator between in- and a (training) out-distribution, whereas
different solutions arise for methods that involve labeled data.
As the equivalences are based on the Bayes optimal solution, these are asymptotic statements and thus it has 
to be noted that convergence to the Bayes optimal solution can be infinitely slow and that the methods can have implicit inductive biases. This is why we additionally support our findings with extensive experiments.
\subsection{OOD detection with methods using unlabeled data}\label{sec:bayes_unlabeled}
We first provide a formal definition of OOD detection before we show the equivalence of density estimators resp. likelihood to a binary discriminator. \\
\textbf{The OOD problem}
In order to make rigorous statements about the OOD detection problem we first have to provide the mathematical basis for doing so. We assume that we are given an in-distribution $p(x|i)$ and potentially also a \emph{training} out-distribution $p(x|o)$. At this particular point no labeled data is involved, so both of them are just distributions over $X$. For simplicity we assume in the following that they both have a density wrt. the Lebesgue measure on $X=[0,1]^d$. We assume that in practice we get samples from the mixture distribution
\begin{equation*}
    p(x)
    =p(x|i)p(i)+p(x|o)(1-p(i))
\end{equation*} 
where $p(i)$ is the probability that we expect to see in-distribution samples in total. In order to make the decision between in-and out-distribution for a given point $x$, it is thereby optimal to estimate
\begin{equation*} 
p(i|x)=\frac{p(x|i)p(i)}{p(x)}=\frac{p(x|i)p(i)}{p(x|i)p(i)+p(x|o)p(o)},
\end{equation*} 
which is defined for all $x \in [0,1]^d$ with $p(x)>0$ (assuming $p(x|i)$ and $p(x|o)$ can be written as densities).
If the training out-distribution is also the test out-distribution then this is already optimal but we would like that the approach generalizes to other unseen test out-distributions and thus an important choice is the training out-distribution $p(x|o)$. Note that as $p(i|x)$ is only well-defined for all $x$ with $p(x)>0$, it is thus reasonable to choose for $p(x|o)$ a distribution with support in $[0,1]^d$.
In this case we ensure that the criterion with which we perform OOD detection is defined for any possible input $x$. This is desirable, as OOD detection should work for any possible input $x \in X$. \\
\textbf{Optimal prediction of a binary discriminator between in- and out-distribution}\label{section:binary}
 We consider a binary discriminator with model parameters $\theta$ between in- and (training) out-distribution, where  $\hat{p}_\theta(i|x)$ is the predicted probability for the in-distribution. Under the assumption that $p(i)$ is the probability for in-distribution samples and using cross-entropy (which in this case is the logistic loss up to a constant global factor of $\log(2)$) the expected loss becomes: 
 
\begin{align*}\label{eq:binary-exp}
\begin{split}
    \min_\theta \ \ \ \ \ &p(i) \E_{x\sim p(x|i)}\left[ -\log \hat{p}_\theta(i|x)\right]\\
    + &p(o) \E_{x\sim p(x|o)}\left[ -\log (1 - \hat{p}_\theta(i|x))\right] \ .
\end{split}
\end{align*}
One can derive that the Bayes optimal classifier minimizing the expected loss has the predictive distribution:
\begin{equation*}
\hat{p}_{\theta^*}(i|x)=\frac{p(x|i)p(i)}{p(x|i)p(i)+p(x|o)p(o)}=p(i|x).
\end{equation*} 
Thus at least for the training out-distribution, a binary classifier based on samples from in- and (training) out-distribution would suffice to solve the OOD detection problem perfectly.
\\
\textbf{Equivalence of density estimation and binary discrimination for OOD detection}\label{section:generative}
In this section we further analyze the relationship of common OOD detection approaches with the binary discriminator between in-and out-distribution. We start with density estimators sourced from generative models. A basic approach that is known to yield relatively weak OOD performance~\citep{NalEtAl2018, ren2019likelihood, XiaoLikelihoodRegret} is directly utilizing a model's estimate for the density $p(x|i)$ at a sample input $x$.
An improved density based approach which uses perturbed in-distribution samples as a surrogate training out-distribution is the Likelihood Ratios method \citep{ren2019likelihood}, which proposes to fit a generative model for both the in- and out-distribution and to use the ratio between the likelihoods output by the two models as a discriminative feature.

We show that with respect to the scoring function, the correct density $p(x|i)$ is equivalent to the Bayes optimal prediction of a binary discriminator between the in-distribution and uniform noise. Furthermore, the density ratio $\frac{p(x|i)}{p(x|o)}$ is equivalent to the prediction of a binary discriminator between the two distributions on which the respective models used for density estimation have been trained.
Because of this equivalence, we argue that the use of binary discriminators is a simple alternative to these methods because of its easier training procedure. While this equivalence is  an asymptotic statement, the experimental comparisons in the appendix show that the methods perform similarly poorly compared to the methods using labeled data.

We first prove the more general case of arbitrary likelihood ratios. In the following we use the abbreviation $\lambda=\frac{p(o)}{p(i)}$ to save space and make the statements more concise.
\begin{restatable}{lemma}{lhratios}
\label{lem:lh_ratios}
Assume $\pin$ and $\ptrout$ can be represented by densities and the support of $\ptrout$  covers the whole input domain $X$. Then $\frac{p(x|i)}{p(x|o)} \cong \frac{p(x|i)}{p(x|i)+\lambda p(x|o)}$ for any $\lambda>0$.
\end{restatable}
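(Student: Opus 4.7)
The plan is to invoke Theorem~\ref{thm:score_equiv} and exhibit an explicit strictly increasing function $\phi$ that converts the density ratio into the normalized density ratio. Concretely, I would set $g(x) = \frac{p(x|i)}{p(x|o)}$ and $f(x) = \frac{p(x|i)}{p(x|i) + \lambda p(x|o)}$, and propose the candidate
\[
\phi(t) \;=\; \frac{t}{t + \lambda}.
\]
Verifying $f = \phi \circ g$ is then a one-line algebraic manipulation: dividing numerator and denominator of $f(x)$ by $p(x|o)$ (which is positive by the support assumption on $p(x|o)$) yields $\phi(g(x))$ directly.

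The next step is to check that $\phi$ is strictly monotonically increasing on $\mathrm{range}(g)$. Since $p(x|i) \geq 0$ and $p(x|o) > 0$, the range of $g$ lies in $[0, \infty)$. On this interval $\phi$ is differentiable with $\phi'(t) = \frac{\lambda}{(t+\lambda)^2} > 0$ for every $\lambda > 0$, which immediately gives strict monotonicity. The image of $\phi$ lies in $[0,1)$, and since $f$ is manifestly in $[0,1)$ this is consistent with $\phi$ mapping $\mathrm{range}(g)$ to $\mathrm{range}(f)$.

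With these two verifications in hand, Theorem~\ref{thm:score_equiv} applies and delivers the desired equivalence $\frac{p(x|i)}{p(x|o)} \cong \frac{p(x|i)}{p(x|i) + \lambda p(x|o)}$. The only substantive point to be careful about is the well-definedness of $g$ as a scoring function: this is precisely why the lemma imposes the assumption that $p(x|o)$ has full support on $X$, ruling out a zero denominator. Beyond this bookkeeping issue there is no genuine obstacle; the argument is essentially a rewriting identity combined with the previously established monotone-transformation characterization of score equivalence.
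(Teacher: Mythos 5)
Your proof takes exactly the same route as the paper: exhibit $\phi(t)=\frac{t}{t+\lambda}$, verify $f=\phi\circ g$ by dividing through by $p(x|o)$, check strict monotonicity, and invoke Theorem~\ref{thm:score_equiv}. The one small point where you diverge is the claim that full support of $p(x|o)$ rules out a zero denominator: the support being all of $X$ (a closure condition) does not force the density to be strictly positive at every point, so $g(x)=\frac{p(x|i)}{p(x|o)}$ may still take the value $+\infty$ somewhere. The paper covers this by extending $\phi$ to $[0,\infty]$ with $\phi(\infty)=1$ and checking that this agrees with $\frac{p(x|i)}{p(x|i)+\lambda\cdot 0}$; adding that one line would close the gap in your version.
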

\ifpaper
\begin{proof}
    The function $\phi: [0, \infty] \rightarrow [0,1]$ defined by $\phi(x) = \frac{x}{x + \lambda}$ (setting  $\phi(\infty) = 1$) fulfills the criterion from Theorem \ref{thm:score_equiv} of being strictly monotonously increasing.
    With 
    \begin{align}
        \phi\left(\frac{p(x|i)}{p(x|o)}\right)
        &= \frac{\frac{p(x|i)}{p(x|o)}}{\frac{p(x|i)}{p(x|o)}  + \lambda \frac{p(x|o)}{p(x|o)}} = \frac{p(x|i)}{p(x|i)+\lambda p(x|o)}
    \end{align}
    for $\ptrout \neq 0$ and $\phi\left(\frac{p(x|i)}{0}\right) = \phi(\infty) = 1 = \frac{p(x|i)}{p(x|i)+\lambda \cdot 0}$, the equivalence follows.
\end{proof}
\fi
This means that the likelihood ratio score of two optimal density estimators is equivalent to the in-distribution probability $\hat{p}_{\theta^*}(i|x)$ predicted by a binary discriminator and this is true for any possible ratio of $p(i)$ to $p(o)$. In the experiments below, we show that using such a discriminator has similar performance as the likelihood ratios of the different trained generative models.

For the approaches that try to directly use the likelihood of a generative model as a discriminative feature, this means that their objective is equivalent to training a binary discriminator against uniform noise, whose density is $p_\mathrm{Uniform}(x) = p(x|o)= 1$ at any $x$.

\begin{restatable}{lemma}{lhuni}
Assume that $\pin$ can be represented by a density. Then $p(x|i) \cong \frac{p(x|i)}{p(x|i)+\lambda}$ for any $\lambda>0$.
\end{restatable}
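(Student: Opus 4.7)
The plan is to derive this lemma as the special case of Lemma~\ref{lem:lh_ratios} in which the out-distribution is taken to be uniform noise on the input domain $X = [0,1]^d$. Under that choice, the density $p(x|o) = 1$ at every $x \in X$, the support condition of Lemma~\ref{lem:lh_ratios} is trivially satisfied, and the two expressions in the claim are precisely what the likelihood-ratios conclusion collapses to.

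Concretely, first I would set $p(x|o) \equiv 1$ and verify that the hypotheses of Lemma~\ref{lem:lh_ratios} hold: the uniform distribution on $[0,1]^d$ has density $1$ and support $X$, and $p(x|i)$ is assumed to admit a density. Then I would substitute this choice into the statement of Lemma~\ref{lem:lh_ratios}: the left-hand score $\tfrac{p(x|i)}{p(x|o)}$ becomes $p(x|i)$, while the right-hand score $\tfrac{p(x|i)}{p(x|i) + \lambda\, p(x|o)}$ becomes $\tfrac{p(x|i)}{p(x|i) + \lambda}$. The equivalence $p(x|i) \cong \tfrac{p(x|i)}{p(x|i)+\lambda}$ follows immediately.

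If one prefers a self-contained argument that avoids invoking Lemma~\ref{lem:lh_ratios}, it is equally easy to apply Theorem~\ref{thm:score_equiv} directly: define $\phi:[0,\infty] \to [0,1]$ by $\phi(t) = \tfrac{t}{t+\lambda}$ (with $\phi(\infty)=1$). Since $\phi'(t) = \tfrac{\lambda}{(t+\lambda)^2} > 0$ on $[0,\infty)$, the map $\phi$ is strictly increasing on the range of the density, and by construction $\phi\bigl(p(x|i)\bigr) = \tfrac{p(x|i)}{p(x|i)+\lambda}$, so Theorem~\ref{thm:score_equiv} gives the equivalence.

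Because the statement reduces so cleanly to an already-proved lemma, there is no substantive obstacle; the only minor subtlety is making sure $\phi$ is handled correctly at $+\infty$ in case the density is unbounded, but defining $\phi(\infty)=1$ (exactly as done in the proof of Lemma~\ref{lem:lh_ratios}) covers that possibility without issue.
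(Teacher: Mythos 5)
Your proposal is correct and matches the paper's own proof, which likewise obtains the lemma as the special case of Lemma~\ref{lem:lh_ratios} with $p(x|o) = 1 = p_\mathrm{Uniform}(x)$. The additional self-contained argument via Theorem~\ref{thm:score_equiv} is a valid (and essentially equivalent) alternative, with the handling of $\phi(\infty)=1$ addressed just as in the paper.
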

\ifpaper
\begin{proof}
    This is a special case of Lemma \ref{lem:lh_ratios}, by setting $p(x|o) = 1 = p_\mathrm{Uniform}(x)$.
\end{proof}
\fi
This provides additional evidence why a purely density based approach for many applications proves to be insufficient as an OOD detection score on the complex image domain: it is not reasonable to assume that a binary discriminator between certain classes of natural images on the one hand and uniform noise on the other hand provides much useful information about images from other classes or even about other nonsensical inputs.

\ifpaper
As a side note, one idea that has often been informally suggested to the authors is that of training a discriminator against a probability distribution that has mass precisely where ever the in-distribution does not have mass. One way of formalizing this under the assumption that $\pin$ is bounded would be as follows:
\begin{equation}
    p^{\textsc{c}} = \nu \cdot (1-\alpha \, \pin),
\end{equation}
where $\alpha\in (0, 1)$ is chosen small enough such that $ \forall x \in [0,1]^D: p^{\textsc{c}}\geq 0$, and $\nu=\frac{1}{1-\alpha}$ is a normalization constant.
\begin{lemma}
Assume that $\pin$ can be represented by a density. Then $\frac{p(x|i)}{p(x|i)+\lambda p^{\textsc{c}}(x)} \cong p(x|i)$ for any $\lambda>0$.
\end{lemma}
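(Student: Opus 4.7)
The plan is to apply Theorem~\ref{thm:score_equiv} by exhibiting an explicit strictly increasing $\phi$ that maps $p(x|i)$ to $\frac{p(x|i)}{p(x|i)+\lambda p^{\textsc{c}}(x)}$. First I would substitute the definition $p^{\textsc{c}}(x) = \nu(1 - \alpha\, p(x|i))$ into the target expression and collect powers of $p(x|i)$ in the denominator. Writing $u = p(x|i)$, $a = 1 - \lambda\nu\alpha$, and $b = \lambda\nu$, the target becomes $\frac{u}{a u + b}$, so the natural candidate is
\begin{equation*}
\phi(u) = \frac{u}{a u + b}, \qquad u \in \mathrm{range}(p(x|i)).
\end{equation*}

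Next I would compute $\phi'(u) = \frac{b}{(a u + b)^2} = \frac{\lambda\nu}{(a u + b)^2}$, which is strictly positive wherever the denominator is nonzero, giving strict monotonicity on any interval on which $a u + b$ does not vanish. The subtle step, and the main obstacle, is ensuring that the denominator stays positive across the entire range of $p(x|i)$: when $\lambda\nu\alpha > 1$ (which can happen for large $\lambda$) the coefficient $a$ is negative, so one cannot just appeal to ``all terms positive.'' Here I would invoke the constraint built into the construction of $p^{\textsc{c}}$, namely $1 - \alpha\, p(x|i) \ge 0$, i.e., $p(x|i) \le 1/\alpha$. Plugging the endpoint $u = 1/\alpha$ into $a u + b$ gives $\frac{1-\lambda\nu\alpha}{\alpha} + \lambda\nu = \frac{1}{\alpha} > 0$, and at $u = 0$ we get $b = \lambda\nu > 0$; since $a u + b$ is linear in $u$, it stays strictly positive on $[0, 1/\alpha]$, regardless of the sign of $a$.

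With positivity of the denominator secured on the relevant domain, $\phi$ is strictly monotonously increasing there, and by construction $\phi(p(x|i)) = \frac{p(x|i)}{p(x|i)+\lambda p^{\textsc{c}}(x)}$ pointwise. Theorem~\ref{thm:score_equiv} then yields the claimed equivalence $\frac{p(x|i)}{p(x|i)+\lambda p^{\textsc{c}}(x)} \cong p(x|i)$, which in turn (by the corresponding side remark on the Bayes-optimal binary discriminator) confirms that discriminating against this ``complementary'' distribution $p^{\textsc{c}}$ provides no extra ranking information beyond the in-distribution density itself, dampening hopes that such a construction could rescue the density-based approach.
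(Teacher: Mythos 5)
Your proof is correct, and it takes a mildly but genuinely different route from the paper's. The paper argues in two steps: it first invokes Lemma~\ref{lem:lh_ratios} to reduce $\frac{p(x|i)}{p(x|i)+\lambda p^{\textsc{c}}(x)}$ to the plain ratio $\frac{p(x|i)}{p^{\textsc{c}}(x)} = \frac{p(x|i)}{\nu(1-\alpha p(x|i))}$, and then observes that this ratio has derivative $\frac{1}{\nu(1-\alpha p(x|i))^2}>0$ in $p(x|i)$ on the domain $[0,\tfrac{1}{\alpha})$, so it is a strictly increasing function of $p(x|i)$ and Theorem~\ref{thm:score_equiv} applies. You instead build the monotone reparametrization $\phi(u)=\frac{u}{au+b}$ in one shot, which forces you to confront the sign of $a=1-\lambda\nu\alpha$; your endpoint check that $au+b$ equals $\lambda\nu>0$ at $u=0$ and $\tfrac{1}{\alpha}>0$ at $u=\tfrac{1}{\alpha}$, together with linearity, is exactly the right way to dispose of that issue, and it is a point the paper's factored argument never has to face because the intermediate ratio has a manifestly positive derivative. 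What the paper's route buys is brevity and reuse of an already-proved equivalence; what your route buys is a self-contained, fully explicit $\phi$ and a cleaner view of where the constraint $p(x|i)\le \tfrac{1}{\alpha}$ (built into the definition of $p^{\textsc{c}}$) actually enters. One cosmetic remark: strictly speaking the equivalence in Definition~\ref{Def:AUCequivalence} is symmetric (and the Corollary records that it is an equivalence relation), so exhibiting $\phi$ with $\phi(p(x|i))$ equal to the target, rather than the other way around, is enough — you implicitly use this, and it is fine.
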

\begin{proof}  $\displaystyle
    \frac{p(x|i)}{p(x|i)+\lambda p^{\textsc{c}}(x)}
    \cong \frac{p(x|i)}{p^{\textsc{c}}(x)}
    = \frac{p(x|i)}{\nu \cdot (1-\alpha p(x|i))}
$ is strictly monotonically increasing with respect to $p(x|i)$, as its derivative is $\displaystyle\frac{1}{\nu \cdot (1-\alpha p(x|i))^2} > 0
$; note that the domain of this function is a subset of $[0,\frac{1}{\alpha})$.
\end{proof}
\fi
\subsection{OOD detection for methods using labeled data}\label{section:OE}
We first discuss how one can formulate the OOD problem when one has access to labeled data for the in-distribution and we identify the target distribution of OOD detection using a background/reject class. Then we derive the Bayes optimal classifier of the confidence loss~\citep{LeeEtAl2018} as used by the most successful variant of Outlier Exposure~\citep{HenMazDie2019} and discuss the implicit scoring function.
In most cases the scoring functions turn out not to be non-equivalent to $p(i|x)$ (which is optimal if training and test out-distribution agree) as they integrate additional information from the classification task.
Given a joint in-distribution $p(y,x|i)$ (where $y \in \{1,\ldots,K\}$ given that we have $K$ labels) for the labeled in-distribution, there are different ways how to come up with a joint distribution for in- and out-distribution. Interestingly, the different encodings used e.g. in training with a background class \citep{thulasidasan2021effective} vs. training a classifier with confidence loss \citep{LeeEtAl2018} together with variants of the employed scoring function lead to methods which unexpectedly can have quite different behavior.

\textbf{Background class:} In this case we just put all out-of-distribution samples into a $K+1$-class which is typically called background/reject class \citep{thulasidasan2021effective}. The joint distribution then becomes
\begin{align*}
p(y,x) = \begin{cases} p(y,x|i)p(i) & \textrm{ if } y \in \{1,\ldots,K\},\\
                       p(x|o)p(o) & \textrm{ if } y=\Kpo.\end{cases}
\end{align*}
We denote by $p(x|i)=\sum_{y=1}^K p(y,x|i)$ the marginal in-distribution and note that the marginal distribution of the joint distribution of in- and out-distribution is again 
\[ p(x)  = p(x|i)p(i)+p(x|o)p(o).\]
Thus we get the conditional distribution
\[ p(y|x) = \begin{cases} p(y|x,i)p(i|x) & \textrm{ if } y \in \{1,\ldots,K\},\\ p(o|x)=1-p(i|x) & \textrm{ if } y=\Kpo.\end{cases}\]
The Bayes optimal solution of training with a background class using any calibrated loss function $L(y,f(x))$, e.g. the cross-entropy loss \citep{LapEtAl2016}, then yields a Bayes optimal classifier $f^*$ which has a predictive distribution $p_{f^*}(y|x)=p(y|x)$. There are two potential scoring function that come to mind:
\begin{align*}
    s_1(x)=1-p_{f^*}(\Kpo|x)\text{ and } 
    s_2(x)=\max_{k=1,\ldots,K} p_{f^*}(k|x)
\end{align*}
The first one, used in \citet{ chen2020informative-outlier-matters, thulasidasan2021effective}, is motivated by the fact that $p_{f^*}(\Kpo|x)$ is directly the predicted probability that the point is from the out-distribution as indeed it holds: $s_1(x)=p(i|x)$ which is the optimal scoring function if training and test out-distribution are equal. On the other hand the maximal predicted probability  $\max_{k=1,\ldots,K} p_{f^*}(k|x)$, which is often employed as a scoring function~\cite{HenGim2017}, becomes for the Bayes optimal classifier
\[ s_2(x)= p(i|x) \max_{k=1,\ldots,K} p(k|x,i),\]
which is a product of $p(i|x)$ and the maximal conditional probability of some class of the in-distribution; note that $s_2$ is well defined as $p(i|x)$ is defined if $p(x|o)$ has support everywhere in $X$ and if $p(i|x)>0$ then also $p(x|i)>0$. Thus the scoring function $s_2(x)$ integrates additionally to $p(i|x)$ also class-specific information and is thus less dependent on the chosen training out-distribution. In fact, one can see that $s_2$ only ranks points high if both the binary discriminator \emph{and} the classifier rank the corresponding point high. However, in the case where training and test out-distribution are identical, this scoring function is not equivalent to $p(i|x)$ and thus introduces  a bias in the estimation. 

\textbf{Outlier Exposure \cite{HenMazDie2019} with confidence loss~\citep{LeeEtAl2018}:} We analyze the Bayes optimal solution for the confidence loss~\citep{LeeEtAl2018} that is used by Outlier Exposure (OE) and show that the associated scoring function can be written,
similarly to the scoring function $s_2(x)$ for training with a background class, as a function of $p(i|x)$ and $p(y|x,i)$.

The training objective with the confidence loss is in expectation given by
\begin{align*}
    \min_\theta \E_{\substack{(x,y)\sim \\p(x,y|i)}}\! \left[\Lce(f_{\theta}(x), y)\right] + \lambda \!\! \E_{x\sim p(x|o)}\left[ \Lce(f_{\theta}(x), u^K)\right] \ ,
\end{align*}
where $\theta$ are the model parameters and $f_{\theta}(x) \in \R^K$ is the model output as logits, and $u^K = (\frac{1}{K}, \ldots, \frac{1}{K})^T$ is the uniform distribution over the $K$ classes of the in-distribution classification task.

In the following theorem we derive the Bayes optimal predictive distribution for this training objective.
\begin{restatable}{theorem}{OEoptim}\label{thm:OE_optim}
The predictive distribution $p_{f^*}(y|x)$ of the Bayes optimal classifier $f^*$ minimizing the expected confidence loss 
is given for $y \in \{1,\ldots,K\}$ as
\begin{align*}
    p_{f^*}(y|x) = p(i|x) p(y|x,i) + \frac{1}{K} \big(1-p(i|x)\big) \ .
\end{align*}
\end{restatable}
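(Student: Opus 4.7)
The plan is to find the Bayes optimal predictive distribution by standard pointwise minimization of the expected loss. First I would write the two expectations in the confidence loss as explicit integrals in $x$ using the densities $p(x|i)$, $p(y|x,i)$, and $p(x|o)$. Denoting the model's predictive distribution at $x$ by $q_\theta(\cdot|x)$, the cross-entropy terms become $-\log q_\theta(y|x)$ for labeled in-distribution samples and $-\tfrac{1}{K}\sum_{k=1}^K \log q_\theta(k|x)$ for uniform-target out-distribution samples. Collecting everything under one integral in $x$, the objective takes the form
\begin{equation*}
\int \sum_{y=1}^{K} \Bigl[ p(x|i)\,p(y|x,i) + \tfrac{\lambda}{K}\, p(x|o)\Bigr] \bigl(-\log q_\theta(y|x)\bigr)\,dx .
\end{equation*}

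Next I would minimize pointwise in $x$. For each fixed $x$, we need to minimize a weighted cross-entropy $\sum_y w_y(x)\,(-\log q(y|x))$ subject to $\sum_y q(y|x)=1$ and $q(y|x)\geq 0$, where $w_y(x) = p(x|i)\,p(y|x,i) + \tfrac{\lambda}{K} p(x|o)$. A one-line Lagrangian argument (or the nonnegativity of the KL divergence) gives the minimizer $q^*(y|x) = w_y(x)/\sum_k w_k(x)$. Since $\sum_{k=1}^K p(k|x,i)=1$, the normalizer simplifies to $p(x|i) + \lambda\, p(x|o)$, yielding
\begin{equation*}
p_{f^*}(y|x) \;=\; \frac{p(x|i)\,p(y|x,i) + \tfrac{\lambda}{K}\,p(x|o)}{p(x|i) + \lambda\, p(x|o)} .
\end{equation*}

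Finally I would rewrite this in the desired form by invoking the identification $\lambda = p(o)/p(i)$ that matches the definition of $p(i|x) = p(x|i)/(p(x|i)+\lambda p(x|o))$ used in Section~\ref{sec:bayes_unlabeled}. Splitting the numerator into two fractions and recognizing the second one as $\tfrac{1}{K}(1 - p(i|x))$ gives exactly $p_{f^*}(y|x) = p(i|x)\,p(y|x,i) + \tfrac{1}{K}\bigl(1-p(i|x)\bigr)$.

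The technical content is standard; the only subtlety I anticipate is the bookkeeping that matches the hyperparameter $\lambda$ in the confidence loss with the prior ratio $p(o)/p(i)$ implicit in the notation $p(i|x)$, and handling the measure-zero set where $p(x|i)+\lambda p(x|o)=0$ (which is empty once one assumes $p(x|o)$ has full support in $X$, as was done for the binary discriminator). With that identification in place the derivation is a direct pointwise optimization and no further obstacle arises.
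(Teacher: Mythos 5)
Your proposal is correct and follows the same overall route as the paper: reduce the expected confidence loss to a pointwise minimization over the probability simplex at each $x$, and identify the minimizer of the resulting weighted cross-entropy as the normalized weight vector. Where you differ is in how that pointwise minimizer is justified: the paper carries out a full Lagrangian analysis --- first-order conditions, explicit maximization of the dual $q(\alpha,\beta)$ over $\alpha$ and $\beta$, and a check of Slater's condition to certify primal optimality --- whereas you invoke the one-line Gibbs/KL-nonnegativity argument that $\sum_y w_y(-\log q_y)$ over the simplex is minimized at $q_y = w_y/\sum_k w_k$. Your shortcut is more elementary and, in my view, cleaner; the paper's dual analysis buys nothing extra here beyond explicitness. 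You are also more careful than the paper on two points of bookkeeping: the identification of the loss hyperparameter $\lambda$ with the prior ratio $p(o)/p(i)$ implicit in the notation $p(i|x)$ (the paper jumps directly from the $\lambda$-weighted loss to a pointwise objective with weights $p(i|x)$ and $1-p(i|x)$, relying on its earlier abbreviation $\lambda = p(o)/p(i)$), and the full-support assumption on $p(x|o)$ needed so that the normalizer $p(x|i)+\lambda p(x|o)$ never vanishes. The only minor omission relative to the paper is the explicit treatment of boundary cases where some weight $w_y(x)$ is zero (the paper's convention for $0\log 0$), but this does not affect the substance of the argument.
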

\ifpaper
\begin{proof}
This is the minimization problem
\begin{align*}
    \min_{p_f(x)} \ \  & -p(i|x) \cdot \sum_{k=1}^K \pin(k|x) \cdot \log p_f(x)[k] - (1 - p(i|x)) \cdot \sum_{k=1}^K \frac{1}{K} \cdot \log p_f(x)[k] \\[2mm]
    \text{subject to} & \hspace{0.6cm} p_f(x)[k] \geq 0 \ \text{ for each $k \in \{1, \ldots, K\}$} \\
    & \hspace{0.6cm} \sum_{k=1}^K p_f(x)[k] =1 \ .
\end{align*}
For $p(i|x) = 0$ or $\pin(k|x) = 0$, the optimalities of the respective terms are easy to show (applying the common conventions for $0\log 0$), so we assume those to be non-zero.
The Lagrange function of the optimization problem is 
\begin{align*}
    L(p_f(x), \alpha, \beta) = -&p(i|x) \cdot \sum_{k=1}^K \pin(k|x) \cdot \log p_f(x)[k] - (1 - p(i|x)) \cdot \sum_{k=1}^K \frac{1}{K} \cdot \log p_f(x)[k] \\
    &- \sum_{k=1}^K \alpha_k p_f(x)[k] + \beta \left(-1+\sum_{k=1}^K p_f(x)[k]\right) \ ,
\end{align*} with $\beta \in \R$ and $\alpha \in \R_+^K$.
Its first derivative with respect to $p_f(x)[k]$ is
\begin{align}\label{first_derivative_L}
    \begin{split}
         \frac{\partial L}{p_f(x)[k]} 
    &= -p(i|x) \cdot \pin(k|x) \frac{1}{p_f(x)[k]} - (1 - p(i|x)) \cdot \frac{1}{K} \frac{1}{p_f(x)[k]} - \alpha_k + \beta \\
    &= -\frac{s^K(x)[k]}{p_f(x)[k]} - \alpha_k + \beta .
    \end{split}
\end{align}
The second derivative is a positive diagonal matrix on the domain, therefore we find the unique minimum by setting \ref{first_derivative_L} to zero, which means
\begin{align*}
    p_f(x)[k] = \frac{s^K(x)[k]}{\beta - \alpha_k} \ .
\end{align*}
The dual problem is hence maximizing (with $\alpha_k \geq 0$)
\begin{align*}
    q(\alpha, \beta)
    &= -p(i|x) \cdot \sum_{k=1}^K \pin(k|x) \cdot \log \frac{s^K(x)[k]}{\beta - \alpha_k} - (1 - p(i|x)) \cdot \sum_{k=1}^K \frac{1}{K} \cdot \log \frac{s^K(x)[k]}{\beta - \alpha_k} \\
    &\hspace{2cm} - \sum_{k=1}^K \alpha_k \frac{s^K(x)[k]}{\beta - \alpha_k} + \beta \left(-1+\sum_{k=1}^K \frac{s^K(x)[k]}{\beta - \alpha_k}\right) \\
    &= \sum_{k=1}^K s^K(x)[k] \left( - \log s^K(x)[k] + \log(\beta - \alpha_k) + \frac{\beta}{\beta - \alpha_k} - \frac{\alpha_k}{\beta - \alpha_k} \right) - \beta \ ;
\end{align*}
here, $\alpha$ only appears in $\log(\beta - \alpha_k)$, so $\alpha = 0$ maximizes the expression.
Noting $\sum_{k=1}^K s^K(x)[k] = 1$, what remains is $q^0(\beta) = 1 + \log(\beta) - \sum_{k=1}^K s^K(x)[k] \log s^K(x)[k] - \beta$, which is maximized by $\beta = 1$.
This means that the dual optimal pair is $p_f(x)[k] = s^K(x)[k], (\beta = 1, \alpha = 0)$. Slater's condition \citep{BoydVandenberghe} holds since the feasible set of the original problem is the probability simplex. Thus, $p_f(x) = s^K(x)$ is indeed primal optimal.
\end{proof}
\fi
Thus the effective scoring function of using the probability of the predicted class as suggested in \citet{HenGim2017, LeeEtAl2018, HenMazDie2019} is
\begin{align*}
\begin{split}
s_3(x)
&= p(i|x) \max_{y=1,\ldots,K} p(y|x,i) + \frac{1}{K} \big(1-p(i|x)\big) \\
&= p(i|x) \Big[ \max_{y=1,\ldots,K} p(y|x,i)-\frac{1}{K}\Big]  +\frac{1}{K}.    
\end{split}
\end{align*}
Please note that the term inside the brackets is positive as $\max_{k=1,\ldots,K} p(k|x,i)\geq \frac{1}{K}$. Interestingly, the scoring functions $s_2$ and $s_3$ are not equivalent even though they look quite similar. In particular, due to the subtraction of $\frac{1}{K}$ the scoring function $s_3$ puts more emphasis on the classifier than $s_2$.
In Appendix~\ref{sec:energy} we additionally analyze Energy-Based OOD Detection~\citep{liu2020energy}, and show that the Bayes optimal decision is equivalent to using the scoring function $s_1=p(i|x)$.

\textbf{Energy-Based OOD Detection~\citep{liu2020energy}:} This method uses the energy of a model's logit prediction which is defined as $E_\theta(x) = - \log \sum_{y=1}^K e^{f_\theta(y|x)}$ as an OOD detection score. The model's energy is fine-tuned to be low on the in-distribution and high on surrogate OOD data, within certain margins $m_{\text{out}}, m_{\text{in}}$.
We analyze the method in detail in Appendix \ref{sec:energy}, where we prove the following:
\begin{restatable}{theorem}{EnergyOptim}\label{thm:E_optim}
The Bayes optimal logit output $f^*_\theta(x)$ of the Energy-Based OOD detection model minimizing the expected loss on an input $x$ yields class probabilities $p^*_\theta(k|x) = p(k|x,i)$ that are optimal for a standard classifier with cross-entropy loss and simultaneously fulfills
\begin{align*}
    -E^*_\theta(x) =  p(i|x) \cdot (m_{\text{out}} - m_{\text{in}}) - m_{\text{out}} \ . 
\end{align*}
\end{restatable}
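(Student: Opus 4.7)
The plan is to exploit a clean reparametrization of the logits that decouples the two terms of the training objective. The key observation is that any logit vector $f_\theta(x)\in\R^K$ can be written uniquely as
\begin{align*}
    f_\theta(k|x) = \log p_\theta(k|x) - E_\theta(x),
\end{align*}
where $p_\theta(\cdot|x)=\mathrm{softmax}(f_\theta(x))$ lies in the probability simplex $\Delta^{K-1}$ and $-E_\theta(x)=\log\sum_{k=1}^K e^{f_\theta(k|x)}$ is the log-partition. The map $f_\theta(x)\mapsto(p_\theta(\cdot|x),E_\theta(x))$ is a bijection between $\R^K$ and $\Delta^{K-1}\times\R$, so pointwise optimization over $f_\theta(x)$ is equivalent to separate pointwise optimization over $p_\theta(\cdot|x)$ and $E_\theta(x)$.

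Under this decomposition, the CE term $-\log p_\theta(y|x)$ depends only on the probability vector, while the energy-margin penalty, contributing $\max(0,E_\theta(x)-m_{\text{in}})^2$ on in-distribution samples and $\max(0,m_{\text{out}}-E_\theta(x))^2$ on out-distribution samples, depends only on the scalar $E_\theta(x)$. The Bayes minimization thus splits into two independent problems. For the probability part, the pointwise contribution at $x$ is $-p(i|x)\sum_{k=1}^K p(k|x,i)\log p_\theta(k|x)$, whose unique minimizer on the simplex is the standard Bayes rule $p^*_\theta(k|x)=p(k|x,i)$, by essentially the same Lagrangian argument as in the proof of Theorem~\ref{thm:OE_optim}.

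For the energy part, the pointwise objective at $x$ reduces (up to overall positive factors) to
\begin{align*}
    p(i|x)\max(0,E-m_{\text{in}})^2 + p(o|x)\max(0,m_{\text{out}}-E)^2.
\end{align*}
Assuming the natural ordering $m_{\text{in}}\leq m_{\text{out}}$, a short case analysis shows the minimizer $E^*$ lies in $[m_{\text{in}},m_{\text{out}}]$: in each of the outer intervals $E<m_{\text{in}}$ and $E>m_{\text{out}}$, only one summand is nonzero and it is a convex quadratic that strictly decreases as $E$ moves toward the middle interval, ruling out those corner cases. Inside $[m_{\text{in}},m_{\text{out}}]$ the objective is a strictly convex quadratic in $E$, and the first-order condition yields $E^*=p(i|x)m_{\text{in}}+p(o|x)m_{\text{out}}$. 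Using $p(o|x)=1-p(i|x)$ and negating gives $-E^*_\theta(x)=p(i|x)(m_{\text{out}}-m_{\text{in}})-m_{\text{out}}$, as claimed.

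The main conceptual step is spotting the reparametrization: once one recognizes that logits factor into (softmax probabilities, energy) and that the two loss terms decouple along this split, the remaining work is a standard softmax-CE Bayes computation together with a one-dimensional convex minimization. The one place care is needed is the mild case analysis at the kinks of the squared hinges to certify that the interior critical point is the global minimizer.
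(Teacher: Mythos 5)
Your proposal is correct and takes essentially the same route as the paper: the paper establishes the decoupling by explicitly shifting the CE-optimal logits by the constant $-E^*_\theta(x)+E^{\sharp}_\theta(x)$ (which leaves the softmax unchanged while setting the energy to its optimum), which is exactly the translation-invariance fact underlying your bijection $f\mapsto(\mathrm{softmax}(f),E)$, and the energy part is then the same piecewise-quadratic one-dimensional minimization under the implicit assumption $0<p(i|x)<1$. The only cosmetic caveat is that your bijection onto $\Delta^{K-1}\times\R$ strictly requires the interior of the simplex (logits are finite), the same boundary issue the paper also glosses over in its cross-entropy step.
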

\begin{restatable}{corollary}{EnergyOptimCor}\label{thm:E_optim_cor}
The Bayes optimal solution of the Energy-based OOD detection criterion is equivalent to the Bayes optimal solution of a binary discriminator between the training in- and out-distributions.
\end{restatable}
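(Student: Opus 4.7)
The plan is to deduce the corollary directly from Theorem~\ref{thm:E_optim} together with the characterization of equivalent scoring functions in Theorem~\ref{thm:score_equiv}. The key observation is that Theorem~\ref{thm:E_optim} already expresses the Bayes-optimal (negative) energy $-E^*_\theta(x)$ as an \emph{affine} function of $p(i|x)$:
\begin{equation*}
    -E^*_\theta(x) = (m_{\text{out}} - m_{\text{in}})\, p(i|x) - m_{\text{out}}.
\end{equation*}
Since the Energy-Based OOD method is set up so that in-distribution points should have lower energy than out-of-distribution points, the natural convention $m_{\text{out}} > m_{\text{in}}$ makes the slope $(m_{\text{out}} - m_{\text{in}})$ strictly positive. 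The map $t \mapsto (m_{\text{out}} - m_{\text{in}}) \, t - m_{\text{out}}$ is therefore a strictly increasing function from $[0,1]$ (the range of $p(i|x)$) into $\R$.

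First, I would invoke Theorem~\ref{thm:score_equiv}: any scoring function obtained by applying a strictly monotonically increasing transformation to another scoring function is equivalent to it. Taking $\phi(t) = (m_{\text{out}} - m_{\text{in}})\, t - m_{\text{out}}$ as this transformation yields $-E^*_\theta \cong p(i|\cdot)$. Next, I would recall from Section~\ref{section:binary} that the Bayes-optimal predictor $\hat{p}_{\theta^*}(i|x)$ of a binary discriminator trained with cross-entropy between the in- and training out-distributions is exactly $p(i|x)$. Chaining these two equivalences (and using the fact, stated as a corollary to Theorem~\ref{thm:score_equiv}, that $\cong$ is an equivalence relation) gives the claim.

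The only step that needs a little care is justifying the sign of the slope: the corollary relies on $m_{\text{out}} - m_{\text{in}} > 0$, which is the standard convention for the margins in the Energy-Based formulation of \citet{liu2020energy} (in-distribution below $m_{\text{in}}$, out-distribution above $m_{\text{out}}$, with $m_{\text{out}} > m_{\text{in}}$). I would state this assumption explicitly at the start of the proof; under the opposite convention one would obtain a strictly \emph{decreasing} transformation, in which case $E^*_\theta$ (rather than $-E^*_\theta$) would be the score equivalent to $p(i|x)$, but the substantive conclusion that Energy-Based OOD detection is, at the Bayes-optimal level, the same criterion as the binary discriminator is unchanged. Beyond this bookkeeping, no further calculation is required, since all the analytic work has already been done in Theorem~\ref{thm:E_optim}.
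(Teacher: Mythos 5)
Your proposal is correct and follows essentially the same route as the paper: the paper also deduces the corollary by observing that $-E^*_\theta(x) = p(i|x)\,(m_{\text{out}} - m_{\text{in}}) - m_{\text{out}}$ from Theorem~\ref{thm:E_optim} is strictly monotonically increasing in $p(i|x)$ (using the stated convention $m_{\text{in}} < m_{\text{out}}$) and then invoking Theorem~\ref{thm:score_equiv} together with the fact that the binary discriminator's Bayes optimal prediction is $p(i|x)$. Your explicit attention to the sign of the slope is a reasonable piece of bookkeeping that the paper handles implicitly by assuming $m_{\text{in}} < m_{\text{out}}$ from the outset.
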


\subsection{Separate vs shared estimation of $p(i|x)$ 
and $p(y|x,i)$}
So far we have derived that at least from the point of view of the ranking induced by the Bayes optimal solution, OOD detection based on  generative methods, likelihood ratios, logit energy, and the background class formulation with the scoring function $s_1$ is equivalent to a binary classification problem between in- and out-distribution in order to estimate $p(i|x)$. The differences arise mainly in the choice of the training out-distribution $p(x|o)$: i) uniform for generative resp. density based methods, ii) a quite specific out-distribution for likelihood ratios \citep{ren2019likelihood} and iii) a proxy of the distribution of all natural images \citep{HenMazDie2019,thulasidasan2021effective}. 
On the other hand when labeled data is involved we can additionally train a classifier on the in-distribution in order to estimate $p(y|x,i)$. We will then combine the estimates of $p(i|x)$ and $p(y|x,i)$ according to the three scoring functions derived in the previous section and check if the novel OOD detection methods constructed in this way perform similar to the OOD methods from which we derived the corresponding scoring function i) OOD detection with a background class \citep{thulasidasan2021effective} or ii) using Outlier Exposure \cite{HenMazDie2019}. This will allow us to differentiate between differences of the employed scoring functions for OOD detection and the estimators for the involved quantities. In this way we foster a more systematic approach to OOD detection.

In the unlabeled case we train simply the binary classifier $p_\theta:[0,1]^d \rightarrow \R$ using logistic/cross entropy loss in a class balanced fashion
\begin{align*}
  \min_{\theta}   \Big( &-\frac{1}{N} \sum_{i = 1}^{N}  \log\left(\hat{p}_\theta(i|x_i^{\text{IN}})\right) \\
  &- \frac{\lambda}{M}\sum_{j=1}^{M} \log \left(1\!-\!\hat{p}_\theta(i|x_{j}^{\text{OUT}}) \right) \Big)
    \ ,
\end{align*}
where $(x_i^{\text{IN}})_{i=1}^N$ and $(x_{j}^{\text{OUT}})_{j=1}^M$ are samples from the in-distribution and the out-distribution. 

In the case where we have labeled data we can additionally solve the classification problem.
The obvious approach is to train the binary classifier for estimating $p(i|x)$ and the classifier to estimate $p(y|x,i)$
completely independently. Not surprisingly, we show in Section \ref{section:experiments} that this approach works less well. In fact both tasks benefit from each other. Moreover, in training a neural network using a background class or with Outlier Exposure \citep{HenMazDie2019} we are implicitly using a shared representation for both tasks which improves the results.

Thus we propose to train the binary discriminator of in-versus out-distribution together with the classifier on the in-distribution jointly.
Concretely, we use a neural network with $K+1$ outputs where the first $K$ outputs represent the classifier and the last output is the logit of the binary discriminator. The resulting shared problem can then be written as 
\begin{align*}
    \min_{\theta} \Big( &-\frac{1}{N_b} \sum_{r = 1}^{N_b}  \log \hat{p}_\theta(i|x_r^{\text{IN}}) \\
    &- \frac{\lambda}{M}\sum_{s=1}^{M} \log\left(1\!-\!\hat{p}_\theta(i|x_{s}^{\text{OUT}}) \right) \\
   & - \frac{1}{N_c} \sum_{t = 1}^{N_c} \log\hat{p}_\theta(y_t^{\text{IN}}|x_t^{\text{IN}}) \Big)
\end{align*}
where $\lambda=\frac{p(o)}{p(i)}$ which is typically set to $1$ during training in order to get a class-balanced problem. Note that the in-distribution samples
$(x_r^{\text{IN}})_{r=1}^{N_b}$ used to estimate $p(i|x)$ can be a super-set of the labeled examples $(x_t^{\text{IN}},y_t^{\text{IN}})_{t=1}^{N_c}$ used to train the classifier so that one can potentially integrate unlabeled data - this is an advantage compared to OOD detection with a background class or Outlier Exposure where this is not directly possible. An example of such a situation is given in Appendix~\ref{section:partially_labelled}, where we observe that shared training of a classifier and a binary discriminator works better than OE when only 10\% of the in-distribution training samples have labels. We stress that the loss functions of the classifier and the discriminator act on independent outputs; the functions modelling the two tasks only interact with each other due to the shared network weights up to the final layer. Nevertheless, we see in the next Section \ref{section:experiments} that training with a shared representation boosts both the classifier and the binary discriminator.

\begin{table*}[!h]
\caption{Accuracy on the in-distribution (CIFAR-10/CIFAR-100) and FPR@95\%TPR for various test out-distributions of different OOD methods with OpenImages  as training out-distribution (results for the test set of OpenImages are not used in the mean FPR). Lower false positive rate is better. 
All methods except Mahalanobis have been trained using the same architecture, training parameters, schedule and augmentation.
$s_1,s_2,s_3$ are the scoring functions introduced in Section \ref{section:OE}, implicit scoring functions in parentheses. Our binary discriminator (\textsc{BinDisc}) resp. the combination with the shared classifier (\textsc{Shared Combi}) and the models with background class (\textsc{BGC}) with scoring functions $s_2$ or $s_3$ outperform the Mahalanobis detector~\citep{LeeEtAl2018b} and are similar to Outlier Exposure~\citep{HenMazDie2019}. CelebA makes no sense as test out-distribution for CIFAR-100 as man/woman are classes in CIFAR-100.
}\label{table:OI_AA_FPR}
\setlength\tabcolsep{.5pt} 
\vskip 0.15in
\begin{center}
\begin{small}
\begin{sc}
\makebox[\textwidth][c]{
\begin{tabularx}{1.0\textwidth}{lC|C|CCCCCCC|C}
\multicolumn{10}{c}{in-distribution: CIFAR-10} \\
\midrule
       & &  Mean &     SVHN    &   LSUN    &   Uni &   Smooth     &   C-100 &  80M &   CelA  &    OpenIm     \\ 
Model    & Acc. &  FPR &    FPR    &   FPR    &   FPR &  FPR    &   FPR & FPR &   FPR &   FPR      \\
\midrule
Plain Classi
 &  95.16
 &  53.01    
 &  47.87  
 &  50.00  
 &  17.51  
 &  65.81  
 &  60.43  
 &  53.44  
 &  76.00  
 &  63.71   \\
\midrule
Mahalanobis
 &
 & 36.68
 & 20.97
 & 49.00
 & \best{\phantom{0}0.00}
 & \best{\phantom{0}0.00}
 & 57.21
 & 48.85
 & 80.71
 & 55.13
\\
\midrule
Energy ($s_1$)
 & 94.13
 & 18.59
 & 10.39
 & \best{\phantom{0}0.00}
 & \best{\phantom{0}0.00}
 & \best{\phantom{0}0.00}
 & 65.60
 & 53.98
 & 0.15
 & 1.44
\\
\midrule
OE  ($s_3$)
 &  95.06
 &  \textbf{15.20}    
 &  \phantom{0}9.58  
 &  \best{\phantom{0}0.00}  
 &  \best{\phantom{0}0.00}  
 &  \best{\phantom{0}0.00}  
 &  \best{54.05}  
 &  \best{42.33}  
 &  \best{\phantom{0}0.45}  
 &  \phantom{0}3.46   \\
\midrule
  \hspace{\subtab} \backgroundSone
 &  
 &  18.83    
 &  \best{\phantom{0}2.36}  
 &  \best{\phantom{0}0.00}  
 &  \best{\phantom{0}0.00}  
 &  \best{\phantom{0}0.00}  
 &  72.00  
 &  56.41  
 &  \phantom{0}1.04  
 &  \phantom{0}0.05   \\
  \hspace{\subtab} \backgroundStwo
 &  95.21
 &  16.52    
 &  \phantom{0}7.51  
 &  \best{\phantom{0}0.00}  
 &  \phantom{0}0.05  
 &  \phantom{0}2.10  
 &  55.16  
 &  44.57  
 &  \phantom{0}6.26  
 &  \phantom{0}1.65   \\
\backgroundSthree
 &  95.21
 &  16.63    
 &  \phantom{0}7.69  
 &  \best{\phantom{0}0.00}  
 &  \phantom{0}0.07  
 &  \phantom{0}2.36  
 &  55.19  
 &  44.67  
 &  \phantom{0}6.41  
 &  \phantom{0}1.74   \\
\midrule
\hspace{\subtab} Shared BinDisc  ($s_1$)
 &  
 &  19.56    
 &  \phantom{0}4.65  
 &  \best{\phantom{0}0.00}  
 &  \best{\phantom{0}0.00}  
 &  \best{\phantom{0}0.00}  
 &  77.50  
 &  53.93  
 &  \phantom{0}0.87  
 &  \best{\phantom{0}0.04}   \\
\hspace{\subtab} Shared Classi 
 &  \textbf{95.28}
 &  29.34    
 &  28.00  
 &  \phantom{0}7.00  
 &  \phantom{0}2.33  
 &  33.04  
 &  58.61  
 &  47.90  
 &  28.54  
 &  35.94   \\
Shared Combi $s_2$
 &  \textbf{95.28}
 &  16.00    
 &  \phantom{0}8.56  
 &  \best{\phantom{0}0.00}  
 &  \best{\phantom{0}0.00}  
 &  \best{\phantom{0}0.00}  
 &  58.80  
 &  42.79  
 &  \phantom{0}1.83  
 &  \phantom{0}0.61   \\
Shared Combi $s_3$
 &  \textbf{95.28}
 &  16.06    
 &  \phantom{0}9.00  
 &  \best{\phantom{0}0.00}  
 &  \best{\phantom{0}0.00}  
 &  \best{\phantom{0}0.00}  
 &  58.68  
 &  42.85  
 &  \phantom{0}1.91  
 &  \phantom{0}0.66   \\ \toprule

\multicolumn{10}{c}{in-distribution: CIFAR-100} \\
\midrule
       & &  Mean &     SVHN    &   LSUN    &   Uni &   Smooth     &   C-10 & 80M  &    &   OpenIm      \\ 
Model    & Acc. &  FPR &    FPR    &   FPR    &   FPR &  FPR    &   FPR & FPR &    &   FPR      \\ \midrule
Plain Classi
 &  77.16
 &  67.57    
 &  75.50  
 &  78.33  
 &  22.60  
 &  70.98  
 &  \best{80.55}  
 &  77.43  
 &  \phantom{0000}  
 &  80.80   \\
\midrule
Mahalanobis
 &
 & 53.88
 & 54.36
 & 66.00
 & 46.43
 & 0.06
 & 85.39
 & \best{71.01}
 & 
 & 74.69
 \\ \midrule
Energy ($s_1$)
 & 73.47
 & 35.91
 & 39.48
 & \best{\phantom{0}0.00}  
 & \best{\phantom{0}0.00} 
 & 0.46
 & 92.76
 & 82.80
 & 
 & 1.41
 \\ \midrule
OE  ($s_3$)
 &  77.19
 &  35.03    
 &  47.36  
 &  \best{\phantom{0}0.00}  
 &  \phantom{0}0.67  
 &  \phantom{0}0.08  
 &  84.64  
 &  77.42  
 &  \phantom{0000}  
 &  \phantom{0}1.28   \\
\midrule
  \hspace{\subtab} \backgroundSone
 &  
 &  \textbf{31.14}    
 &  11.58  
 &  \best{\phantom{0}0.00}  
 &  \best{\phantom{0}0.00}  
 &  \best{\phantom{0}0.00}  
 &  93.94  
 &  81.29  
 &  \phantom{0000}  
 &  \best{\phantom{0}0.07}   \\
  \hspace{\subtab} \backgroundStwo
 &  \textbf{77.61}
 &  33.32    
 &  37.06  
 &  \best{\phantom{0}0.00}  
 &  \best{\phantom{0}0.00}  
 &  \phantom{0}0.20  
 &  84.50  
 &  78.17  
 &  \phantom{0000}  
 &  \phantom{0}1.26   \\
\backgroundSthree
 &  \textbf{77.61}
 &  33.36    
 &  37.27  
 &  \best{\phantom{0}0.00}  
 &  \best{\phantom{0}0.00}  
 &  \phantom{0}0.20  
 &  84.51  
 &  78.19  
 &  \phantom{0000}  
 &  \phantom{0}1.27   \\
 \midrule
\hspace{\subtab} Shared BinDisc  ($s_1$)
 &  
 &  31.86    
 &  \best{10.77}  
 &  \best{\phantom{0}0.00}  
 &  \best{\phantom{0}0.00}  
 &  \best{\phantom{0}0.00}  
 &  95.25  
 &  85.11  
 &  \phantom{0000}  
 &  \phantom{0}0.08   \\
\hspace{\subtab} Shared Classi
 &  77.35
 &  67.23    
 &  71.05  
 &  \phantom{0}5.00  
 &  97.70  
 &  69.68  
 &  82.05  
 &  77.89  
 &  \phantom{0000}  
 &  28.38   \\
Shared Combi $s_2$
 &  77.35
 &  33.01    
 &  37.30  
 &  \best{\phantom{0}0.00}  
 &  \best{\phantom{0}0.00}  
 &  \phantom{0}1.06  
 &  82.71  
 &  77.01  
 &  \phantom{0000}  
 &  \phantom{0}1.80   \\
Shared Combi $s_3$
 &  77.35
 &  33.06    
 &  37.57  
 &  \best{\phantom{0}0.00}  
 &  \best{\phantom{0}0.00}  
 &  \phantom{0}1.13  
 &  82.68  
 &  77.01  
 &  \phantom{0000}  
 &  \phantom{0}1.85   \\
 \bottomrule
\end{tabularx}
}
\end{sc}
\end{small}
\end{center}
\vskip -0.1in
\end{table*}

\section{Experiments}\label{section:experiments}
We use CIFAR-10 and CIFAR-100~\citep{krizhevsky2009learning} datasets as in-distribution and OpenImages dataset~\citep{OpenImages2} as training out-distribution.
The 80 Million Tiny Images (80M) dataset~\citep{torralba200880} is the de facto standard for training out-distribution aware models that has been adopted by most prior works, but this dataset has been withdrawn by the authors as~\cite{Birhane_2021_WACV} pointed out the presence of offensive images.
To be able to compare with other state-of-the-art methods without introducing a potential bias due to dataset selection, we include the evaluation with 80M as training out-distribution in Appendix~\ref{section:80M}. Moreover, we show in the appendix results for the binary discriminator trained with different training out-distributions vs. likelihoods resp. likelihood ratios \citep{ren2019likelihood} as OOD method.

We use as OOD detection metric the false positive rate at 95\% true positive rate, FPR@95\%TPR; evaluations with  AUC are in Appendix~\ref{sec:auc}.
We evaluate the OOD detection performance on the following datasets:
SVHN ~\citep{SVHN}, resized LSUN Classroom~\citep{lsun}, Uniform Noise, Smooth Noise generated as described by~\citep{HeiAndBit2019}, the respective other CIFAR dataset, 80M, and CelebA~\citep{CelebA}. We highlight that none of the listed methods has access to those test distributions during training or for fine-tuning as we try to assess the ability of an OOD aware model to generalize to unseen distributions. The FPR for the OpenImages test set is not part of the Mean AUC, since OpenImages \textit{has} been used during training.

The binary discriminators (\textsc{BinDisc}) as well as the classifiers with background class (\textsc{BGC}) and the shared binary discriminator+classifier (\textsc{Shared}) of $p(i|x)$ and $p(y|x,i)$ are trained on the 40-2 Wide Residual Network \citep{ZagKom2016} architecture with the same training schedule as used in~\cite{HenMazDie2019} for training their Outlier Exposure(\textsc{OE}) models. This includes averaging the loss over batches that are twice as large for the out-distribution. This way we ensure that the differences do not arise due to differences in the training schedules or other important details but only on the employed objectives. 
In addition to their standard augmentation and normalization, we apply AutoAugment~\citep{cubuk2019autoaugment} without Cutout, and we use $\lambda=1$ where applicable,
which is a sound choice as we observe in an ablation on $\lambda$ in Appendix~\ref{sec:lambda}.
For the \textsc{Energy} OOD detector, we fine-tune the plain model for 10 epochs with OpenImages as out-distribution.
For the Mahalanobis OOD detector~\citep{LeeEtAl2018b}, we use the models and code published by the authors and use OpenImages for the fine tuning of input noise and layer weighting regression.
Our code is available at \url{https://github.com/j-cb/Breaking_Down_OOD_Detection} and we describe the exact details of the training settings and the used dataset splits in Appendix \ref{sec:exp_details}.

\subsection{Out-distribution aware training with labeled in-distribution data}
In Table \ref{table:OI_AA_FPR} we compare multiple OOD methods trained with training out-distribution OpenImages and CIFAR-10/100 as in-distribution: confidence of standard training (\textsc{Plain}) and OE,  \textsc{Mahalanobis} detection, \textsc{Energy}-Based OOD detection, classifier with background class (\textsc{BGC}) and the combination of a plain classifier and a binary in-vs-out-distribution classifier with shared representation (\textsc{Shared Combi}). As described in Section \ref{section:scores}, both  \textsc{BGC}  and \textsc{Shared Combi} can be used in combination with different scoring functions. For  \textsc{BGC}, we evaluate all three scoring functions $s_1$, $s_2$ and $s_3$ and for \textsc{Shared Combi} we only use $s_2$ and $s_3$ as $s_1$ is equivalent to $p(i|x)$ which is the output of \textsc{Shared BinDisc}. Additionally, 
we evaluate OOD detection based on the confidence of the shared classifier (\textsc{Shared Classi}) trained together with \textsc{Shared BinDisc}.

\begin{table}[!h]
\caption{Mean FPR@95\%TPR over all means for the dataset combinations shown in Tables~\ref{table:OI_AA_FPR}, \ref{table:fpr_80M}, and \ref{table:rImgNet}.
}\label{table:mean_fpr}
\setlength\tabcolsep{.5pt} 
\vskip 0.15in
\begin{center}
\begin{small}
\begin{sc}
\begin{tabularx}{1.0\columnwidth}{lC|lC}
Model & FPR & \hspace{2mm} Model & FPR \\ \midrule
Plain Classi & 60.83 & \hspace{4mm} Shared BinDisc ($s_1$) & 18.89 \\
OE  ($s_3$) & 19.53 & \hspace{4mm} Shared Classi  & 34.93 \\
\hspace{2mm} \backgroundSone & 19.29 & \hspace{2mm} Shared Combi $s_2$ & 18.35 \\
\hspace{2mm} \backgroundStwo & 19.58 & \hspace{2mm} Shared Combi $s_3$ & 18.38 \\
 \backgroundSthree & 19.62 \\
\end{tabularx}
\end{sc}
\end{small}
\end{center}
\vskip -0.1in
\end{table}

\begin{table*}[!htbp]
\caption{
Evaluation (same metrics as in Table~\ref{table:OI_AA_FPR}) of models trained with shared and separate representations. Shared training benefits both the classifier and the binary discriminators. 
\label{table:separate_OI_FPR}
}
\setlength\tabcolsep{.5pt} 
\vskip 0.15in
\begin{center}
\begin{small}
\begin{sc}
\makebox[\textwidth][c]{
\begin{tabularx}{1.0\textwidth}{lC|C|CCCCCCC|C}
\multicolumn{10}{c}{in-distribution: CIFAR-10} \\
\midrule
      & &  Mean &     SVHN    &   LSUN    &   Uni &   Smooth     &   C-100 &  80M &   CelA  &    OpenIm     \\ 
Model    & Acc. &  FPR &    FPR    &   FPR    &   FPR &  FPR    &   FPR & FPR &   FPR &   FPR      \\ \midrule
  \hspace{\subtab} Separate BinDisc  ($s_1$)
 &
 & 23.49
 & 6.21
 & \best{\phantom{0}0.00} 
 & \best{\phantom{0}0.00} 
 & \best{\phantom{0}0.00} 
 & 83.79
 & 65.77
 & 8.68
 & \best{\phantom{0}0.00} 
 \\
 \hspace{\subtab} Plain Classi
 &  95.16
 &  53.01    
 &  47.87  
 &  50.00  
 &  17.51  
 &  65.81  
 &  60.43  
 &  53.44  
 &  76.00  
 &  63.71   \\
  Separate Combi $s_3$
 & 95.16
 & 21.40
 & 13.15
 & \best{\phantom{0}0.00} 
 & \best{\phantom{0}0.00} 
 & \best{\phantom{0}0.00} 
 & 59.96
 & 49.78
 & 26.93
 & \phantom{0}0.45
  \\ \midrule
\hspace{\subtab} Shared BinDisc ($s_1$)
 &  
 &  19.56    
 &  \best{\phantom{0}4.65}
 &  \best{\phantom{0}0.00}  
 &  \best{\phantom{0}0.00}  
 &  \best{\phantom{0}0.00}  
 &  77.50  
 &  53.93  
 &  \best{\phantom{0}0.87}  
 &  \phantom{0}0.04   \\
\hspace{\subtab} Shared Classi 
 &  \textbf{95.28}
 &  29.34    
 &  28.00  
 &  \phantom{0}7.00  
 &  \phantom{0}2.33  
 &  33.04  
 &  58.61  
 &  47.90  
 &  28.54  
 &  35.94   \\
Shared Combi $s_3$
 &  \textbf{95.28}
 &  16.06    
 &  \phantom{0}9.00  
 &  \best{\phantom{0}0.00}  
 &  \best{\phantom{0}0.00}  
 &  \best{\phantom{0}0.00}  
 &  58.68  
 &  42.85  
 &  \phantom{0}1.91  
 &  \phantom{0}0.66   \\
\midrule
  Plain $\otimes$ Sha Disc $s_3$
 & 95.16
 & \textbf{15.96}
 & 8.10
 & \best{\phantom{0}0.00} 
 & \best{\phantom{0}0.00} 
 & \best{\phantom{0}0.00} 
 & \best{58.48}
 & \best{42.60}
 & 2.53
 & \phantom{0}0.70
  \\
\toprule
\multicolumn{10}{c}{in-distribution: CIFAR-100} \\
\midrule
& &  Mean &     SVHN    &   LSUN    &   Uni &   Smooth     &   C-10 &  80M &     &    OpenIm     \\ 
Model    & Acc. &  FPR &    FPR    &   FPR    &   FPR &  FPR    &   FPR & FPR &    &   FPR  \\ \midrule
  \hspace{\subtab} Separate BinDisc ($s_1$)
 &
 & 32.50
 & 14.28
 & \best{\phantom{0}0.00} 
 & \best{\phantom{0}0.00} 
 & \best{\phantom{0}0.00} 
 & 96.50
 & 84.22
 &  \phantom{0000}  
 & \phantom{0}\best{0.02}
 \\
 \hspace{\subtab} Plain Classi
 &  77.16
 &  67.57    
 &  75.50  
 &  78.33  
 &  22.60  
 &  70.98  
 &  \best{80.55}  
 &  77.43  
 &  \phantom{0000}  
 &  80.80   \\
  Separate Combi $s_3$
 & 77.16
 & 41.94
 & 69.44
 & \best{\phantom{0}0.00} 
 & \best{\phantom{0}0.00} 
 & 24.39
 & 81.15
 & 76.67
 &  \phantom{0000}  
 & \phantom{0}0.89
 \\ \midrule
\hspace{\subtab} Shared BinDisc ($s_1$)
 &  
 &  \textbf{31.86}    
 &  \best{10.77}  
 &  \best{\phantom{0}0.00}  
 &  \best{\phantom{0}0.00}  
 &  \best{\phantom{0}0.00}  
 &  95.25  
 &  85.11  
 &  \phantom{0000}  
 &  \phantom{0}0.08
\\
\hspace{\subtab} Shared Classi 
 &  \textbf{77.35}
 &  67.23    
 &  71.05  
 &  \phantom{0}5.00  
 &  97.70  
 &  69.68  
 &  82.05  
 &  77.89  
 &  \phantom{0000}  
 &  28.38
\\
Shared Combi $s_3$
 &  \textbf{77.35}
 &  33.06    
 &  37.57  
 &  \best{\phantom{0}0.00}  
 &  \best{\phantom{0}0.00}  
 &  \phantom{0}1.13  
 &  82.68  
 &  77.01  
 &  \phantom{0000}  
 &  \phantom{0}1.85   
\\ \midrule
  Plain $\otimes$ Sha Disc $s_3$
 & 77.16
 & 33.38
 & 37.00
 & \best{\phantom{0}0.00} 
 & \best{\phantom{0}0.00} 
 & 5.42
 & 81.33
 & \best{76.50}
 &  \phantom{0000}  
 & \phantom{0}2.23
\\
\bottomrule
\end{tabularx}
}
\end{sc}
\end{small}
\end{center}
\vskip -0.1in
\end{table*}

For CIFAR-10, a first interesting observation is that \textsc{Shared Classi} has remarkably good OOD performance; significantly better than a normal classifier (plain) even though it is just trained using normal cross-entropy loss and so the OOD performance is only due to the regularization enforced by the shared representation with \textsc{Shared BinDisc}. In fact \textsc{Shared BinDisc} has already good OOD performance with a mean FPR@95\%TPR of 19.56, which is improved by considering scoring function $s_2/s_3$ in the combination of  \textsc{Shared BinDisc} and \textsc{Shared Classi}
which yields very good classification accuracy and mean FPR/AUC.
Moreover, interesting are the results of the classifier with background class (\textsc{BGC}) which is the method recently advocated in \cite{thulasidasan2021effective}. It works very well but the performance depends on the chosen scoring function. Whereas $s_1$ (output of the background class) is a usable scoring function (mean FPR: 18.83), the maximum probability over the other classes $s_2$ (mean FPR: 16.52) or the combination in terms of $s_3$ (mean FPR: 16.63) performs better.
In total with the scoring function $s_2/s_3$ integrating classifier and discriminative information, \textsc{BGC} reaches similar performance to OE (which implicitly also uses $s_3$ as scoring function).
In general, the differences of the methods are relatively minor both in terms of OOD detection and classification accuracy, where the latter is better for all OOD methods compared to the plain classifier; this is most likely explained by better learned representations, see also \citet{HenMazDie2019,RATIO} 
for similar observations.
The results for CIFAR-100 are similar to CIFAR-10, with some reversals of the overall rankings of the compared methods. OE achieves comparable OOD results to \textsc{BGC} $s_2/s_3$ and \textsc{Shared Combi} $s_2/s_3$. For this in-distribution our \textsc{BGC} $s_1$ and \textsc{Shared BinDisc} perform best in terms of OOD performance. Classification test accuracy is slightly higher for \textsc{BGC} and \textsc{Shared}, but the differences are minor. 
The above observations are confirmed by further experiment further experiments with 80M as training out-distribution in Appendix~\ref{section:80M} as well as with Restricted ImageNet \citep{tsipras2018robustness} as in-distribution and the remaining ImageNet classes as training out-distribution in Appendix~\ref{sec:RImageNet}.
For the reader's convenience, we summarize the mean FPR over the 5 models for each method and over all datasets in Table~\ref{table:mean_fpr}. As discussed, the differences between the different methods using surrogate OOD data are relatively minor, with the none of the methods being strictly better or worse than an other over all 5 settings.

Overall, as suggested by the theoretical results on the equivalence of the Bayes optimal classifier of \textsc{OE} with the $s_3$ scoring function of \textsc{BGC} and \textsc{Shared Combi}, we observe that even though these methods are derived and in particular trained with quite different objectives, they behave very similar in our experiments.
In total we think that this provides a much better understanding where differences of OOD methods are coming from. Regarding the question of which method and scoring function should be used for a given application, the experimental results across datasets and different out-distributions, see Appendix \ref{section:80M}, suggest that their difference is minor and there is no clear best choice.
However, in Appendix~\ref{section:coins}, we describe a potential situation where the $s_3$ score and in consequence OE is not powerful enough to distinguish in- and out-of-distribution inputs. On the other hand, in cases where the $s_1$ score is not very informative as training and test out-distributions largely differ, combining it with the 
classifier confidences is beneficial; this can be observed in experiments with SVHN as training out-distribution which we show in Appendix~\ref{section:SVHN}.
This is why for an unknown situation, we recommend \textsc{BGC} or \textsc{Shared Combi} with the $s_2$ scoring function as the safest option. However, it is an open question if there are also situations where $s_2$ is fundamentally inferior to $s_3$.

\subsection{Shared representation learning for the binary discriminator}
As highlighted above the shared training of \textsc{Shared Classi} and \textsc{Shared BinDisc} and their combination \textsc{Shared Combi} with $s_2/s_3$ as scoring functions yields strong OOD detection and test accuracy among all methods. Here, we evaluate the importance of training the binary discriminator and the plain classifier with a shared representation in comparison to training two entirely separate models \textsc{Plain Classi} and \textsc{Separate BinDisc} and their combination \textsc{Separate Combi} with scoring function $s_3$.
The results for CIFAR-10 and CIFAR-100 can be found in Table \ref{table:separate_OI_FPR}. In total, we see that separate training in particular for CIFAR-100 leads to worse results compared to shared training as expected as the binary discriminator and the classifier cannot benefit from each other. An interesting curiosity is that the combination of the separate classifier with
the binary discriminator trained in a shared fashion (\textsc{Plain $\otimes$ Sha Disc}) yields almost the same OOD results as \textsc{Shared Combi} even though the classifier is significantly worse. Overall, \textsc{Shared Combi} performs significantly better when also considering the better classification accuracy which it inherits  from \textsc{Shared Classi}.

\section{Conclusion}
In this paper we have analyzed different OOD detection methods and have shown that the simple baseline of a binary discriminator between in-and out-distribution is a powerful OOD detection method if trained in a shared fashion with a classifier. Moreover, we have revealed the inner mechanism of Outlier Exposure and training with a background class which unexpectedly use a scoring function which integrates information from $p(i|x)$ \textit{and} $p(y|x,i)$.
We think that these findings will 
allow to build novel OOD methods in a more principled fashion.

\subsubsection*{Acknowledgments}
The authors acknowledge support from the German Federal Ministry of Education and Research (BMBF) through the Tübingen AI Center (FKZ: 01IS18039A) and from the Deutsche Forschungsgemeinschaft (DFG, German Research Foundation) under Germany’s Excellence Strategy (EXC number 2064/1, Project number 390727645), as well as from the DFG TRR 248 (Project number 389792660).
The authors thank the International Max Planck Research School for Intelligent Systems (IMPRS-IS) for supporting Alexander Meinke.


\medskip
\bibliography{main.bib}
\bibliographystyle{icml2022}

\onecolumn

\appendix

\section{Proofs}
\scoreequiv*
\begin{proof}\noindent
 Assume that such a function $\phi$ exists. Then for any pair $x,y$ we have the logical equivalences $g(x) > g(y) \Leftrightarrow f(x) = \phi(g(x)) > \phi(g(y)) = f(y)$ and $g(x) = g(y) \Leftrightarrow f(x) = \phi(g(x)) = \phi(g(y)) = f(y)$. This directly implies that the AUCs are the same, regardless of the distributions.

 Assume $f\cong g$. For each $a \in \mathrm{range}(g)$, choose some $\hat{a} \in g^{-1}(a)$.
    For any pair $x,y \in X$, by regarding the Dirac distributions $p(x|i) = \delta_x$ and $p(x|o) = \delta_y$ that are each concentrated on one of the points, we can infer that $f(x) > f(y) \Leftrightarrow \mathrm{AUC}_f (p(x|i), p(x|o)) = 1 \Leftrightarrow \mathrm{AUC}_g (p(x|i), p(x|o)) = 1 \Leftrightarrow g(x) > g(y)$ and similarly $f(x) = f(y) \Leftrightarrow g(x) = g(y)$.
    The latter ensures that the function defined as
    \begin{align}
    \begin{split}
          \phi: \mathrm{range}(g) &\rightarrow\mathrm{range}(f) \\
        a &\mapsto f(\hat{a})      
    \end{split}
    \end{align}
    is independent of the choice of $\hat{a}$ and that $f = \phi \circ g$, and the former confirms that $\phi$ is strictly monotonously increasing. 
\end{proof}

\equivfpr*
\begin{proof}
    We know that a function $\phi$ as in Theorem \ref{thm:score_equiv} exists. Then for any pair $x,y$, we have the logical equivalences \begin{align}
    g(x) > g(y) \Leftrightarrow f(x) = \phi(g(x)) > \phi(g(y)) = f(y)
    \end{align}
    and
    \begin{align}
    g(x) = g(y) \Leftrightarrow f(x) = \phi(g(x)) = \phi(g(y)) = f(y) \ .
    \end{align}
    This directly implies that the FPR@qTPR-values are the same, for any $p(x|i), p(x|o)$ and q.
\end{proof}

\lhratios*
\begin{proof}
    The function $\phi: [0, \infty] \rightarrow [0,1]$ defined by $\phi(x) = \frac{x}{x + \lambda}$ (setting  $\phi(\infty) = 1$) fulfills the criterion from Theorem \ref{thm:score_equiv} of being strictly monotonously increasing.
    With 
    \begin{align}
        \phi\left(\frac{p(x|i)}{p(x|o)}\right)
        &= \frac{\frac{p(x|i)}{p(x|o)}}{\frac{p(x|i)}{p(x|o)}  + \lambda \frac{p(x|o)}{p(x|o)}} = \frac{p(x|i)}{p(x|i)+\lambda p(x|o)}
    \end{align}
    for $p(x|o) \neq 0$ and $\phi\left(\frac{p(x|i)}{0}\right) = \phi(\infty) = 1 = \frac{p(x|i)}{p(x|i)+\lambda \cdot 0}$, the equivalence follows.
\end{proof}

\lhuni*
\begin{proof}
    This is a special case of Lemma \ref{lem:lh_ratios}, by setting $p(x|o) = 1 = p_\mathrm{Uniform}(x)$.
\end{proof}

\OEoptim*
\begin{proof}
Minimizing the loss of Outlier Exposure
\begin{align}\label{eq:loss_OE}
    \min_\theta \E_{\substack{(x,y)\sim \\p(x,y|i)}}\! \left[\Lce(f_{\theta}(x), y)\right] + \lambda \!\! \E_{x\sim p(x|o)}\left[ \Lce(f_{\theta}(x), u^K)\right] \ ,
\end{align}
means solving the optimization problem
\begin{align}\begin{split}
    \min_{p_f(\cdot|x)} \ \  & -p(i|x) \cdot \sum_{k=1}^K p(k|x,i) \cdot \log p_f(k|x) - (1 - p(i|x)) \cdot \sum_{k=1}^K \frac{1}{K} \cdot \log p_f(k|x) \\[2mm]
    \text{subject to} & \hspace{0.6cm} p_f(k|x) \geq 0 \ \text{ for each $k \in \{1, \ldots, K\}$} \\
    & \hspace{0.6cm} \sum_{k=1}^K p_f(k|x) =1 \ ,
\end{split}
\end{align}
where $p_f(\cdot|x)$ is the model's $K$-dimensional prediction.
For $p(i|x) = 0$ or $p(k|x,i) = 0$, the optimalities of the respective terms are easy to show (applying the common conventions for $0\log 0$), so we assume that tose are non-zero.
The Lagrange function of the optimization problem is 
\begin{align}\begin{split}
    L(p_f(\cdot|x), \alpha, \beta) = -&p(i|x) \cdot \sum_{k=1}^K p(k|x,i) \cdot \log p_f(k|x) - (1 - p(i|x)) \cdot \sum_{k=1}^K \frac{1}{K} \cdot \log p_f(k|x) \\
    &- \sum_{k=1}^K \alpha_k p_f(k|x) + \beta \left(-1+\sum_{k=1}^K p_f(k|x)\right) \ ,
\end{split}
\end{align} with $\beta \in \R$ and $\alpha \in \R_+^K$.
Its first derivative with respect to $p_f(k|x)$ for any $k$ is
\begin{align}\label{first_derivative_L}
    \begin{split}
         \frac{\partial L}{p_f(k|x)} 
    &= -p(i|x) \cdot p(k|x,i) \frac{1}{p_f(k|x)} - (1 - p(i|x)) \cdot \frac{1}{K} \frac{1}{p_f(k|x)} - \alpha_k + \beta \\
    &= -\frac{s^K(k|x)}{p_f(k|x)} - \alpha_k + \beta \ ,
    \end{split}
\end{align}
where we set $s^K(k|x) := p(i|x) p(k|x,i) + \frac{1}{K} \big(1-p(i|x)\big)$.
The second derivative is a positive diagonal matrix for any point of its domain, therefore we find the unique minimum by setting (\ref{first_derivative_L}) to zero, i.e. at
\begin{align}
    p_f(k|x) = \frac{s^K(k|x)}{\beta - \alpha_k} \ .
\end{align}
The dual problem is hence the maximization (with $\alpha_k \geq 0$) of 
\begin{align*}
    q(\alpha, \beta)
    &= -p(i|x) \cdot \sum_{k=1}^K p(k|x,i) \cdot \log \frac{s^K(k|x)}{\beta - \alpha_k} - (1 - p(i|x)) \cdot \sum_{k=1}^K \frac{1}{K} \cdot \log \frac{s^K(k|x)}{\beta - \alpha_k} \\
    &\hspace{2cm} - \sum_{k=1}^K \alpha_k \frac{s^K(k|x)}{\beta - \alpha_k} + \beta \left(-1+\sum_{k=1}^K \frac{s^K(k|x)}{\beta - \alpha_k}\right) \\
    &= \sum_{k=1}^K s^K(k|x) \left( - \log s^K(k|x) + \log(\beta - \alpha_k) + \frac{\beta}{\beta - \alpha_k} - \frac{\alpha_k}{\beta - \alpha_k} \right) - \beta \ ;
\end{align*}
here, $\alpha$ only appears in $\log(\beta - \alpha_k)$ which has a positive factor $s^K(k|x)$, so $\alpha = 0$ maximizes the expression.
Noting $\sum_{k=1}^K s^K(k|x) = 1$, what remains is $q^0(\beta) = 1 + \log(\beta) - \sum_{k=1}^K s^K(k|x) \log s^K(k|x) - \beta$, which is maximized by $\beta = 1$.
This means that the dual optimal pair is $p_f(k|x) = s^K(k|x), (\beta = 1, \alpha = 0)$. Slater's condition \citep{BoydVandenberghe} holds since the feasible set of the original problem is the probability simplex. Thus, $p_{f*}(\cdot|x) = s^K(x)$ is indeed primal optimal.
\end{proof}

\section{Confidence loss models and models with a background class or a binary discriminator are only equivalent after $s_3$ is applied}\label{section:coins}
Seeing that for both in-distribution accuracy and OOD detection, OE models trained with confidence loss, models with background class and shared classifier/discriminator combinations behave very similarly, the question arises if the training methods themselves lead to equivalent models.
One idea might be that and effect of the confidence loss on the degree of freedom from logit translation invariance could be unfolded to obtain a $K+1$-dimensional output that contains the same information as a classifier with background class or with and additional binary discriminator output (the latter two are indeed equivalent).
This is not the case, as the following example that in certain situations, the $s_1$ and $s_2$ scores of background class models and classifier/discriminator combinations are able to separate in- and out-distribution, while the $s_3$ score and the equivalent confidence loss/OE models cannot.

\subsection{Euro coin classifier}
As an example where the mentioned non-equivalence would occur, we hypothetically regard the task of classifying photos of 1-Euro coins by the issuing country.
Each €1 coin features a common side that is the same for each country and a national side that pictures a unique motive per country.
We assume that one side is visible on each photo, and that the training dataset of size $2mK$ is balanced, consisting of $2m$ coin photos with label $c$ for each country $c \in \{1, \ldots, K\}$, where $m$ photos show the common side and the other $m$ photos show the informative national side for each country $c$.

It is easy to see that the Bayes optimal classifier trained with cross-entropy loss on this dataset predicts $100\%$ for the respective country $c$ when shown a photo of the national side of a €1 coin, and predicts $1/K$ for each country when shown the common side of a €1 coin.

Now we compare the behaviour of the different methods given a training out-distribution of poker chips images which are clearly recognizable as not being €1 coins.

A $K$-class model trained with confidence loss~\citep{LeeEtAl2018} will not make a difference between common side coin images and poker chip images, and in the Bayes optimal case, it will predict the uniform class distribution in both cases. 
This does not only hold for the prediction of a hypothetical Bayes optimal model: assuming full batch gradient descent and identical sets of $m$ common side training photos for each class,  the loss for a common side input is the same as the loss for a poker chip.

On the other hand, a binary discriminator will easily distinguish between poker chips and €1 coins, no matter which side of the coin is shown. The same holds for a model with background class: the score of the class $K+1$ will be close to $1$ for chips and close to $0$ for €1 coins.

We conclude that in the described situation, models trained with confidence loss/outlier exposure are not able to sufficiently distinguish in- and out-distribution, while the $s_1$ scoring function of a classifier with background class or a binary discriminator is suitable for this task.

With the $s_2$ scoring function, the background class model gives us  $s_2(x)=\max_{k=1,\ldots,K} p_{f}(k|x)$, and thus $\frac{(1-p_{f}(K+1|x))}{K} \leq s_2(x) \leq 1-p_{f}(K+1|x)$, which means that if $p_{f}(K+1|x)$ is sufficiently large for in-distribution inputs and sufficiently small for out-of-distribution inputs,  $s_2$ is able to distinguish them independent of inconclusiveness in the first $K$ classes. Similarly, $s_2$ applied to a binary discriminator with a classifier (shared trained or not) will be able to distinguish common sides of coins and poker chips.

With $s_3$, on the other hand, common sides of coins and poker chips can no longer be separated.
For a classifier/discriminator pair, as defined above, $\displaystyle
s_3(x) = p_d(i|x) \Big[ \max_{y=1,\ldots,K} p_c(y|x,i)-\frac{1}{K}\Big]  +\frac{1}{K}$.
If on the common side of a coin the classifier predicts uniform $\frac{1}{K}$, we have $s_3(x) = \frac{1}{K}$ no matter what the discriminator $p_d(i|x)$ predicts.
On poker chips with discriminator prediction $p_d(i|x) = 1$, we also get $s_3(x) = 1$.
For background class models, $\displaystyle s_3(x)=\max_{k=1,\ldots,K} p_{f}(k|x) + \frac{1}{K}p_{f}(K+1|x)$ also yields $\frac{1}{K}$ for a common side where the prediction over the $K$ in-distribution classes is uniform and for a poker chips, where $p_{f}(K+1|x) = 1$.
The fact that in this coin scenario when scored with $s_3$, background class and classifier/discriminator combinations have the same problem as confidence loss/OE is not surprising considering their equivalence shown in Theorem~\ref {thm:OE_optim}.

\section{Experimental Details}\label{sec:exp_details}
\subsection{Training}
For training our models, we build upon the code of~\citet{HenMazDie2019} which they have available at \url{https://github.com/hendrycks/outlier-exposure} and borrow their general architecture and training settings.
Concretely, we use 40-2 Wide Residual Network~\citep{ZagKom2016} models with normalization based on the CIFAR datasets and a dropout rate of 0.3.
They are trained for 100 epochs with an initial learning rate of $0.1$ that decreases following a cosine annealing schedule.
Unless mentioned otherwise, each training step uses a batch of size 128 for the in-distribution and a batch of size 256 for the training out-distribution. The optimizer uses stochastic gradient descent with a Nesterov momentum of 0.9.
Weight decay is set to $5\cdot10^{-4}$.
The deep learning framework we use is PyTorch~\citep{PyTorch}, and for evaluating we use the scikit-learn~\citep{scikit-learn} implementation of the AUC.
Our code is available at \url{https://github.com/j-cb/Breaking_Down_OOD_Detection}.

For evaluating the Mahalanobis detector, we use the code by the authors of \citet{LeeEtAl2018b} provided at \url{https://github.com/pokaxpoka/deep_Mahalanobis_detector}. The input noise levels and regression parameters are chosen on the available out-distribution OpenImages and are 0.0014 for CIFAR-10 and 0.002 for CIFAR-100. 

All experiments were run on Nvidia V100 GPUs of an internal cluster of our institution, using up to 4 GB GPU memory (batch sizes in:128/out:256), with no noticeable difference between ours and the compared OE~\cite{HenMazDie2019} runs.

\subsection{Datasets}
We train our models with the train splits of CIFAR-10 and CIFAR-100~\cite{krizhevsky2009learning} (MIT license) which each consist of 50,000 labeled images, and evaluate on their test splits of 10,000 samples.
As training out-distribution we use OpenImages~v4~\citep{OpenImages2} (images have a CC BY 2.0 license); the training split that we employ here consists of 8,945,291 images of different sizes, which get resized to $32 \times 32$ pixels, and we test on 10,000 from the official validation split.
For training with 80 Million Tiny Images (80M)~\citep{torralba200880} in Appendix~\ref{section:80M} (no license, see links in Appendix~\ref{section:80M}), we use data from the beginning of the sequentialized dataset, and evaluate on a test set of 30,080 images starting at index 50,000,000. A subset of CIFAR images contained in 80M is excluded for training and evaluation.
Further image datasets used for evaluation are SVHN~\citep{SVHN} (free for non-commercial use) with 26,032 samples, LSUN~\citep{lsun}  Classroom (no license) with 300 samples, and CelebA~\citep{CelebA} (available for non-commercial research purposes only) with 19,962 test samples.
Uniform and Smooth Noise~\citep{HeiAndBit2019} are sampled, the latter by generating uniform noise and smoothing it using a Gaussian filter with a width that is drawn uniformly at random in $[1, 2.5]$. Each datapoint is then shifted and scaled linearly such that the minimal and maximal pixel values are 0 and 1, respectively. For both noises, we evaluate 30,080 inputs.

\section{Related Work on OOD Detection}\label{sec:related_work}
Out-of-distribution detection has been an important research area in recent years, and several approaches that are fitted towards different training and inference scenarios have been proposed.

One seemingly obvious line of thought is to use generative models for density estimation to differentiate between in- and out-distribution~\citep{bishop1994,NalEtAl2018,ren2019likelihood,nalisnick19b,XiaoLikelihoodRegret}. Recent methods to a certain extent overcome the problem mentioned in \citet{NalEtAl2018} that generative models can assign higher likelihood to distributions on which they have not been trained. Another line of work are score-based methods using an underlying classifier or the internal features of such a classifier, potentially combined with a generative model \citep{HenGim2017,liang2017enhancing, lee2018simple,HenMazDie2019,HeiAndBit2019}. One of the most effective methods up to now is Outlier Exposure ~\citep{HenMazDie2019} and work building upon it~\citep{chen2020informative-outlier-matters,meinke2020towards,Mohseni2020SelfSupervisedLF,RATIO,OECC,thulasidasan2021effective} where a classifier is trained on the in-distribution task and one enforces low confidence as proposed by \citet{LeeEtAl2018} during training on a large and diverse set of out-of-distribution images \citep{HenMazDie2019} which can be seen as a proxy of all natural images. This approach generalizes well to other out-distributions.
Recently, NTOM~\citep{chen2020informative-outlier-matters} has achieved excellent results for detecting far out-of-distribution data by adding a background class to the classifier which is trained on samples from the surrogate out-distribution that are mined such that they show a desired hardness for the model. 
At test time, the output probability for that class is used to decide if an input is to be flagged as OOD. Their ATOM method does the same while also adding adversarial perturbations to the OOD inputs during training.
Even though it has been claimed that new approaches outperform \citet{HenMazDie2019}, up to our knowledge this has not been shown consistently across different and challenging test out-of-distribution datasets (including close and far out-of-distribution datasets).
Below, we discuss some other recently proposed approaches that build upon different premises on the data available during training.

\citet{hendrycks2019selfsupervised} do not use any OOD data during training and instead teach the model to predict whether an input has been rotated by 0°, 90°, 180° or 270°. For inference, they use the loss of this predictor as an OOD score, and add this score to classifier output entropy, which behaves very similar to classifier confidence.
Similar to our methods, they also use shared representations and the combination of the in-distribution classifier with a dedicated OOD detection score. If one interprets their rotation predictor loss as being an estimator of $\log p(o|x)$ for some implicit out-distribution, their scoring function coincides with our $s_2$ scores.

\citet{golan2018deep} learn a similar transformation detector (with Dirichlet statistics collected on the in-distribution replacing ground truth labels) and use it directly to detect OOD samples without using in-distribution class information. 

\citet{winkens2020contrastive} fit for each class a normalized Mahalanobis detector on the activations of a model trained with SimCLR ands a classification head on only the in-distribution with smoothed labels. They describe their method as applying class-wise density estimation in the feature space, where the normalized Mahalanobis distance is equivalent to a Gaussian density for each class.

\citet{DermaOOD} treat an interesting application of flagging unseen skin diseases, making use of class labels that are also available for their training OOD data, which contains diseases that are different from both the in-distribution diseases and the unseen diseases.
This allows them to do fine-grained OOD detection by regarding the sum over all OOD classes which for their dataset shows large improvement over methods that treat the training out-distributuion as one class.
They gain additional slight improvements by combining this with a coarse grained binary loss that treats the sum over all in-distribution class probabilities as $p(i|x)$ and the sum over all OOD classes as $p(o|x)$.
They show that this method can be combined with various representation learning approaches in order to improve their detection of unknown diseases.

\citet{tack2020csi} introduce distribution shifting transformations into SimCLR training. Those are transformations that are so strong that the resulting samples can be considered as OOD and as negatives w.r.t. the original in the SimCLR loss.
Similarly to \citet{hendrycks2019selfsupervised} and \citet{golan2018deep}, they also train a head that classifies the applied transformation. 
In a version extended to using in-distribution labels, they consider samples from the same class and with the same transformation as positives, and samples where either is different as negatives.
With this method, they obtain OOD detection results that significantly improve over standard classifier confidence, without using any training OOD dataset.

\cite{liu2020hybrid} derive a contrastive loss from Joint Energy-Based Model training \citep{Grathwohl2020Your} and train it together with cross-entropy on the in-distribution in order to obtain classifiers whose logit values can be transformed into energies that are equivalent as OOD scoring functions to in-distribution density. They show that using these energies yields some improvements over previous density estimation approaches, and that also the classifier confidences show moderately improved OOD detection when compared to standard training.

\citet{liu2020energy} propose another Energy-Based method which incorporates surrogate OOD data during training. We analyse this method in detail in Appendix~\ref{sec:energy}, where we show that their Bayes optimal OOD detector is equivalent to the binary discriminator between in- and out-distribution.

\cite{li2020background} use the same training method as OE and show that careful resampling of the training out-distribution resembling hard negative mining can reduce its size and therefore lead to a more resource effective training OOD dataset, while the resulting models reach similarly good OOD detection performance.

\clearpage
\section{Experiments Comparing Density Estimation and Binary Discriminators}
\subsection{Unlabeled in-distribution data available for training}
We want to answer the following questions: How does estimating the in-distribution density compare to simply employing a binary discriminator between the in-distribution and uniform noise with respect to the task of out-of-distribution detection? Can other density based models be substituted for potentially easier to handle binary discriminators against a suitable (semi-)synthetic out-distribution? As generative models, we use a standard likelihood VAE, a likelihood PixelCNN++ and additionally compare with a Likelihood Regret VAE~\citep{XiaoLikelihoodRegret}. The binary classifier is trained to separate real data from uniform noise, thus none of the methods presented in this section make use of 80M or any other surrogate distribution. The results for OOD detection in terms of AUC for all methods are presented in Table \ref{table:uni}.

Comparing the OOD detection performance of the binary discriminator trained against uniform noise with both VAE models, we asses that neither model is suitable for reliably detecting inputs from unknown out-distributions. 

Following the theoretical analysis from the previous sections, the likelihood models and our binary classifier are able to perfectly separate the in-distribution data from uniform noise. This is expected as those methods are trained on that particular task of separating CIFAR-10 from uniform noise, whereas the LH Regret VAE with modified train objective has worse performance on uniform noise. It appears as if the training objective of the binary classifier seems to be too easy as the training and validation loss converge to almost zero in the first few epochs of training. However, the ability to separate uniform noise from real images does not generalize to other test distributions as both methods fail to achieve good out-of-distribution detection performance on the other test distributions. We note that while the score features from the likelihood models and the binary classifier are in expectation equivalent, both methods behave quite different on the test datasets (except for uniform noise). This is not surprising, as the probability of drawing real images from the uniform distribution is so small that neither training method properly regularizes the model's behaviour on those particular image manifolds. Thus the results are artifacts of random fluctuation and no method clearly outperforms the other one, for example the binary classifier is better at separating CIFAR-10 from SVHN whereas the likelihood VAE significantly outperforms the binary classifier on LSUN. Similar fluctuations also exist between the two variational auto encoders and PixelCNN++, but in conclusion none of those methods is able to generalize to more specialized unseen distributions.

\begin{table*}[!htbp] 
\caption{AUC for CIFAR-10 vs. various out-distributions of different methods that have access to only unlabelled CIFAR-10 data during training. 
Shown are the scores obtained from the likelihoods of the PixelCNN++ from \citet{ren2019likelihood}, 
\\
}\label{table:uni}
\setlength\tabcolsep{.5pt} 
\vskip 0.15in
\begin{center}
\begin{small}
\begin{sc}
\makebox[\textwidth][c]{
\begin{tabularx}{1.0\textwidth}{l|C|CCCCCCC|C}
       &   Mean  &    SVHN    &   LSUN    &   CelebA  &   Smooth  &      C-100 & Open  &   80M &   Uni      \\ 
Model & AUC & AUC & AUC & AUC & AUC & AUC & AUC & AUC & AUC \\
\midrule
 Lh PixelCNN++
 &  57.05
 &  07.14
 &  \best{89.02}
 &  57.77
 &  77.93
 &  52.96
 &  \best{70.52}
 &  43.99 
 &  \best{100.00}  \\
 Lh VAE 
 &  \textbf{57.97}
 &  20.98
 &  83.03
 &  48.05
 &  \best{91.67} 
 &  51.32  
 &  55.62
 &  \best{55.09}
 &  \best{100.00}  \\
 Lh Regret VAE 
 &  52.24
 & \best{87.36}
 &  35.73
 &  \best{70.69}
 &  14.84
 &  \best{53.03} 
 &  50.81
 &  53.21
 &  94.11   \\ 
BinDisc Uniform 
 &  45.90
 &  71.22  
 &  34.67  
 &  35.07  
 &  47.13  
  &  47.32  
 &  44.70  
 &  41.17 
 &  \best{100.00}    \\
\end{tabularx}
}
\end{sc}
\end{small}
\end{center}
\vskip -0.1in
\end{table*}

\subsection{Likelihood Ratios as a binary discriminator}
In Section \ref{sec:bayes_unlabeled}, we discussed that for the Bayes optimal solutions of their training objectives, the ratio of the likelihoods of two density estimators for different distributions is as an OOD detection scoring function equivalent to the prediction of a binary discriminator between the two distributions.
In order to find out which role this equivalence plays in practise, we  train a binary discriminator between CIFAR-10 as in-distribution and the background distribution obtained by mutating 10\% of the pixles of in-distribution images as described in~\citet{ren2019likelihood}. In Table~\ref{table:lhratio}, we compare the OOD detection performance of this discriminator with likelihood ratios estimated with PixelCNN++~\citep{Salimans2017PixeCNN} as trained with the code of~\citet{ren2019likelihood} setting $L_2$ regularization as 10, and with the numbers taken from~\citet{XiaoLikelihoodRegret} given for their VAE models. Even though we use the code and hyperparameter settings of \citet{ren2019likelihood}, the AUC we obtain for SVHN as out-distribution differs significantly from their reported 93.1\%.
We observe that all three methods struggle with detecting inputs from several out-distributions and thus we do not consider them as reliable out-of-distribution detection methods.

\begin{table*}[!htbp] 
\caption{AUROC for CIFAR-10 vs. various out-distributions. The Likelihood Ratio~\cite{ren2019likelihood}  PixelCNN++ models were trained with their code. For the VAE, we cite the numbers of \cite{XiaoLikelihoodRegret}, as indicated by~*. 
We also show a binary discriminator between CIFAR-10 and the background distribution of \cite{ren2019likelihood}.
}\label{table:lhratio}
\setlength\tabcolsep{.5pt} 
\vskip 0.15in
\begin{center}
\begin{small}
\begin{sc}
\makebox[\textwidth][c]{
\begin{tabularx}{1.0\textwidth}{l|C|CCCCCCCC}
Model       &   Mean  &    SVHN    &   LSUN    &   Uni &   Smooth  &      C-100  &  Open  &  CelebA  &80M      \\ \midrule
  LhRatio PixelCNN++
 &  \textbf{64.77}
 &  12.57
 &  \best{88.85}
 &  \best{100.00}
 &  \best{89.55}
 &  \best{53.60}
 &  \best{74.50}
 &  \best{58.26}
 &  40.87   \\
 LhRatio VAE* 
 &
 &  26.5*  
 &  63.2* 
 &  \best{100.0*}  
 &   
 &  
 &  
 &  44.7* 
 &     \\
BinDisc
 &  58.81
 &  \best{55.07}  
 &  66.10  
 &  \best{100.00}  
 &  44.43  
 &  47.05  
 &  57.20  
 &  48.48  
 &  \best{52.16}    \\
\end{tabularx}
}
\end{sc}
\end{small}
\end{center}
\vskip -0.1in
\end{table*}


\section{Energy-Based OOD Detection in terms of Binary Discriminators} \label{sec:energy}
With Energy-Based OOD Detection~\citep{liu2020energy}, we exhibit a surprising case of the equivalence of Binary Discriminators to a further OOD detection method which is based on ideas quite different to those of the other extensively discussed methods.
This method is based on the premise that the logits $f_\theta(x)$ of a classifier can be used to define an energy (here we ignore a potential temperature factor which as~\citet{liu2020energy} find can in good conscience be set to one)
\begin{align}
    E_\theta(x) = - \log \sum_{l=1}^K e^{f_\theta(l|x)}
\end{align}
which the model assigns to the input, as has been proposed by \citet{lecun2006tutorial, Grathwohl2020Your}.
Ideally, this energy would be equivalent to the probability density of the in-distribution via
\begin{align}
    \log p(x|i) = -E_\theta(x) - Z \ \ \text{\ \  where } Z = \int_{z \in [0,1]^D} e^{-E_\theta(z)} dz \ .
\end{align}
However, since the integral over the whole image domain is intractible, it is not possible to effectively decrease $-E_\theta$ on the in-distribution directly while also controlling $Z$.
Naïvely training the classifier to have high energy on random inputs, i.e. uniform noise, is of course not a solution, since the model easily distinguishes the noise, and it is very unlikely to encounter any at least vaguely real images within finite training time.
Thus, \citet{liu2020energy} rather use a surrogate training out-distribution of natural images which they increase the energy on during training; for their experiments, they take 80 Million Tiny Images, for which we compare their models with several other methods in Appendix~\ref{section:80M}.
Simultaneously, they minimize the standard classifier cross-entropy on the in-distribution.
In order to avoid infinitely small potential losses, their training objective uses two margin hyper-parameters $m_{\text{in}} < m_{\text{out}}$ and reads
\begin{align}
\begin{split}
    &\E_{(x,y) \sim p(x,y|i)} -\log p_\theta(y|x) \\
    & + \lambda \cdot \left( 
        \E_{(x) \sim p(x|i)} \max\left\{0, E_\theta(x) - m_{\text{in}}\right\}^2
        + \E_{w \sim p(w|o)} \max\left\{0, m_{\text{out}} - E_\theta(w)\right\}^2
    \right)
\end{split}
\end{align}
with
\begin{align}
    \log p_\theta(y|x) 
    = \log\frac{e^{f_\theta(y|x)}}{\sum_{l=1}^K e^{f_\theta(l|x)}}
    =  f_\theta(y|x) + E_\theta(x) \ .
\end{align}
Note that their $\lambda$ does not balance between in and out by depending on a prior on $p(i)$, but it balances the energy regularization compared to the classifier cross-entropy loss.

\EnergyOptim*

\begin{proof}
Our goal is to find for a given input $x$ the model output $f_\theta(x)$ that minimizes the expected loss, assuming that we know the probabilities $p(y|x,i)$ and $p(i|x)$. 
This expected loss is
\begin{align}
    -&p(i|x) \cdot \sum_{k=1}^{K} p(k|x,i) \log p_\theta(k|x) \\
    &+ \lambda \cdot \left( p(i|x) \max\left\{0, E_\theta(x) - m_{\text{in}}\right\}^2
    + (1-p(i|x)) \max\left\{0, m_{\text{out}} - E_\theta(x)\right\}^2 \right) \ .
\end{align}
First, we note that if a certain $E^{*}_\theta(x)$ minimizes the expected energy loss to
\begin{align}
    \lambda \cdot \left( p(i|x) \max\left\{0, E^{*}_\theta(x) - m_{\text{in}}\right\}^2 +  (1-p(i|x)) \max\left\{0, m_{\text{out}} - E^{*}_\theta(x)\right\}^2 \right) \ ,
\end{align}
and if some  $f^{\sharp}_\theta(x)$ minimizes the expected CE loss (note that its minimization is independent of the positive factor $p(i|x)$) to
\begin{align}
    -\sum_{k=1}^{K} p(k|x,i) \log\frac{e^{f^{\sharp}_\theta(k|x)}}{\sum_{l=1}^K e^{f^{\sharp}_\theta(l|x)}} \ ,
\end{align}
with corresponding $E^{\sharp}_\theta(x) = -\log \sum_{i=1}^K e^{f^{\sharp}_\theta(x)[i]}$, then the  logit output with $- E^{*}_\theta(x) + E^{\sharp}_\theta(x)$ added to each component
has the energy
\begin{align}
    E_\theta(x) 
    &= - \log \sum_{i=1}^K e^{f^{\sharp}_\theta(x)[i] - E^{*}_\theta(x) + E^{\sharp}_\theta(x)}
    = - \log \left(e^{-E^{*}_\theta(x) + E^{\sharp}_\theta(x)}\cdot\sum_{i=1}^K e^{f^{\sharp}_\theta(x)[i]}\right) \\
    &= E^{*}_\theta(x) - E^{\sharp}_\theta(x) - \log \sum_{i=1}^K e^{f^{\sharp}_\theta(x)[i]}
    = E^{*}_\theta(x) \ ,
\end{align}
which means that this $f^{*}(x)$ minimizes the expected energy loss, and also fulfills
\begin{align}
    p_\theta(x)[y] 
    &= \frac{e^{f^{\sharp}_\theta(x)[y] - E^{*}_\theta(x) + E^{\sharp}_\theta(x)}}{\sum_{l=1}^K e^{f^{\sharp}_\theta(x)[l] - E^{*}_\theta(x) + E^{\sharp}_\theta(x)}}
    = \frac{e^{f^{\sharp}_\theta(x)[y]}}{\sum_{l=1}^K e^{f^{\sharp}_\theta(x)[l]}} \ ,
\end{align}
i.e. has the same probability predictions as $f^{\sharp}$ and thus also minimizes the expected CE loss.
This means that both can be optimized independently.

As we've seen before, the optimal $p_\theta(k|x)$ for the expected CE loss is $p_\theta(k|x) = p(k|x,i)$.

Thus we independently minimize the expected energy loss 
\begin{align}
    p(i|x) \max\left\{0, E_\theta(x) - m_{\text{in}}\right\}^2 
    +  (1-p(i|x)) \max\left\{0, m_{\text{out}} - E_\theta(x)\right\}^2 \ .
\end{align}
The first derivative with respect to $E_\theta(x)$ is
\begin{align}
    \begin{cases}
        2\cdot(1-p(i|x)) \cdot (E_\theta(x) - m_{\text{out}})  &\text{ for } E_\theta(x) < m_{\text{in}} \ , \\
        2\cdot \left( p(i|x) \cdot(E_\theta(x) - m_{\text{in}}) + (1-p(i|x)) \cdot (E_\theta(x) - m_{\text{out}})\right)  &\text{ for }  m_{\text{in}} < E_\theta(x) < m_{\text{out}} \ , \\
        2\cdot p(i|x) \cdot(E_\theta(x) - m_{\text{in}}) &\text{ for }  m_{\text{out}} < E_\theta(x) \ . \\
    \end{cases}
\end{align}
which simplified is
\begin{align}
    \begin{cases}
        < 0  &\text{ for } E_\theta(x) < m_{\text{in}} \ , \\
       2\cdot\left(E_\theta(x) + p(i|x) \cdot (m_{\text{out}} - m_{\text{in}}) - m_{\text{out}}\right)  &\text{ for }  m_{\text{in}} < E_\theta(x) < m_{\text{out}} \ , \\
       > 0 &\text{ for }  m_{\text{out}} < E_\theta(x) \ . \\
    \end{cases}
\end{align}
Here, for simplicity we again make the reasonable assumption that in- and out-distribution have full support, i.e. $0 < p(i|x) < 1$. (If we do not want to make the full support assumptions, for $p(i|x) = 0$, any energy $\geq m_{\text{out}}$ would be optimal, and for $p(i|x) = 1$, any energy $\leq m_{\text{in}}$ would be optimal.)

A the margin $E_\theta(x) = m_{\text{in}}$, 
the derivatives for $E_\theta(x) < m_{\text{in}}$ and $m_{\text{in}} < E_\theta(x) < m_{\text{out}}$ coincide  as $2\cdot(1-p(i|x)) \cdot (m_{\text{in}} - m_{\text{out}}) < 0$, which means that the minimizer is $> m_{\text{in}}$. Similarly, we see that the minimizer is $< m_{\text{out}}$.

As the second derivative is positive, by solving where the derivative is zero, we find that the optimal negative energy (which is the score which they use for OOD detection) is
\begin{align}
    -E^*_\theta(x) =  p(i|x) \cdot (m_{\text{out}} - m_{\text{in}}) - m_{\text{out}} \ . 
\end{align}
\end{proof}

This is a strictly monotonously increasing function in $p(i|x)$.
By our Theorem 1, we conclude that the negative energy score they obtain from their method is (for the Bayes optimal model that minimizes the loss on the training distributions) equivalent to $p(i|x)$.
\EnergyOptimCor*
Note that while the classifier and energy loss terms \emph{can} be optimized independently, the model is trained for both tasks simultaneously, which means that similar synergies to those we observe in \ref{table:separate_OI_FPR} might contribute to the good performance of this method.
While we observe slightly worse results when transferring the method to using AutoAugment and OpenImages for fine-tuning in \ref{section:experiments}, the empirical evaluation of Energy-Based models in Appendix~\ref{section:80M} confirms that their similar behaviour to the binary discriminator between in- and training-out-distribution indeed holds in practice.

\clearpage
\section{Evaluations with the AUC Detection Metric}\label{sec:auc}
Complementing the evaluations of the FPR@95TPR metric presented in the main paper, Tables~\ref{table:OI_auc} and~\ref{table:sep_OI_AUC} show the AUC (AUROC) values of the same models as in Tables~\ref{table:OI_AA_FPR} and~\ref{table:separate_OI_FPR}, respectively. The observations on the strengths of the different methods and scoring functions discussed in Section~\ref{section:experiments} also hold for the AUC evaluations, which is not surprising since the AUC and FPR@95TPR metrics are closely related.

\begin{table*}[!htbp]
\caption{Accuracy on the in-distribution (CIFAR-10/CIFAR-100) and \textbf{AUC} for various test out-distributions of the different OOD methods with OpenImages as training out-distribution for which the FPRs are shown in Table~\ref{table:OI_AA_FPR}. 
}\label{table:OI_auc}
\setlength\tabcolsep{.5pt} 
\vskip 0.15in
\begin{center}
\begin{small}
\begin{sc}
\makebox[\textwidth][c]{
\begin{tabularx}{1.0\textwidth}{lC|C|CCCCCCC|C}
\multicolumn{10}{c}{in-distribution: CIFAR-10} \\
\midrule
       & &  Mean &     SVHN    &   LSUN    &   Uni &   Smooth     &   C-100 &  80M &   CelA  &    OpenIm     \\ 
Model    & Acc. &  AUC &    AUC    &   AUC    &   AUC &  AUC    &   AUC & AUC &   AUC &   AUC      \\
\midrule
Plain Classi
 &  95.16
 &  91.85    
 &  93.52  
 &  92.94  
 &  97.04  
 &  92.84  
 &  89.61  
 &  91.30  
 &  85.70  
 &  84.81   \\
\midrule
Mahalanobis
 &
 & 91.63
 & 96.34
 & 92.39
 & \best{100.00}
 & 99.82
 & 86.78
 & 89.41
 & 76.67
 & 84.81
\\
\midrule
Energy  ($s_1$)
 & 94.13
 & 93.20
 & 96.85
 & 99.99
 & 99.93
 & 99.59
 & 76.90
 & 79.30
 & 99.87
 & 99.52
\\
\midrule
OE ($s_3$)
 &  95.06
 &  \textbf{97.28}   
 &  98.49  
 &  99.99  
 &  99.99
 &  99.99  
 &  90.03  
 &  92.53  
 &  \best{99.91}  
 &  99.43   \\
\midrule
  \hspace{\subtab} \backgroundSone
 &  
 &  95.02    
 &  \best{99.48}  
 &  \best{100.00}  
 &  99.99 
 &  99.95  
 &  79.64  
 &  86.37  
 &  99.74  
 &  \best{99.97}   \\
  \hspace{\subtab} \backgroundStwo
 &  95.21
 &  97.22    
 &  98.90  
 &  \best{100.00}  
 &  99.99  
 &  99.67  
 &  \best{90.47}  
 &  92.41  
 &  99.11  
 &  99.73   \\
\backgroundSthree
 &  95.21
 &  97.21    
 &  98.87  
 &  \best{100.00}  
 &  99.98  
 &  99.62  
 &  \best{90.47}  
 &  92.41  
 &  99.08  
 &  99.71   \\
\midrule
\hspace{\subtab} Shared BinDisc ($s_1$)
 &  
 &  92.51    
 &  98.77  
 &  \best{100.00}  
 &  99.89  
 &  99.93  
 &  68.34  
 &  80.81  
 &  99.80  
 &  99.95   \\
\hspace{\subtab} Shared Classi 
 &  \textbf{95.28}
 &  95.49    
 &  96.10  
 &  98.60  
 &  99.06  
 &  96.09  
 &  90.09  
 &  92.35  
 &  96.18  
 &  93.57   \\
Shared Combi $s_2$
 &  \textbf{95.28}
 &  97.26    
 &  98.66  
 &  \best{100.00}  
 &  99.93  
 &  99.94  
 &  89.71  
 &  92.84  
 &  99.72  
 &  99.88   \\
Shared Combi $s_3$
 &  \textbf{95.28}
 &  97.26    
 &  98.62  
 &  \best{100.00}  
 &  99.93  
 &  99.94  
 &  89.75  
 &  \best{92.85}  
 &  99.71  
 &  99.88   \\ \toprule

\multicolumn{10}{c}{in-distribution: CIFAR-100} \\
\midrule
       & &  Mean &     SVHN    &   LSUN    &   Uni &   Smooth     &   C-10 & 80M  &    &   OpenIm      \\ 
Model    & Acc. &  AUC &    AUC    &   AUC    &   AUC &  AUC    &   AUC & AUC &    &   AUC      \\ \midrule
Plain Classi
 &  77.16
 &  82.13    
 &  82.33  
 &  79.13  
 &  96.03  
 &  81.36  
 &  \best{76.14}  
 &  77.80  
 &  \phantom{0000}  
 &  75.80   \\
\midrule
Mahalanobis
 &
 & 85.88
 & 88.69
 & 87.56
 & 90.08
 & \best{99.91}
 & 70.78
 & \best{78.26}
 &
 & 77.51
\\
\midrule
Energy  ($s_1$)
 & 73.47
 & 88.67
 & 92.55
 & 99.96
 & 99.40
 & 98.71
 & 71.08
 & 70.33
 & 
 & 99.44
\\ \midrule
OE      ($s_3$)
 &  77.19
 &  90.37    
 &  89.54  
 &  99.98  
 &  99.03  
 &  99.68  
 &  75.95  
 &  78.03  
 &  \phantom{0000}  
 &  99.67   \\
\midrule
  \hspace{\subtab} \backgroundSone
 &   
 &  88.41    
 &  97.38  
 &  \best{99.99}  
 &  99.70  
 &  99.79  
 &  60.51  
 &  73.11  
 &  \phantom{0000}  
 &  \best{99.93}   \\
  \hspace{\subtab} \backgroundStwo
 &  \textbf{77.61}
 &  90.47    
 &  90.50  
 &  \best{99.99}  
 &  99.87  
 &  99.75  
 &  74.88  
 &  77.82  
 &  \phantom{0000}  
 &  99.64   \\
\backgroundSthree
 &  \textbf{77.61}
 &  90.46    
 &  90.46  
 &  \best{99.99}  
 &   \best{99.88}
 &  99.74  
 &  74.88  
 &  77.82  
 &  \phantom{0000}  
 &  99.64   \\
 \midrule
\hspace{\subtab} Shared BinDisc ($s_1$)
 &   
 &  84.62    
 &  \best{97.44}  
 &  \best{99.99}  
 &  99.70  
 &  99.68  
 &  47.82  
 &  63.13  
 &  \phantom{0000}  
 &  \best{99.93}   \\
\hspace{\subtab} Shared Classi 
 &  77.35
 &  82.06    
 &  82.72  
 &  99.05  
 &  72.73  
 &  84.14  
 &  75.76  
 &  77.99  
 &  \phantom{0000}  
 &  93.54   \\
Shared Combi $s_2$
 &  77.35
 &  \textbf{90.74}    
 &  91.74  
 &  \best{99.99}  
 &  99.59  
 &  99.54  
 &  75.50  
 &  78.10  
 &  \phantom{0000}  
 &  99.57   \\
Shared Combi $s_3$
 &  77.35
 &  90.73    
 &  91.69  
 &  \best{99.99}  
 &  99.57  
 &  99.53  
 &  75.50  
 &  78.10  
 &  \phantom{0000}  
 &  99.57   \\
 \bottomrule
\end{tabularx}
}
\end{sc}
\end{small}
\end{center}
\vskip -0.1in
\end{table*}

\begin{table*}[!h]
\caption{
\textbf{AUC} evaluation of the models trained with shared and separate representations from Table~\ref{table:separate_OI_FPR}.
\label{table:sep_OI_AUC}
}
\setlength\tabcolsep{.5pt} 
\vskip 0.15in
\begin{center}
\begin{small}
\begin{sc}
\makebox[\textwidth][c]{
\begin{tabularx}{1.0\textwidth}{lC|C|CCCCCCC|C}
\multicolumn{10}{c}{in-distribution: CIFAR-10} \\
\midrule
       & &  Mean &     SVHN    &   LSUN    &   Uni &   Smooth     &   C-100 &  80M &   CelA  &    OpenIm     \\ 
Model    & Acc. &  AUC &    AUC    &   AUC    &   AUC &  AUC    &   AUC & AUC &   AUC &   AUC      \\ \midrule
 \hspace{\subtab} Plain Classi
 &  95.16
 &  91.85    
 &  93.52  
 &  92.94  
 &  97.04  
 &  92.84  
 &  89.61  
 &  91.30  
 &  85.70  
 &  84.81   \\
  \hspace{\subtab} Separate BinDisc  ($s_1$)
 &   
 &  89.03    
 &  96.42  
 &  \best{100.00}  
 &  99.97  
 &  \best{99.99}  
 &  58.60  
 &  72.36  
 &  95.87  
 &  \best{99.99}   \\
  Separate Combi $s_3$
 & 95.16
 & 96.40
 &  98.16  
 &  \best{100.00}  
 &  \best{99.98}  
 &  \best{99.99}  
 &  89.64  
 &  91.35  
 &  95.70  
 &  99.94   \\
\midrule
\hspace{\subtab} Shared BinDisc ($s_1$)
 &  
 &  92.51    
 &  \best{98.77}  
 &  \best{100.00}  
 &  99.89  
 &  99.93  
 &  68.34  
 &  80.81  
 &  \best{99.80}  
 &  99.95   \\
\hspace{\subtab} Shared Classi 
 &  \textbf{95.28}
 &  95.49    
 &  96.10  
 &  98.60  
 &  99.06  
 &  96.09  
 &  \best{90.09}  
 &  92.35  
 &  96.18  
 &  93.57   \\
Shared Combi $s_3$
 &  \textbf{95.28}
 &  \textbf{97.26}    
 &  98.62  
 &  \best{100.00}  
 &  99.93  
 &  99.94  
 &  89.75  
 &  92.85  
 &  99.71  
 &  99.88   \\ \midrule
  Plain $\otimes$ Sha Disc $s_3$
 & 95.16
 & \textbf{97.26}
 & 98.67
 & \best{100.00}
 & 99.91
 & 99.93
 & 89.78
 & \best{92.95}
 & 99.58
 & 99.87 \\
\toprule \\
\multicolumn{10}{c}{in-distribution: CIFAR-100} \\
\midrule
Model    &  Acc.   &   Mean  &  SVHN    &   LSUN      &   Uni &   Smooth      &   C-10 & OpenIm &   \textcolor{white}{CelA} &   80M      \\ \midrule
 \hspace{\subtab} Plain Classi
 &  77.16
 &  82.13    
 &  82.33  
 &  79.13  
 &  96.03  
 &  81.36  
 &  \best{76.14}  
 &  77.80  
 &  \phantom{0000}
 &  75.80   \\
  \hspace{\subtab} Separate BinDisc ($s_1$)
 &   
 &  84.30    
 &  94.68  
 &  \best{100.00}  
 &  99.81  
 &  99.64  
 &  50.06  
 &  61.62  
 &   
 &  \best{99.98}   \\
  Separate Combi $s_3$
 &  77.16
 &  88.95
 &  84.49  
 &  99.99  
 &  \best{99.88}  
 &  96.06  
 &  75.83  
 &  77.45  
 &    
 &  99.82   \\ \midrule
\hspace{\subtab} Shared BinDisc ($s_1$)
 &   
 &  84.62    
 &  \best{97.44}  
 &  99.99  
 &  99.70  
 &  \best{99.68}  
 &  47.82  
 &  63.13  
 &  \phantom{0000}
 &  99.93   \\
\hspace{\subtab} Shared Classi 
 &  \textbf{77.35}
 &  82.06    
 &  82.72  
 &  99.05  
 &  72.73  
 &  84.14  
 &  75.76  
 &  77.99  
 &  \phantom{0000}
 &  93.54   \\
Shared Combi $s_3$
 &  \textbf{77.35}
 &  90.73    
 &  91.69  
 &  99.99  
 &  99.57  
 &  99.53  
 &  75.50  
 &  78.10  
 &  \phantom{0000}
 &  99.57   \\ \midrule
  Plain $\otimes$ Sha Disc $s_3$
 & 77.16
 & \textbf{90.79}
 & 91.99
 & 99.99
 & 99.80
 & 98.85
 & 75.87
 & \best{78.23}
 &
 & 99.50 \\
 \bottomrule
\end{tabularx}
}
\end{sc}
\end{small}
\end{center}
\vskip -0.1in
\end{table*}

\clearpage

\section{Discussion of the Choice of OpenImages and  Experimental Results with 80 Million Tiny Images as Training Out-Distribution}\label{section:80M}
Since the 80 Million Tiny Images~\citep{torralba200880} dataset was retracted by the authors -- their statement can be read at \url{http://groups.csail.mit.edu/vision/TinyImages/}\footnote{Archived statement: \url{https://web.archive.org/web/20210415160225/http://groups.csail.mit.edu/vision/TinyImages/}} --
as a reaction to~\citet{Birhane_2021_WACV} exposing the presence of offensive and prejudicial images in the dataset, a good "surrogate surrogate" training out-distribution for the CIFAR-10/CIFAR-100 in-distribution has to our knowledge not yet been established.

Our experience confirms the assessment which the authors of \cite{HenMazDie2019} make in their discussion section 5: the surrogate training out-distribution should consist mainly of natural images, should have a high semantic diversity, and the number of samples in the dataset should be large.
We also observe that it is vital that there are no easy to detect details that separate the training in- and out-distributions from each other, as for example using a different resizing interpolation method would lead to 'overfitting' on such features with 100\% train accuracy of the Binary Discriminator and near zero loss (apart from that of the in-distribution classifier) for OE and BGC, with no generalization to the test OOD datasets.
OpenImages fulfills the mentioned criteria, and in our judgement does not contain ethically problematic images.
It is, however, to be noted that CIFAR was sourced as a subset of 80M Tiny Images, see \citet{krizhevsky2009learning} and \url{https://www.cs.toronto.edu/~kriz/cifar.html}, which explains the somewhat better results with 80M as training OOD dataset that we observe below in this section.
Our results on the theoretical relations between the different methods and scoring functions do not depend on the choice of the training OOD dataset , and give reason to expect that our experimental confirmations of these relations will also hold for even better suited surrogate OOD datasets that we hope will be found in future works.

In the main paper, we employ OpenImages~\citep{OpenImages2} as a replacement, and for completeness and comparison to the originals of OE, Energy-Based OOD detection and NTOM/ATOM, below we show and discuss the results obtained with the retracted 80M dataset which is commonly used in OOD detection literature.

The training procedure of our methods again follows that of OE, and for the 80M experiments we do not add AutoAugment, in order to stay as close as possible to the original.
For the established methods we compare to, we use the original weights of their published models; the Plain and Outlier Exposure (OE) models were retrieved from the repository of the authors of OE (\url{https://github.com/hendrycks/outlier-exposure}), and for NTOM and ATOM~\citep{chen2020informative-outlier-matters} we use their code \url{https://github.com/jfc43/informative-outlier-mining} to evaluate their DenseNet models (their best models). We finetune the Mahalanobis detector on 80M with the same procedure as described above; the optimal input noise level for 80M is 0.0005 for both CIFAR in-distributions.
We evaluate the Energy-Based models fine-tuned on 80M which the authors of~\citet{liu2020energy} provide at \url{https://github.com/wetliu/energy_ood} with their evaluation code.

For CIFAR-10, as already seen with OpenImages as training out-distribution, the OOD detection performance of \textsc{Shared Classi} 
is much better than that of the plain classifier.
.In fact \textsc{Shared BinDisc} has already very good OOD performance with a mean FPR of 7.56 and mean AUC of 97.90  which is only improved by considering scoring function $s_2/s_3$ in the combination of  \textsc{Shared BinDisc} and \textsc{Shared Classi}
which yields the best performance in classification accuracy and mean AUC.

The classifier with 80M as background class (\textsc{BGC}) works very well for all scoring functions and reaches SOTA performance similar to/better than OE. Both \textsc{BGC}/\textsc{Shared Combi} with $s_2/s_3$ perform particularly well on the challenging close out-distributions CIFAR-100 and OpenImages (which are the data sets where NTOM/ATOM perform significantly worse).
However, as already observed with OpenImages as training out-distribution the differences of the methods are minor both in terms of classification accuracy.

The results for CIFAR-100 are again qualitatively similar to those for CIFAR-10. NTOM/ATOM now show worse mean AUC results which are mainly due to worse results for the close out-distributions CIFAR-10 and OpenImages, but better mean FPR@95\%TPR, which is explained by their excellent detection of LSUN Classroom, where the other methods work quite well in terms of AUC but still make quite many errors at the 95\%TPR threshold. OE again achieves comparable OOD results to the other evaluated methods. 
As the theoretical considerations we presented in Appendix~\ref{sec:energy} suggest, the Energy-Based OOD detector achieves good performance that is comparable that to the other methods.
Our \textsc{Shared Combi} $s_2/s_3$ performs best in terms of OOD performance and test accuracy but again differences are minor.

The conclusions are similar to those drawn for OpenImages in the main paper.
Comparing the results, it is clear that 80M still works somewhat better than OpenImages, so for the CIFAR in-distributions, the search for an ethically acceptable replacement of 80M that allows for equal or improved results continues. 
The consistent similarities between the examined methods over the different datasets suggest that the methods and scoring functions would be similarly viable with such an alternative training OOD dataset.

\begin{table*}[!htbp]
\caption{Accuracy on the in-distribution (CIFAR-10/CIFAR-100) and 
\textbf{FPR@95\%TPR} for various test out-distributions of different OOD methods with \textbf{80 Million Tiny Images} as training out-distribution (shown results for test set of 80M are not used for computing the mean FPR). Lower false positive rate is better. CelebA makes no sense as test out-distribution for CIFAR-100 as it contains man/woman as classes. \textsc{Plain}, \textsc{OE}, \textsc{BGC} and \textsc{Shared} have been trained using the same architecture and training parameters/schedule. $s_1,s_2,s_3$ are the scoring functions introduced in Section \ref{section:OE}. Our binary discriminator (\textsc{BinDisc}) resp. the combination with the shared classifier (\textsc{Shared Combi}) performs similar/better than Outlier Exposure \citep{HenMazDie2019}.
}\label{table:fpr_80M}
\setlength\tabcolsep{.5pt} 
\vskip 0.15in
\begin{center}
\begin{small}
\begin{sc}
\makebox[\textwidth][c]{
\begin{tabularx}{1.0\textwidth}{lC|C|CCCCCCC|C}
\multicolumn{10}{c}{in-distribution: CIFAR-10} \\
\midrule
       & &  Mean &     SVHN    &   LSUN    &   Uni &   Smooth     &   C-100 & OpenIm  &   CelA  &   80M      \\ 
Model    & Acc. &  FPR &    FPR    &   FPR    &   FPR &  FPR    &   FPR & FPR &   FPR &   FPR      \\ \midrule
Plain Classi
 &  94.84
 &  64.62  
 &  48.33  
 &  52.67  
 &  75.69  
 &  62.58  
 &  62.91  
 &  66.38  
 &  83.79  
 &  60.53 \\ \midrule
 Mahalanobis
 &
 & 41.15
 & 40.19
 & 50.00
 & \best{\phantom{0}0.00}
 & 0.17
 & 58.66
 & 58.36 
 & 80.69
 & 53.42
 \\ \midrule
 Energy ($s_1$)
 & 95.22
 & 9.01
 & 1.58
 & 2.00
 & \best{\phantom{0}0.00}
 & \best{\phantom{0}0.00}
 & 30.03
 & 28.26
 & 1.19
 & 8.21
 \\ \midrule
NTOM
 &  95.42 
 &  \phantom{0}8.21 
 &  \phantom{0}1.06 
 &  \best{\phantom{0}0.33 }
 &  \best{\phantom{0}0.00}  
 &  \best{\phantom{0}0.00}  
 &  29.61   
 &  26.15
 &  \phantom{0}0.30  
 &  \phantom{0}4.90   \\
 ATOM
 &  95.20 
 &  \phantom{0}7.76  
 &  \phantom{0}0.69
 &  \best{\phantom{0}0.33 }
 &  \best{\phantom{0}0.00}  
 &  \best{\phantom{0}0.00}  
 &  27.80  
 &  25.26  
 &  \phantom{0}0.25  
 &  \phantom{0}4.44    \\ \midrule
 OE  ($s_3$)
 &  95.74 
 &  \phantom{0}8.27  
 &  \phantom{0}1.96  
 &  \phantom{0}2.00  
 &  \best{\phantom{0}0.00}  
 &  \phantom{0}0.06  
 &  26.12  
 &  27.07  
 &  \phantom{0}0.71  
 &  \phantom{0}5.96   \\
 \midrule
  \hspace{\subtab} \backgroundSone
 &   
 &  \phantom{0}7.47    
 &  \phantom{0}0.83  
 &  \phantom{0}1.33  
 &  \best{\phantom{0}0.00}  
 &  \best{\phantom{0}0.00}  
 &  24.75  
 &  \best{25.19}  
 &  \best{\phantom{0}0.19}  
 &  \best{\phantom{0}4.43}   \\
  \hspace{\subtab} \backgroundStwo
 &  95.63 
 &  \phantom{0}\textbf{7.42} 
 &  \phantom{0}0.98  
 &  \phantom{0}1.33  
 &  \best{\phantom{0}0.00}  
 &  \best{\phantom{0}0.00}  
 &  24.13  
 &  25.33  
 &  \phantom{0}0.20  
 &  \phantom{0}4.95   \\
\backgroundSthree
 &  95.63 
 &  \phantom{0}7.49    
 &  \phantom{0}1.05  
 &  \phantom{0}1.33  
 &  \best{\phantom{0}0.00}  
 &  \best{\phantom{0}0.00}  
 &  24.26  
 &  25.57  
 &  \phantom{0}0.21  
 &  \phantom{0}4.82   \\
 \midrule
\hspace{\subtab} Shared BinDisc ($s_1$)
 &   
 &  \phantom{0}7.56  
 &  \phantom{0}\best{0.67}  
 &  \phantom{0}1.67  
 &  \best{\phantom{0}0.00}  
 &  \best{\phantom{0}0.00}  
 &  24.70  
 &  25.58  
 &  \phantom{0}0.31  
 &  \phantom{0}4.57   \\
\hspace{\subtab} Shared Classi
 &  \textbf{96.08}
 &  15.71  
 &  \phantom{0}6.29  
 &  13.00  
 &  \phantom{0}0.07  
 &  \phantom{0}0.13  
 &  37.07  
 &  40.48  
 &  12.95  
 &  19.47   \\
Shared Combi $s_2$
 &  \textbf{96.08} 
 &  \phantom{0}7.47  
 &  \phantom{0}0.71  
 &  \phantom{0}1.33  
 &  \best{\phantom{0}0.00}  
 &  \best{\phantom{0}0.00}  
 &  24.15  
 &  25.72  
 &  \phantom{0}0.35  
 &  \phantom{0}4.79   \\
Shared Combi $s_3$
 &  \textbf{96.08 }
 &  \phantom{0}7.44  
 &  \phantom{0}0.73  
 &  \phantom{0}1.33  
 &  \best{\phantom{0}0.00}  
 &  \best{\phantom{0}0.00}  
 &  \best{23.95}  
 &  25.70  
 &  \phantom{0}0.35  
 &  \phantom{0}4.84   \\
\toprule
\\
\multicolumn{10}{c}{in-distribution: CIFAR-100} \\
\midrule
       & &  Mean &     SVHN    &   LSUN    &   Uni &   Smooth     &   C-10 & OpenIm  &    &   80M      \\ 
Model    & Acc. &  FPR &    FPR    &   FPR    &   FPR &  FPR    &   FPR & FPR &    &   FPR      \\ 
\midrule
Plain Classi
 &  75.96
 &  82.26  
 &  84.33  
 &  80.00  
 &  98.99  
 &  65.81  
 &  81.97  
 &  82.47  
 &   
 &  80.17   \\ \midrule
 Mahalanobis
 &
 & 47.89
 & 64.58
 & 63.67
 & \best{\phantom{0}0.00}
 & 2.77
 & 81.39
 & 74.93
 &
 & 69.79
 \\ \midrule
 Energy ($s_1$)
 & 75.70
 & 32.95
 & \best{20.61}
 & 16.67
 & 4.23
 & 2.90
 & 84.27
 & 69.00
 & 
 & 42.18
 \\ \midrule
 NTOM
 &  74.88 
 &  \textbf{32.63}
 &  24.67
 &  10.00 
 &  \best{\phantom{0}0.00}  
 &  \best{\phantom{0}0.00} 
 &  90.58 
 &  70.52
 &    
 &   40.78  \\
 ATOM
 &  75.06
 &  34.60 
 &  37.78  
 &  \best{8.67}  
 &  \best{\phantom{0}0.00}  
 &  \phantom{0}0.30  
 &  89.80  
 &  71.02  
 &       
 &  \best{40.29}   \\ \midrule
 OE  ($s_3$)
 &  \textbf{76.73 }
 &  34.89  
 &  34.41  
 &  24.00  
 &  \phantom{0}1.10  
 &  \phantom{0}4.96  
 &  79.77  
 &  \best{65.09}  
 &   
 &  45.59   \\
 \midrule
  \hspace{\subtab} \backgroundSone
 &   
 &  34.79    
 &  35.73  
 &  23.00  
 &  \best{\phantom{0}0.00}  
 &  \phantom{0}0.07  
 &  81.61  
 &  68.31  
 &  \phantom{0000}
 &  45.76   \\
  \hspace{\subtab} \backgroundStwo
 &  75.82 
 &  35.86  
 &  40.36  
 &  26.67  
 &  \best{\phantom{0}0.00}  
 &  \phantom{0}0.45  
 &  \best{79.50}  
 &  68.21  
 &   
 &  47.72   \\
\backgroundSthree
 &  75.82
 &  35.91    
 &  40.53  
 &  26.67  
 &  \best{\phantom{0}0.00}  
 &  \phantom{0}0.42  
 &  79.54  
 &  68.28  
 &  \phantom{0000} 
 &  47.46   \\
 \midrule
\hspace{\subtab} Shared BinDisc ($s_1$)
 &  
 &  32.74  
 &  25.16  
 &  22.00  
 &  \best{\phantom{0}0.00}  
 &  \best{\phantom{0}0.00}  
 &  82.47  
 &  66.83  
 &   
 &  44.50   \\
\hspace{\subtab} Shared Classi
 &  76.52
 &  44.86  
 &  56.96  
 &  57.00  
 &  \phantom{0}0.06  
 &  \phantom{0}0.22  
 &  79.55  
 &  75.37  
 &   
 &  65.50   \\
Shared Combi $s_2$
 &  76.52 
 &  32.71  
 &  25.70  
 &  23.67  
 &  \best{\phantom{0}0.00}  
 &  \best{\phantom{0}0.00}  
 &  80.95  
 &  65.93  
 &   
 &  44.87   \\
Shared Combi $s_3$
 &  76.52 
 &  32.72  
 &  25.79  
 &  23.67  
 &  \best{\phantom{0}0.00}  
 &  \best{\phantom{0}0.00}  
 &  80.90  
 &  65.98  
 &   
 &  44.96   \\
 \bottomrule
\end{tabularx}
}
\end{sc}
\end{small}
\end{center}
\vskip -0.1in
\end{table*}

\begin{table*}[!htbp]
\caption{Accuracy on the in-distribution (CIFAR-10/CIFAR-100) and \textbf{AUC} (AUROC) for various test out-distributions of different OOD methods with \textbf{80 Million Tiny Images}  as training out-distribution (shown results for test set of 80M are not used for computing the mean AUC).
The relative performance of the different methods measured in AUC is similar to what we observed with the FPR@95\%TPR measure in Table~\ref{table:fpr_80M}.
}\label{table:80M_sota}
\setlength\tabcolsep{.5pt} 
\vskip 0.15in
\begin{center}
\begin{small}
\begin{sc}
\makebox[\textwidth][c]{
\begin{tabularx}{1.0\textwidth}{lC|C|CCCCCCC|C}
\multicolumn{10}{c}{in-distribution: CIFAR-10} \\
\midrule
       & &  Mean &     SVHN    &   LSUN    &   Uni &   Smooth     &   C-100 & OpenIm  &   CelA  &   80M      \\ 
Model    & Acc. &  AUC &    AUC    &   AUC    &   AUC &  AUC    &   AUC & AUC &   AUC &   AUC      \\ \midrule
 Plain Classi
 &  94.84
 &  85.75
 &  91.91  
 &  91.63  
 &  87.69  
 &  78.27  
 &  87.83  
 &  83.23  
 &  79.43  
 &  88.01 \\ \midrule
 Mahalanobis
 &
 & 91.12
 & 94.34 
 & 91.98
 & \best{100.00}
 & 99.51
 & 88.08
 & 84.92
 & 79.17
 & 89.65 \\ \midrule
 Energy ($s_1$)
 & 95.22
 & 97.32
 & 99.26
 & 99.49
 & 99.00
 & 99.40
 & 93.81
 & 90.73
 & 99.57
 & 97.71
 \\ \midrule
 NTOM
 &  95.42
 &  97.32
 &  99.59  
 &  \best{99.79}  
 &  99.97  
 &  99.84  
 &  92.19  
 &  89.96  
 &  99.89  
 &  98.72
 \\
 ATOM
 & 95.20
 & 97.42
 & 99.63   
 & 99.76   
 & 99.93   
 & 99.60   
 & 92.89   
 & 90.30   
 & 99.85   
 & 98.55
 \\ \midrule
 OE  ($s_3$)
 &  95.74
 &  97.64
 &  99.48  
 &  99.48  
 &  99.46  
 &  99.64  
 &  94.80  
 &  90.91  
 &  99.71  
 &  98.50\\
\midrule
  \hspace{\subtab} \backgroundSone
 &  
 &  97.94    
 &  99.64  
 &  99.58  
 &  99.96  
 &  \best{99.98}  
 &  94.84  
 &  \best{91.65}  
 &  \best{99.92}  
 &  \best{98.78}   \\
  \hspace{\subtab} \backgroundStwo
 &  95.63
 &  97.95  
 &  99.60  
 &  99.52  
 &  99.97  
 &  \best{99.98}  
 &  95.03  
 &  \best{91.65}  
 &  \best{99.92}  
 &  98.65   \\
\backgroundSthree
 &  95.63
 &  97.95    
 &  99.58  
 &  99.52  
 &  99.97  
 &  \best{99.98}  
 &  95.04  
 &  91.64  
 &  \best{99.92}  
 &  98.70   \\
 \midrule
 \hspace{\subtab} Shared BinDisc ($s_1$)
 &  
 &  97.90  
 &  \best{99.74}  
 &  99.60  
 &  99.94  
 &  99.96  
 &  94.75  
 &  91.42  
 &  99.87  
 &  98.77   \\
\hspace{\subtab} Shared Classi
 &  \textbf{96.08}
 &  96.57  
 &  98.77  
 &  97.78  
 &  99.87  
 &  99.68  
 &  93.40  
 &  88.70  
 &  97.80  
 &  96.42   \\
Shared \combiStwo
 &  \textbf{96.08} 
 &  \textbf{97.96 } 
 &  99.73  
 &  99.58  
 &  99.95  
 &  99.96  
 &  95.13  
 &  91.48  
 &  99.85  
 &  98.73   \\
Shared \combiSthree
 &  \textbf{96.08} 
 &  \textbf{97.96}  
 &  99.73  
 &  99.58  
 &  99.95  
 &  99.96  
 &  \best{95.14}  
 &  91.48  
 &  99.85  
 &  98.73   \\
\toprule
\\
\multicolumn{10}{c}{in-distribution: CIFAR-100} \\
\midrule
       & &  Mean &     SVHN    &   LSUN    &   Uni &   Smooth     &   C-10 & OpenIm  &    &   80M      \\ 
Model    & Acc. &  AUC &    AUC    &   AUC    &   AUC &  AUC    &   AUC & AUC &    &   AUC      \\ 
\midrule
Plain Classi
 &  75.96
 &  77.48
 &  71.38  
 &  76.89  
 &  78.14  
 &  88.36  
 &  75.33  
 &  74.60 
 &  \textcolor{white}{65.86} 
 &  75.92 
 \\ \midrule
 Mahalanobis
 &
 & 87.75
 & 85.96
 & 87.20
 & \best{100.00}
 & 99.16
 & 75.45
 & 78.74
 &  
 & 79.76 \\ \midrule
 Energy ($s_1$)
 & 75.70
 & 91.67
 & \best{96.54}
 & 96.69
 & 97.91
 & 98.92
 & 77.39
 & 82.57
 & 
 & \best{91.16}
 \\ \midrule
NTOM
 & 74.88
 & 88.49
 & 96.20   
 & 97.31   
 & 99.79   
 & 99.94   
 & 62.44   
 & 75.24
 & \textcolor{white}{57.64}   
 & 88.41
 \\
 ATOM
 & 75.06
 & 88.02
 & 93.68   
 & \best{97.51}   
 & 99.98   
 & 98.46   
 & 63.47   
 & 75.02
 & \textcolor{white}{58.83}   
 & 88.44
 \\ \midrule
 OE  ($s_3$)
 &  \textbf{76.73}
 &  91.72
 &  94.06  
 &  95.58  
 &  99.06  
 &  98.84  
 &  \best{79.53}  
 &  83.31 
 &  \textcolor{white}{71.45} 
 &  88.43 \\
\midrule
  \hspace{\subtab} \backgroundSone
 &  
 &  \textbf{92.04}    
 &  94.42  
 &  95.47  
 &  99.99  
 &  99.73  
 &  79.15  
 & \best{ 83.46}  
 &  \phantom{0000}
 &  89.19   \\
\hspace{\subtab} \backgroundStwo
 &  75.82
 &  91.54  
 &  93.32  
 &  94.64  
 &  99.95  
 &  99.63  
 &  79.29  
 &  82.41  
 &   
 &  88.11   \\
\backgroundSthree
 &  75.82
 &  91.53    
 &  93.30  
 &  94.62  
 &  99.94  
 &  99.62  
 &  79.29  
 &  82.40  
 &  \phantom{0000}  
 &  88.23   \\
 \midrule
 \hspace{\subtab} Shared BinDisc ($s_1$)
 & 
 &  91.84  
 &  95.90  
 &  95.69  
 &  99.79  
 &  99.94  
 &  76.56  
 &  83.19 
 &    
 &  89.25   \\
\hspace{\subtab} Shared Classi
 &  76.52
 &  88.16  
 &  86.28  
 &  87.61  
 &  99.97  
 &  99.90  
 &  77.00  
 &  78.23 
 &    
 &  81.72   \\
Shared \combiStwo
 &  76.52 
 &  92.03  
 &  95.50  
 &  95.43  
 &  99.96  
 &  \best{99.98}  
 &  78.46  
 &  82.86 
 &    
 &  88.69   \\
Shared \combiSthree
 &  76.52 
 &  92.03  
 &  95.49  
 &  95.42  
 &  99.97  
 &  \best{99.98 } 
 &  78.46  
 &  82.85 
 &    
 &  88.67   \\

 \bottomrule
\end{tabularx}
}
\end{sc}
\end{small}
\end{center}
\vskip -0.1in
\end{table*}

\clearpage
\section{Experiments with Restricted ImageNet as In-Distribution}\label{sec:RImageNet}
In addition to the results for CIFAR-10 and CIFAR-100 shown in the main paper, here we provide results for Restricted ImageNet. Restricted ImageNet, introduced by \citet{tsipras2018robustness}, consists of 9 classes, where each individual class is a union of multiple ImageNet~\citep{imagenet_cvpr09} classes, for example the Restricted ImageNet class 'dog' contains all dog breeds from ImageNet. As Restricted ImageNet only contains animal classes, the union over all its classes does not cover the entire ILSVRC2012 dataset~\citep{ILSVRC15},  which allows us to use the remaining ILSVRC2012 classes as training out-distribution. Like we did for the CIFAR experiments, we train a plain classifier, an Outlier Exposure model, a background class model and a shared discriminator/classifier and evaluate them with the different scoring functions. The model is a ResNet50 and we use random cropping and flipping as data augmentation during training. The results in terms of FPR@95\%TPR and AUC can be found in Table~\ref{table:rImgNet}.

Once again, we see that \textsc{Shared Classi} has relatively good OOD detection performance and clearly beats the plain classifier from standard training.
Again, we see that Outlier Exposure~\cite{HenMazDie2019}, training with background class and shared training of classifier and binary discriminator perform similarly. 
In terms of accuracy, the shared model benefits most from the added unlabelled data compared to plain training.
At 95\% TPR, the sharedly trained binary discriminator and its combinations \textsc{Shared \combiStwo} and   \textsc{Shared \combiStwo} detect flower images significantly better than the other approaches which results in the best mean FPR@95\%TPR compared to the other methods, while in terms of AUC, OE has a slight advantage.

\begin{table*}[!htbp]
\caption{\textcolor{update}{Out-of-distribution detection evaluation for various ResNet50 models trained on \textbf{Restricted ImageNet} in terms of AUC and FPR@95\%TPR. The last column (\textsc{NotRIN}) refers to the remaining classes from the ILSVRC2012 validation split that are not part of Restricted ImageNet and that were used as the training out-distribution;it does not contribute to the mean test FPR/AUC. As all models use the train split of NotRIN as training-out distribution.} }\label{table:rImgNet}
\setlength\tabcolsep { .30pt }
\vskip 0.15in
\begin{center}
\begin{small}
\begin{sc}
\makebox[\textwidth][c]{
\begin{tabularx}{1.0\textwidth}{lC|C| *{ 5 }{C} |C}
\multicolumn{ 9 }{c}{in-distribution: Restricted ImageNet}\\[3mm]
\multicolumn{ 9 }{c}{FPR@95\%TPR}\\
\midrule
	& & Mean & Flowers & FGVC & Cars & Smooth & Uniform & NotRIN\\
Model & Acc. & FPR & FPR & FPR & FPR & FPR & FPR & FPR \\ \midrule
Plain Classi        &  96.34 & 36.71 &   60.11 &  50.23 &  73.20 &   \best{0.00} &     \best{0.00}&      50.37 \\ \midrule
OE ($s_3$)           &  97.10 & 4.26 &   21.06 &   \best{0.18} &   0.04 &   \best{0.00} &     \best{0.00}&       6.91 \\ \midrule
\hspace{\subtab} \backgroundSone           &   & 4.22
 &   20.74 &   0.33 &   \best{0.01} &   \best{0.00} &     \best{0.00}&       6.46 \\ 
\hspace{\subtab} \backgroundStwo        &  97.50 & 4.77 &   23.43 &   0.39 &   0.02 &   \best{0.00} &     \best{0.00}&       6.02 \\
\backgroundSthree         &  97.50 & 4.73 &   23.26 &   0.36 &   \best{0.01} &   \best{0.00} &     \best{0.00}&       6.13 \\ \midrule
\hspace{\subtab} Shared BinDisc ($s_1$)  &   & 2.73 &   \best{10.10} &   0.24 &   0.04 &   3.26 &     \best{0.00}&       5.93 \\
\hspace{\subtab} Shared Classi &  \textbf{97.59} & 17.51 &   50.54 &  17.79 &  19.21 &   \best{0.00} &     \best{0.00}&      21.99 \\
Shared \combiStwo    &  \textbf{97.59} & \textbf{2.55} &   12.28 &   0.45 &   0.04 &   \best{0.00} &     \best{0.00}&       5.58 \\
Shared \combiSthree    &  \textbf{97.59} & 2.62 &   12.60 &   0.45 &   0.04 &   \best{0.00} &     \best{0.00}&       \best{5.63} \\
\toprule
\\
\multicolumn{ 9 }{c}{AUC}\\
\midrule
	& & Mean & Flowers & FGVC & Cars & Smooth & Uniform & NotRIN\\
Model & Acc. & AUC & AUC & AUC & AUC & AUC & AUC & AUC \\
\midrule
Plain Classi        &  96.34 & 94.96 &  91.65 &  92.67 &  92.46 &  98.74 &   99.26 &      92.38 \\ \midrule
OE ($s_3$)           &  97.10 & \textbf{98.76} &  96.65 &  99.75 &  \best{99.85} &  97.95 &   \best{99.58} &      98.46 \\ \midrule
\hspace{\subtab} \backgroundSone           &        & 98.61 &  96.64 &  \best{99.86} &  \best{99.97} &  97.77 &   98.80 &      98.67 \\ 
\hspace{\subtab} \backgroundStwo       &  97.50 & 98.66 &  96.39 &  99.83 &  99.96 &  98.18 &   98.94 &      98.69 \\
    \backgroundSthree        &  97.50 & 98.66 &  96.43 &  99.83 &  99.96 &  98.14 &   98.93 &      98.68 \\ \midrule
\hspace{\subtab} Shared BinDisc ($s_1$)  &        & 98.26 &  \best{97.62} &  99.83 &  99.94 &  96.13 &   97.78 &      98.71 \\
\hspace{\subtab} Shared Classi &  \textbf{97.59} & 96.93 &  93.40 &  96.58 &  96.53 &  \best{99.48} &   98.66 &      96.10 \\
Shared \combiStwo    &  \textbf{97.59} & 98.54 &  97.41 &  99.80 &  99.93 &  97.37 &   98.18 &      \best{98.72} \\
Shared \combiSthree   &  \textbf{97.59} & 98.58 &  97.36 &  99.79 &  99.92 &  97.61 &   98.22 &      98.71 \\
\bottomrule
\end{tabularx}
}
\end{sc}
\end{small}
\end{center}
\vskip - 0.1in
\end{table*}

\clearpage
\section{Experiments with SVHN as Training Out-Distribution}\label{section:SVHN}
In order to examine the effect of a training out-distribution that relatively far away from the in-distribution, we show experiments with SVHN as out-distribution in Tables~\ref{table:svhn_fpr} and~\ref{table:svhn_auc}. The OOD detection performance of these methods is much worse than with the closer OpenImages and 80M training out-distributions.
In most cases, combinations $s_2$ and $s_3$/OE which implicitly or explicitly use the classifier confidence lead to better OOD detection than using the binary discriminator/$s_1$.
The inconsistent behaviour over the different test out.distributions of the methods that use SVHN as training out-distribution can be explained by the easiness of the discrimination task, which manifests itself in the fact that \textsc{BGC}$s_1$ and \textsc{Shared BinDisc} reach perfect FPR and AUC metrics.
This indicates a form of overfitting to this specific out-distribution, without consistent generalization to unseen distributions which do not have characteristic features that are similar to those appearing in SVHN images.

We also do not observe a beneficial effect on test accuracy for the methods that use SVHN compared to plain, which is to be expected as the representations learned from SVHN are hardly useful for the in-distribution task.

\begin{table*}[!htbp]
\caption{\textbf{ FPR@95\%TPR} for CIFAR-10/CIFAR-100 as in-distribution with \textbf{SVHN} as training out-distribution.
}\label{table:svhn_fpr}
\setlength\tabcolsep{.5pt} 
\vskip 0.15in
\begin{center}
\begin{small}
\begin{sc}
\makebox[\textwidth][c]{
\begin{tabularx}{1.0\textwidth}{lC|C|CCCCCCC|C}
\multicolumn{10}{c}{in-distribution: CIFAR-10} \\
\midrule
       & &  Mean    &   LSUN    &   Uni &   Smooth     &   C-100 &  80M & OpenIm &  CelA  &  SVHN       \\ 
Model    & Acc. &  FPR &    FPR    &   FPR    &   FPR &  FPR    &   FPR & FPR &   FPR &   FPR      \\ \midrule
Plain Classi
 &  \textbf{94.84}
 &  64.62  
 &  \best{48.33}
 &  52.67  
 &  75.69  
 &  62.58  
 &  62.91  
 &  66.38  
 &  83.79  
 &  60.53 \\ \midrule
OE  ($s_3$)
 &  94.80
 &  63.73    
 &  54.33  
 &  92.01  
 &  41.27  
 &  60.55  
 &  55.91  
 &  65.52  
 &  76.51  
 &  \phantom{0}0.03   \\ \midrule
\hspace{\subtab} \backgroundSone
 &  
 &  54.31    
 &  53.00  
 &  100.00  
 &  \best{\phantom{0}0.51}
 &  \best{55.16}  
 &  \best{39.28}  
 &  \best{64.36}  
 &  \best{67.87}  
 &  \best{\phantom{0}0.00}  \\
  \hspace{\subtab} \backgroundStwo
 &  94.51
 &  62.98    
 &  58.33  
 &  98.53  
 &  18.73  
 &  62.27  
 &  57.47  
 &  67.35  
 &  78.16  
 &  \phantom{0}0.02   \\
\backgroundSthree
 &  94.51
 &  63.08    
 &  58.33  
 &  98.52  
 &  19.18  
 &  62.34  
 &  57.63  
 &  67.39  
 &  78.16  
 &  \phantom{0}0.02   \\ \midrule
\hspace{\subtab} Shared BinDisc
 &  
 &  72.95    
 &  95.67  
 &  100.00  
 &  \phantom{0}5.81  
 &  77.04  
 &  66.75  
 &  85.17  
 &  80.22  
 &  \best{\phantom{0}0.00}  \\
\hspace{\subtab} Shared Classi 
 &  94.71
 &  48.39    
 &  54.33  
 &  \best{\phantom{0}0.00} 
 &  17.99  
 &  62.34  
 &  57.37  
 &  67.05  
 &  79.64  
 &  \phantom{0}0.57   \\
Shared Combi $s_2$
 &  94.71
 &  \textbf{46.27}    
 &  55.00  
 &  \best{\phantom{0}0.00} 
 &  \phantom{0}6.74  
 &  60.84  
 &  55.15  
 &  66.50  
 &  79.65  
 &  \phantom{0}0.02   \\
Shared Combi $s_3$
 &  94.71
 &  46.31    
 &  55.00  
 &  \best{\phantom{0}0.00} 
 &  \phantom{0}6.84  
 &  60.90  
 &  55.26  
 &  66.53  
 &  79.65  
 &  \phantom{0}0.02   \\
\toprule
\\
\multicolumn{10}{c}{in-distribution: CIFAR-100} \\
\midrule
       & &  Mean &   LSUN    &   Uni &   Smooth     &   C-10 & 80M  &  OpenIm    & & SVHN     \\ 
Model    & Acc. &  FPR &    FPR    &   FPR    &   FPR &  FPR    &   FPR & FPR &    &   FPR      \\ 
\midrule
Plain Classi
 &  75.96
 &  82.26  
 &  84.33  
 &  \best{80.00} 
 &  98.99  
 &  65.81  
 &  81.97  
 &  82.47  
 &   
 &  80.17   \\ \midrule
OE  ($s_3$)
 &  75.78
 &  \textbf{73.89}    
 &  81.33  
 &  99.47  
 &  15.93  
 &  83.32  
 &  80.30  
 &  \best{83.01}  
 &   \phantom{0000}  
 &  \phantom{0}0.04   \\ \midrule
  \hspace{\subtab} \backgroundSone
 &  
 &  77.16    
 &  96.33  
 &  100.00  
 &  \best{\phantom{0}3.35}  
 &  91.05  
 &  80.55  
 &  91.68  
 &   \phantom{0000}  
 &  \best{\phantom{0}0.00}  \\
  \hspace{\subtab} \backgroundStwo
 &  75.21
 &  74.53    
 &  80.00  
 &  98.49  
 &  21.49  
 &  83.14  
 &  80.52  
 &  83.54  
 &   \phantom{0000}  
 &  \phantom{0}0.07   \\
\backgroundSthree
 &  75.21
 &  74.53    
 &  80.00  
 &  98.49  
 &  21.50  
 &  83.14  
 &  80.53  
 &  83.54  
 &   \phantom{0000}  
 &  \phantom{0}0.07   \\  \midrule
\hspace{\subtab} Shared BinDisc
 &  
 &  82.08    
 &  99.33  
 &  100.00  
 &  10.04  
 &  95.00  
 &  91.21  
 &  96.89  
 &   \phantom{0000}  
 &  \best{\phantom{0}0.00}  \\
\hspace{\subtab} Shared Classi 
 &  \textbf{75.97}
 &  83.46    
 &  \best{78.67 } 
 &  100.00  
 &  76.67  
 &  \best{82.36}  
 &  79.99  
 &  83.08  
 &   \phantom{0000}  
 &  \phantom{0}0.44   \\
Shared Combi $s_2$
 &  \textbf{75.97}
 &  78.66    
 &  \best{78.67}  
 &  100.00  
 &  47.73  
 &  82.44  
 &  79.93  
 &  83.20  
 &   \phantom{0000}  
 &  \phantom{0}0.07   \\
Shared Combi $s_3$
 &  \textbf{75.97}
 &  78.66    
 &  \best{78.67}
 &  100.00  
 &  47.78  
 &  82.43  
 &  \best{79.92}  
 &  83.17  
 &   \phantom{0000}  
 &  \phantom{0}0.07   \\
 \bottomrule
\end{tabularx}
}
\end{sc}
\end{small}
\end{center}
\vskip -0.1in
\end{table*}

\begin{table*}[!htbp]
\caption{\textbf{AUC} for CIFAR-10/CIFAR-100 as in-distribution with \textbf{SVHN} as training out-distribution.
}\label{table:svhn_auc}
\setlength\tabcolsep{.5pt} 
\vskip 0.15in
\begin{center}
\begin{small}
\begin{sc}
\makebox[\textwidth][c]{
\begin{tabularx}{1.0\textwidth}{lC|C|CCCCCCC|C}
\multicolumn{10}{c}{in-distribution: CIFAR-10} \\
\midrule
       & &  Mean    &   LSUN    &   Uni &   Smooth     &   C-100 &  80M & OpenIm &  CelA  &  SVHN       \\ 
Model    & Acc. &  AUC &    AUC    &   AUC    &   AUC &  AUC    &   AUC & AUC &   AUC &   AUC      \\ \midrule
 Plain Classi
 &  \textbf{94.84}
 &  85.75
 &  \best{91.91}  
 &  91.63  
 &  87.69  
 &  78.27  
 &  87.83  
 &  83.23  
 &  79.43  
 &  88.01 \\ \midrule
 OE ($s_3$)
 &  94.80
 &  88.37    
 &  91.72  
 &  91.33  
 &  91.20  
 &  88.26  
 &  89.55  
 &  \best{82.97}  
 &  83.59  
 &  \best{100.00}   \\ \midrule
  \hspace{\subtab} \backgroundSone
 &  
 &  83.58    
 &  89.43  
 &  58.34  
 &  \best{99.80}  
 &  85.44  
 &  90.28  
 &  78.01  
 &  83.76  
 &  \best{100.00}   \\
  \hspace{\subtab} \backgroundStwo
 &  94.51
 &  89.62    
 &  91.50  
 &  91.14  
 &  97.60  
 &  \best{88.72}  
 &  \best{89.98}  
 &  82.96  
 &  \best{85.40}  
 &  \best{100.00}   \\
\backgroundSthree
 &  94.51
 &  89.60    
 &  91.50  
 &  91.14  
 &  97.54  
 &  88.71  
 &  89.96  
 &  82.96  
 &  \best{85.40}  
 &  \best{100.00}   \\ \midrule
\hspace{\subtab} Shared BinDisc
 &  
 &  58.47    
 &  37.27  
 &  38.61  
 &  98.85  
 &  55.19  
 &  65.66  
 &  48.61  
 &  65.12  
 &  \best{100.00}   \\
\hspace{\subtab} Shared Classi 
 &  94.71
 &  90.14    
 &  90.20  
 &  \best{99.70}  
 &  97.44  
 &  88.50  
 &  89.68  
 &  82.97  
 &  82.51  
 &  99.80   \\
Shared Combi $s_2$
 &  94.71
 &  90.18    
 &  89.74  
 &  99.67  
 &  99.00  
 &  88.49  
 &  89.87  
 &  82.43  
 &  82.09  
 &  \best{100.00}   \\
Shared Combi $s_3$
 &  94.71
 &  \textbf{90.19}    
 &  89.76  
 &  99.68  
 &  98.97  
 &  88.50  
 &  89.87  
 &  82.46  
 &  82.12  
 &  \best{100.00}   \\
\toprule
\\
\multicolumn{10}{c}{in-distribution: CIFAR-100} \\
\midrule
       & &  Mean    &   LSUN    &   Uni &   Smooth     &   C-10 & 80M  &  OpenIm  &  &  SVHN     \\ 
Model    & Acc. &  AUC &    AUC    &   AUC    &   AUC &  AUC    &   AUC & AUC &    &   AUC      \\
\midrule
Plain Classi
 &  75.96
 &  77.48
 &  71.38  
 &  \best{76.89}  
 &  78.14  
 &  \best{88.36}  
 &  75.33  
 &  74.60 
 &  \textcolor{white}{65.86} 
 &  75.92 
 \\ \midrule
OE ($s_3$)
 &  75.78
 &  \best{78.86}    
 &  75.93  
 &  73.53  
 &  97.02  
 &  75.37  
 &  76.57  
 &  74.74  
 &  \phantom{0000}
 &  99.99   \\ \midrule
  \hspace{\subtab} \backgroundSone
 &  
 &  66.60    
 &  67.45  
 &  24.51  
 &  \best{99.37}  
 &  68.50  
 &  73.82  
 &  65.97  
 &  \phantom{0000}
 &  \best{100.00}   \\
  \hspace{\subtab} \backgroundStwo
 &  75.21
 &  77.38    
 &  76.04  
 &  67.25  
 &  96.28  
 &  74.83  
 &  75.86  
 &  74.01  
 &  \phantom{0000}
 &  99.98   \\
\backgroundSthree
 &  75.21
 &  77.38    
 &  76.04  
 &  67.25  
 &  96.27  
 &  74.83  
 &  75.86  
 &  74.01  
 &  \phantom{0000}
 &  99.98   \\ \midrule
 \hspace{\subtab} Shared BinDisc
 & 
 &  55.60    
 &  34.81  
 &  53.23  
 &  98.03  
 &  49.00  
 &  54.57  
 &  43.93  
 &  \phantom{0000}
 &  \best{100.00}   \\
\hspace{\subtab} Shared Classi 
 &  \textbf{75.97}
 &  72.04    
 &  \best{80.31}
 &  41.57  
 &  83.32  
 &  75.64  
 &  \best{76.58}  
 &  \best{74.81}  
 &  \phantom{0000}
 &  99.90   \\
Shared Combi $s_2$
 &  \textbf{75.97}
 &  72.95    
 &  80.23  
 &  41.34  
 &  89.21  
 &  75.59  
 &  76.57  
 &  74.75  
 &  \phantom{0000}
 &  99.99   \\
Shared Combi $s_3$
 &  \textbf{75.97}
 &  72.94    
 &  80.23  
 &  41.34  
 &  89.18  
 &  75.59  
 &  76.57  
 &  74.75  
 &  \phantom{0000}
 &  99.99   \\
\bottomrule
\end{tabularx}
}
\end{sc}
\end{small}
\end{center}
\vskip -0.1in
\end{table*}

\clearpage

\section{Experiments with few In-Distribution training labels}\label{section:partially_labelled}
One practical advantage of binary discriminators between in- and out-distribution is that they do not necessitate labelled in-distribution data.
This means they are in principle employable in situations where we have a training dataset which is known to be sampled from the in-distribution, but would be infeasible to label, or even for tasks where no classes exist.
However when possible, as we observed in \ref{table:separate_OI_FPR}, it is often beneficial to use class information for shared training of the binary discriminator as well as combining it with classifier confidences.

Here, we regard a situation that represents a middle ground where only some of the in-distribution training samples come with labels.
Concretely, we use as the available in-distribution training dataset CIFAR-10 or CIFAR-100 where only every tenth image is labelled, while the remaining 90\% of samples are flagged as in-distribution but do not carry a class label.
In order to obtain the same number of training steps for OE and the classifier, we train for 1000 epochs over the labelled in-distribution data, which means that every unlabelled image is forwarded ~111 times and the training time is comparable to standard training with 100 epochs.

We compare standard OE which only uses the labelled part of the in-distribution training set with shared training of a binary discriminator trained on all data together with a classifier which is being trained on the labelled data only.

The results are shown in Tables~\ref{table:partially_labelled_OI} and ~\ref{table:partially_labelled_80M}.
We observe that, as expected, the binary discriminator heavily profits from the additional training data compared to the classifier part or OE.
Interestingly, as manifested in its improved accuracy and OOD detection performance compared to OE, \textsc{SharedClassi} again strongly benefits from sharing representations with the binary discriminator which sees vastly more data samples.

\clearpage

\begin{table*}[!htbp]
\caption{
\textbf{ FPR@95\%TPR} for CIFAR-10/CIFAR-100 with \textbf{90\% of the training samples being unlabelled} as in-distribution and with \textbf{OpenImages} as training out distribution 
\label{table:partially_labelled_OI}
}
\setlength\tabcolsep{.5pt} 
\vskip 0.15in
\begin{center}
\begin{small}
\begin{sc}
\makebox[\textwidth][c]{
\begin{tabularx}{1.0\textwidth}{lC|C|CCCCCCC|C}
\multicolumn{11}{c}{in-distribution: CIFAR10 (10\% labelled)  } \\ 
\midrule
Model & Acc. & Mean  &  SVHN  &  LSUN  &  Uni  &  Smooth  &  C-100  &  80M  &  CelA  &  OpenIm  \\ 
 \midrule
 OE 
  &  72.53
  &  86.41    
  &  95.72  
  &  63.67  
  &  100.00  
  &  99.86  
  &  86.52  
  &  84.76  
  &  74.34  
  &  76.00  
  \\ 
 \midrule
\hspace{\subtab} Shared BinDisc 
  &   
  & \textbf{28.43}    
  &  \best{33.91  }
  &  \best{\phantom{0}0.00}
  &  \best{\phantom{0}0.11}
  &  \best{27.29}  
  &  73.04  
  &  64.53  
  &  \best{\phantom{0}0.16}
  &  \best{\phantom{0}2.97}
  \\ 
\hspace{\subtab} Shared Classi 
  &  \textbf{86.65}
  &  71.33    
  &  79.20  
  &  35.00  
  &  99.86  
  &  77.94  
  &  78.24  
  &  74.91  
  &  54.17  
  &  59.96  
  \\ 
Shared Combi $s_2$
  &  \textbf{86.65}
  &  29.70    
  &  41.94  
  &  \best{\phantom{0}0.00}
  &  \phantom{0}0.80
  &  31.22  
  &  \best{70.89}
  &  \best{62.76} 
  &  \phantom{0}0.32  
  &  \phantom{0}5.10  
  \\ 
Shared Combi $s_3$
  &  \textbf{86.65}
  &  30.33    
  &  43.73  
  &  \best{\phantom{0}0.00}
  &  \phantom{0}0.98  
  &  32.67  
  &  71.40  
  &  63.16  
  &  \phantom{0}0.34  
  &  \phantom{0}5.54  
  \\ 
  \toprule
\multicolumn{11}{c}{in-distribution: CIFAR100 (10\% labelled) } \\ 
\midrule
Model & Acc. & Mean  &  SVHN  &  LSUN  &  Uni  &  Smooth  &  C-10  &  80M  &    &  OpenIm  \\ 
 \midrule
 OE
  &  38.09
  &  80.40    
  &  96.07  
  &  24.33  
  &  100.00  
  &  79.18  
  &  92.85  
  &  89.99  
  &  \phantom{0000}
  &  56.41  
  \\ 

 \midrule
\hspace{\subtab} Shared BinDisc 
  &  
  &  \textbf{54.92}    
  &  \best{73.39}  
  &  \best{\phantom{0}0.33}  
  &  \best{\phantom{0}6.18}  
  &  \best{67.27}  
  &  93.12  
  &  89.22  
  &  \phantom{0000}  
  &  \best{\phantom{0}3.52}  
  \\ 
\hspace{\subtab} Shared Classi 
  &  \textbf{51.74}
  &  77.28    
  &  81.53  
  &  12.33  
  &  99.66  
  &  93.93  
  &  \best{87.60}  
  &  88.64  
  &  \phantom{0000}  
  &  42.94  
  \\ 
Shared Combi $s_2$
  &  \textbf{51.74}
  &  70.77    
  &  76.61  
  &  \best{\phantom{0}0.33}  
  &  85.54  
  &  85.57  
  &  88.59  
  &  87.97  
  &  \phantom{0000}  
  &  \phantom{0}8.88  
  \\ 
Shared Combi $s_3$
  &  \textbf{51.74}
  &  70.96    
  &  76.70  
  &  \best{\phantom{0}0.33}  
  &  86.48  
  &  85.79  
  &  88.56  
  &  \best{87.90}  
  &  \phantom{0000}  
  &  \phantom{0}9.00  
  \\ 
\bottomrule
\end{tabularx}
}
\end{sc}
\end{small}
\end{center}
\vskip -0.1in
\end{table*}

\begin{table*}[!htbp]
\caption{
\textbf{ FPR@95\%TPR} for CIFAR-10/CIFAR-100 with \textbf{90\% of the training samples being unlabelled} as in-distribution and with \textbf{80 Million Tiny Images} as training out distribution 
\label{table:partially_labelled_80M}
}
\setlength\tabcolsep{.5pt} 
\vskip 0.15in
\begin{center}
\begin{small}
\begin{sc}
\makebox[\textwidth][c]{
\begin{tabularx}{1.0\textwidth}{lC|C|CCCCCCC|C}
\multicolumn{11}{c}{in-distribution: CIFAR10 (10\% labelled)  } \\ 
\midrule
Model & Acc. & Mean  &  SVHN  &  LSUN  &  Uni  &  Smooth  &  C-100  &  OpenIm  &  CelA  &  80M  \\ 

 \midrule
 OE 
  &  41.18
  &  88.54    
  &  54.00  
  &  95.33  
  &  100.00  
  &  82.74  
  &  95.16  
  &  94.52  
  &  98.06  
  &  94.51  
  \\ 

 \midrule
\hspace{\subtab} Shared BinDisc 
  &  
  &  \textbf{16.54}    
  &  \best{\phantom{0}7.67}  
  &  \best{\phantom{0}6.67}  
  &  \best{\phantom{0}0.13} 
  &  \best{\phantom{0}0.20} 
  &  52.15  
  &  \best{46.23} 
  &  \best{\phantom{0}2.74}  
  &  \best{21.82}  
  \\ 
\hspace{\subtab} Shared Classi 
  &  \textbf{82.74}
  &  54.20    
  &  41.69  
  &  44.00  
  &  93.86  
  &  40.87  
  &  66.81  
  &  66.32  
  &  25.84  
  &  48.57  
  \\ 
Shared Combi $s_2$
  &  \textbf{82.74}
  &  17.43    
  &  \phantom{0}9.34  
  &  \phantom{0}9.00  
  &  \phantom{0}0.73  
  &  \phantom{0}0.37  
  &  \best{51.59} 
  &  47.63  
  &  \phantom{0}3.33  
  &  24.08  
  \\ 
Shared Combi $s_3$
  &  \textbf{82.74}
  &  17.77    
  &  \phantom{0}9.86  
  &  \phantom{0}9.33  
  &  \phantom{0}0.97  
  &  \phantom{0}0.42  
  &  52.04  
  &  48.27  
  &  \phantom{0}3.52  
  &  24.67  
  \\ 
  \toprule
\multicolumn{11}{c}{in-distribution: CIFAR100 (10\% labelled) } \\
\midrule
Model & Acc. & Mean  &  SVHN  &  LSUN  &  Uni  &  Smooth  &  C-10  &  OpenIm  &     &  80M  \\ 

 \midrule
 OE 
  &  26.87
  &  91.50    
  &  92.03  
  &  74.00  
  &  100.00  
  &  98.76  
  &  93.70  
  &  90.52  
  &  \phantom{0000}  
  &  85.23  
  \\ 

 \midrule
\hspace{\subtab} Shared BinDisc 
  &  
  &  \textbf{49.79}    
  &  75.75  
  &  \best{35.33}
  &  \best{\phantom{0}0.03}  
  &  \best{17.06}  
  &  90.72  
  &  79.83  
  &  \phantom{0000}  
  &  \best{63.33}  
  \\ 
\hspace{\subtab} Shared Classi 
  &  \textbf{43.29}
  &  72.74    
  &  82.74  
  &  63.33  
  &  53.60  
  &  61.60  
  &  \best{89.67}  
  &  85.50  
  &  \phantom{0000}  
  &  76.33  
  \\ 
Shared Combi $s_2$
  &  \textbf{43.29}
  &  50.21    
  &  \best{74.88}  
  &  37.00  
  &  \phantom{0}0.10  
  &  19.91  
  &  89.75  
  &  \best{79.63}
  &  \phantom{0000}  
  &  63.42  
  \\ 
Shared Combi $s_3$
  &  \textbf{43.29}
  &  50.29    
  &  74.93  
  &  37.00  
  &  \phantom{0}0.11  
  &  20.27  
  &  89.71  
  &  79.73  
  &  \phantom{0000} 
  &  63.54  
  \\ 
\midrule
\bottomrule
\end{tabularx}
}
\end{sc}
\end{small}
\end{center}
\vskip -0.1in
\end{table*}

\clearpage

\section{The Effect of Varying $\lambda$}\label{sec:lambda}
We investigate the effect of choosing the training hyperparameter $\lambda$, which is the factor of the respective loss on the out-of-distribution samples and represents $\frac{p(o)}{p(i)}$ during training.
We evaluate models trained with Outlier Exposure~\cite{HenMazDie2019}, background class and shared training of binary discriminator and classifier, all scored with $s_3$ (the implicit scoring function of OE). Note that in Section \ref{section:experiments}, the  \textsc{OE},  \textsc{BGC} and \textsc{Shared} models trained with $\lambda = 1.0$.

In Tables \ref{table:lambda_OI_FPR} and \ref{table:lambda_OI_AUC} we see that for CIFAR-10 the differences between different choices of $\lambda$ are minor with no clear favorite, but setting $\lambda=2.0$ tends to be too high. For CIFAR-100, the differences are much larger. Here, choosing $\lambda$ too small can have a severe negative effect on the detection of the otherwise relatively  to detect far out-distributions SVHN, Uniform Noise and Smooth Noise.
Regarding the numbers for both in-distribution datasets, using $\lambda = 1$ is a considerate default choice.

\begin{table*}[!h]
\caption{Effect of varying $\lambda$ during training for OE~\citep{HenMazDie2019}, the $s_3$ scoring function for models with background class and \textsc{Shared Combi $s_3$}. Shown are test accuracy and \textbf{FPR@95\%TPR} with \textbf{OpenImages} as training out-distribution.
}\label{table:lambda_OI_FPR}
\setlength\tabcolsep{.5pt} 
\vskip 0.15in
\begin{center}
\begin{small}
\begin{sc}
\makebox[\textwidth][c]{
\begin{tabularx}{1.0\textwidth}{lC|C|CCCCCCC|C}
\multicolumn{10}{c}{in-distribution: CIFAR-10} \\
\midrule
       & &  Mean &     SVHN    &   LSUN    &   Uni &   Smooth     &   C-100 &  80M &   CelA  &    OpenIm     \\ 
Model    & Acc. &  FPR &    FPR    &   FPR    &   FPR &  FPR    &   FPR & FPR &   FPR &   FPR      \\
\midrule
OE $\lambda$=0.1
 &  95.36
 &  14.34    
 &  13.31  
 &  \phantom{0}0.00  
 &  \phantom{0}0.25  
 &  \phantom{0}0.03  
 &  48.37  
 &  36.65  
 &  \phantom{0}1.77  
 &  12.60   \\
 OE $\lambda$=0.25
 &  95.31
 &  16.17    
 &  19.64  
 &  \phantom{0}0.00  
 &  \phantom{0}0.00  
 &  \phantom{0}0.05  
 &  51.77  
 &  40.21  
 &  \phantom{0}1.55  
 &  \phantom{0}7.74   \\
OE    $\lambda$=0.5
 &  95.27
 &  15.98    
 &  16.23  
 &  \phantom{0}0.00  
 &  \phantom{0}0.07  
 &  \phantom{0}0.01  
 &  53.06  
 &  41.72  
 &  \phantom{0}0.79  
 &  \phantom{0}4.85   \\
OE $\lambda$=1.0
 &  95.06
 &  15.20    
 &  \phantom{0}9.58  
 &  \phantom{0}0.00  
 &  \phantom{0}0.00  
 &  \phantom{0}0.00  
 &  54.05  
 &  42.33  
 &  \phantom{0}0.45  
 &  \phantom{0}3.46   \\
OE $\lambda$=2.0
 &  95.19
 &  15.98    
 &  12.93  
 &  \phantom{0}0.00  
 &  \phantom{0}0.00  
 &  \phantom{0}0.00  
 &  55.99  
 &  42.68  
 &  \phantom{0}0.26  
 &  \phantom{0}1.67   \\
\midrule
\backgroundSthree \  $\lambda$=0.1
 &  95.38
 &  16.21    
 &  18.93  
 &  \phantom{0}0.00  
 &  \phantom{0}0.12  
 &  \phantom{0}1.05  
 &  52.43  
 &  39.42  
 &  \phantom{0}1.53  
 &  \phantom{0}9.56   \\
\backgroundSthree \  $\lambda$=0.25
 &  95.27
 &  15.83    
 &  12.76  
 &  \phantom{0}0.00  
 &  \phantom{0}0.04  
 &  \phantom{0}0.03  
 &  54.06  
 &  43.26  
 &  \phantom{0}0.66  
 &  \phantom{0}5.43   \\
\backgroundSthree \  $\lambda$=0.5
 &  95.23
 &  15.14    
 &  \phantom{0}9.33  
 &  \phantom{0}0.00  
 &  \phantom{0}0.09  
 &  \phantom{0}0.04  
 &  52.97  
 &  42.79  
 &  \phantom{0}0.74  
 &  \phantom{0}3.19   \\
\backgroundSthree  $\lambda$=1.0
 &  95.21
 &  16.63    
 &  \phantom{0}7.69  
 &  \phantom{0}0.00  
 &  \phantom{0}0.07  
 &  \phantom{0}2.36  
 &  55.19  
 &  44.67  
 &  \phantom{0}6.41  
 &  \phantom{0}1.74   \\
\backgroundSthree \  $\lambda$=2.0
 &  95.28
 &  17.45    
 &  \phantom{0}8.72  
 &  \phantom{0}0.00  
 &  \phantom{0}0.00  
 &  \phantom{0}0.00  
 &  59.02  
 &  48.09  
 &  \phantom{0}6.31  
 &  \phantom{0}0.71   \\
 \midrule
SC $s_3$ \ $\lambda$=0.1
 &  95.20
 &  16.49    
 &  15.22  
 &  \phantom{0}0.00  
 &  \phantom{0}0.27  
 &  \phantom{0}0.00  
 &  54.30  
 &  43.97  
 &  \phantom{0}1.69  
 &  10.46   \\
SC $s_3$ \ $\lambda$=0.25
 &  95.21
 &  16.17    
 &  15.32  
 &  \phantom{0}0.00  
 &  \phantom{0}0.15  
 &  \phantom{0}0.04  
 &  54.60  
 &  42.59  
 &  \phantom{0}0.53  
 &  \phantom{0}5.19   \\
SC $s_3$ \ $\lambda$=0.5
 &  95.25
 &  16.02    
 &  \phantom{0}9.21  
 &  \phantom{0}0.00  
 &  \phantom{0}0.09  
 &  \phantom{0}0.00  
 &  55.72  
 &  45.55  
 &  \phantom{0}1.56  
 &  \phantom{0}2.75   \\
SC $s_3$ \ $\lambda$=1.0
 &  95.28
 &  16.06    
 &  \phantom{0}9.00  
 &  \phantom{0}0.00  
 &  \phantom{0}0.00  
 &  \phantom{0}0.00  
 &  58.68  
 &  42.85  
 &  \phantom{0}1.91  
 &  \phantom{0}0.66   \\
SC $s_3$ \ $\lambda$=2.0
 &  95.26
 &  17.04    
 &  \phantom{0}9.33  
 &  \phantom{0}0.00  
 &  \phantom{0}0.00  
 &  \phantom{0}0.00  
 &  58.20  
 &  45.81  
 &  \phantom{0}5.92  
 &  \phantom{0}0.68   \\

\toprule
\multicolumn{10}{c}{in-distribution: CIFAR-100} \\
\midrule
       & &  Mean &     SVHN    &   LSUN    &   Uni &   Smooth     &   C-10 & 80M  &    &   OpenIm      \\ 
Model    & Acc. &  FPR &    FPR    &   FPR    &   FPR &  FPR    &   FPR & FPR &    &   FPR      \\ \midrule
 OE  $\lambda$=0.1
 &  77.28
 &  52.35    
 &  70.29  
 &  \phantom{0}1.00  
 &  28.58  
 &  56.15  
 &  82.44  
 &  75.64  
 &  \phantom{0000}  
 &  17.94   \\
 OE  $\lambda$=0.25
 &  76.96
 &  46.40    
 &  60.70  
 &  \phantom{0}0.00  
 &  \phantom{0}9.24  
 &  48.89  
 &  83.01  
 &  76.59  
 &  \phantom{0000}  
 &  \phantom{0}6.01   \\
 OE     $\lambda$=0.5
 &  77.22
 &  36.43    
 &  53.12  
 &  \phantom{0}0.00  
 &  \phantom{0}0.15  
 &  \phantom{0}4.62  
 &  82.86  
 &  77.84  
 &  \phantom{0000}  
 &  \phantom{0}4.23   \\
OE  $\lambda$=1.0
 &  77.19
 &  35.03    
 &  47.36  
 &  \phantom{0}0.00  
 &  \phantom{0}0.67  
 &  \phantom{0}0.08  
 &  84.64  
 &  77.42  
 &  \phantom{0000}  
 &  \phantom{0}1.28   \\
 OE  $\lambda$=2.0
 &  76.95
 &  33.25    
 &  37.04  
 &  \phantom{0}0.00  
 &  \phantom{0}0.00  
 &  \phantom{0}1.70  
 &  83.28  
 &  77.48  
 &  \phantom{0000}  
 &  \phantom{0}0.86   \\
\midrule
\backgroundSthree \  $\lambda$=0.1
 &  76.87
 &  41.23    
 &  67.39  
 &  \phantom{0}1.00  
 &  \phantom{0}5.97  
 &  13.81  
 &  82.44  
 &  76.76  
 &  \phantom{0000}  
 &  15.70   \\
\backgroundSthree \  $\lambda$=0.25
 &  77.05
 &  42.05    
 &  49.94  
 &  \phantom{0}0.00  
 &  17.07  
 &  25.41  
 &  82.34  
 &  77.55  
 &  \phantom{0000}  
 &  \phantom{0}7.47   \\
\backgroundSthree \  $\lambda$=0.5
 &  77.17
 &  36.14    
 &  43.90  
 &  \phantom{0}0.00  
 &  10.54  
 &  \phantom{0}3.70  
 &  82.86  
 &  75.86  
 &  \phantom{0000}  
 &  \phantom{0}3.76   \\
\backgroundSthree $\lambda$=1.0
 &  77.61
 &  33.36    
 &  37.27  
 &  \phantom{0}0.00  
 &  \phantom{0}0.00  
 &  \phantom{0}0.20  
 &  84.51  
 &  78.19  
 &  \phantom{0000}  
 &  \phantom{0}1.27   \\
\backgroundSthree \  $\lambda$=2.0
 &  77.26
 &  32.60    
 &  35.34  
 &  \phantom{0}0.00  
 &  \phantom{0}0.00  
 &  \phantom{0}0.00  
 &  82.69  
 &  77.57  
 &  \phantom{0000}  
 &  \phantom{0}1.20   \\
 \midrule
SC $s_3$ \ $\lambda$=0.1
 &  77.35
 &  58.54    
 &  62.07  
 &  \phantom{0}0.33  
 &  92.62  
 &  37.44  
 &  82.05  
 &  76.74  
 &  \phantom{0000}  
 &  14.17   \\
SC $s_3$ \ $\lambda$=0.25
 &  76.96
 &  41.34    
 &  63.29  
 &  \phantom{0}0.00  
 &  10.24  
 &  14.34  
 &  82.68  
 &  77.50  
 &  \phantom{0000}  
 &  \phantom{0}7.22   \\
SC $s_3$ \ $\lambda$=0.5
 &  76.64
 &  49.63    
 &  49.55  
 &  \phantom{0}0.00  
 &  44.78  
 &  43.85  
 &  82.30  
 &  77.31  
 &  \phantom{0000}  
 &  \phantom{0}1.97   \\
SC $s_3$ \ $\lambda$=1.0
 &  77.35
 &  33.06    
 &  37.57  
 &  \phantom{0}0.00  
 &  \phantom{0}0.00  
 &  \phantom{0}1.13  
 &  82.68  
 &  77.01  
 &  \phantom{0000}  
 &  \phantom{0}1.85   \\
SC $s_3$ \ $\lambda$=2.0
 &  76.63
 &  34.09    
 &  39.56  
 &  \phantom{0}0.00  
 &  \phantom{0}0.00  
 &  \phantom{0}5.21  
 &  82.57  
 &  77.18  
 &  \phantom{0000}  
 &  \phantom{0}1.04   \\
\bottomrule
\end{tabularx}
}
\end{sc}
\end{small}
\end{center}
\vskip -0.1in
\end{table*}

\begin{table*}[!h]
\caption{Effect of varying $\lambda$ during training for OE~\citep{HenMazDie2019}, the $s_3$ scoring function for models with background class and \textsc{Shared Combi $s_3$}. Shown are test accuracy and \textbf{AUC} (AUROC) with \textbf{OpenImages} as training out-distribution.
}\label{table:lambda_OI_AUC}
\setlength\tabcolsep{.5pt} 
\vskip 0.15in
\begin{center}
\begin{small}
\begin{sc}
\makebox[\textwidth][c]{
\begin{tabularx}{1.0\textwidth}{lC|C|CCCCCCC|C}
\multicolumn{10}{c}{in-distribution: CIFAR-10} \\
\midrule
       & &  Mean &     SVHN    &   LSUN    &   Uni &   Smooth     &   C-100 &  80M &   CelA  &    OpenIm     \\ 
Model    & Acc. &  AUC &    AUC    &   AUC    &   AUC &  AUC    &   AUC & AUC &   AUC &   AUC      \\
\midrule
  OE  $\lambda$=0.1
 &  95.36
 &  97.49    
 &  97.96  
 &  99.96  
 &  99.84  
 &  99.93  
 &  91.25  
 &  93.81  
 &  99.70  
 &  97.86   \\
   OE  $\lambda$=0.25
 &  95.31
 &  97.15    
 &  96.93  
 &  99.98  
 &  99.98  
 &  99.93  
 &  90.59  
 &  92.91  
 &  99.74  
 &  98.71   \\
 OE   $\lambda$=0.5
 &  95.27
 &  97.19    
 &  97.53  
 &  99.99  
 &  99.95  
 &  99.99  
 &  90.29  
 &  92.73  
 &  99.87  
 &  99.20   \\
OE  $\lambda$=1.0
 &  95.06
 &  97.28    
 &  98.49  
 &  99.99  
 &  99.99  
 &  99.99  
 &  90.03  
 &  92.53  
 &  99.91  
 &  99.43   \\
   OE  $\lambda$=2.0
 &  95.19
 &  97.19    
 &  97.99  
 &  99.99  
 &  99.99  
 &  100.00  
 &  89.89  
 &  92.56  
 &  99.93  
 &  99.67   \\
 \midrule
\backgroundSthree \   $\lambda$=0.1
 &  95.38
 &  97.37    
 &  97.32  
 &  99.99  
 &  99.93  
 &  99.63  
 &  91.18  
 &  93.74  
 &  99.78  
 &  98.39   \\
\backgroundSthree \   $\lambda$=0.25
 &  95.27
 &  97.43    
 &  98.25  
 &  100.00  
 &  99.96  
 &  99.98  
 &  90.85  
 &  93.09  
 &  99.91  
 &  99.13   \\
\backgroundSthree \   $\lambda$=0.5
 &  95.23
 &  97.44    
 &  98.62  
 &  100.00  
 &  99.93  
 &  99.98  
 &  90.80  
 &  92.84  
 &  99.89  
 &  99.48   \\
\backgroundSthree \   $\lambda$=1.0
 &  95.21
 &  97.21    
 &  98.87  
 &  100.00  
 &  99.98  
 &  99.62  
 &  90.47  
 &  92.41  
 &  99.08  
 &  99.71   \\
\backgroundSthree \   $\lambda$=2.0
 &  95.28
 &  97.10    
 &  98.71  
 &  100.00  
 &  99.98  
 &  100.00  
 &  89.93  
 &  91.94  
 &  99.11  
 &  99.86   \\
 \midrule
SC $s_3$ \ $\lambda$=0.1
 &  95.20
 &  97.29    
 &  97.87  
 &  99.99  
 &  99.91  
 &  99.98  
 &  90.69  
 &  92.82  
 &  99.77  
 &  98.32   \\
SC $s_3$ \ $\lambda$=0.25
 &  95.21
 &  97.33    
 &  97.90  
 &  99.99  
 &  99.91  
 &  99.97  
 &  90.57  
 &  93.08  
 &  99.92  
 &  99.17   \\
SC $s_3$ \ $\lambda$=0.5
 &  95.25
 &  97.34    
 &  98.68  
 &  100.00  
 &  99.95  
 &  100.00  
 &  90.39  
 &  92.56  
 &  99.78  
 &  99.57   \\
SC $s_3$ \ $\lambda$=1.0
 &  95.28
 &  97.26    
 &  98.62  
 &  100.00  
 &  99.93  
 &  99.94  
 &  89.75  
 &  92.85  
 &  99.71  
 &  99.88   \\
SC $s_3$ \ $\lambda$=2.0
 &  95.26
 &  97.13    
 &  98.59  
 &  100.00  
 &  100.00  
 &  99.99  
 &  89.95  
 &  92.25  
 &  99.13  
 &  99.88   \\
\midrule
\toprule
\multicolumn{10}{c}{in-distribution: CIFAR-100} \\
\midrule
       & &  Mean &     SVHN    &   LSUN    &   Uni &   Smooth     &   C-10 & 80M  &    &   OpenIm      \\ 
Model    & Acc. &  AUC &    AUC    &   AUC    &   AUC &  AUC    &   AUC & AUC &    &   AUC      \\ \midrule
OE   $\lambda$=0.1
 &  77.28
 &  87.38    
 &  83.69  
 &  99.79  
 &  94.79  
 &  91.50  
 &  76.08  
 &  78.43  
 &  \phantom{0000} 
 &  95.73   \\
OE   $\lambda$=0.25
 &  76.96
 &  88.31    
 &  86.94  
 &  99.98  
 &  97.74  
 &  90.74  
 &  76.11  
 &  78.35  
 &  \phantom{0000} 
 &  98.51   \\
OE    $\lambda$=0.5
 &  77.22
 &  89.82    
 &  87.22  
 &  99.98  
 &  99.74  
 &  98.95  
 &  75.39  
 &  77.64  
 &  \phantom{0000} 
 &  98.94   \\
OE   $\lambda$=1.0    
 &  77.19
 &  90.37    
 &  89.54  
 &  99.98  
 &  99.03  
 &  99.68  
 &  75.95  
 &  78.03  
 &  \phantom{0000}  
 &  99.67   \\
OE   $\lambda$=2.0
 &  76.95
 &  90.56    
 &  91.42  
 &  99.99  
 &  99.78  
 &  99.08  
 &  75.51  
 &  77.58  
 &  \phantom{0000} 
 &  99.76   \\
 \midrule
\backgroundSthree \  $\lambda$=0.1
 &  76.87
 &  88.86    
 &  83.41  
 &  99.87  
 &  98.35  
 &  97.37  
 &  75.97  
 &  78.21  
 &  \phantom{0000} 
 &  95.99   \\
\backgroundSthree \  $\lambda$=0.25
 &  77.05
 &  89.29    
 &  89.10  
 &  99.99  
 &  97.13  
 &  95.91  
 &  75.96  
 &  77.68  
 &  \phantom{0000} 
 &  98.08   \\
\backgroundSthree \  $\lambda$=0.5
 &  77.17
 &  90.17    
 &  90.07  
 &  99.99  
 &  97.99  
 &  99.07  
 &  75.49  
 &  78.42  
 &  \phantom{0000} 
 &  99.04   \\
\backgroundSthree \   $\lambda$=1.0
 &  77.61
 &  90.46    
 &  90.46  
 &  99.99  
 &  99.88  
 &  99.74  
 &  74.88  
 &  77.82  
 &  \phantom{0000}  
 &  99.64   \\
\backgroundSthree \  $\lambda$=2.0
 &  77.26
 &  90.83    
 &  91.74  
 &  99.99  
 &  99.72  
 &  99.80  
 &  75.72  
 &  78.02  
 &  \phantom{0000} 
 &  99.67   \\
\midrule
SC $s_3$ \  $\lambda$=0.1
 &  77.35
 &  86.08    
 &  85.56  
 &  99.95  
 &  84.55  
 &  93.24  
 &  75.27  
 &  77.89  
 &  \phantom{0000} 
 &  96.74   \\
SC $s_3$ \  $\lambda$=0.25
 &  76.96
 &  88.76    
 &  85.08  
 &  99.98  
 &  97.73  
 &  97.17  
 &  75.11  
 &  77.51  
 &  \phantom{0000} 
 &  98.31   \\
SC $s_3$ \  $\lambda$=0.5
 &  76.64
 &  88.42    
 &  88.80  
 &  99.99  
 &  94.61  
 &  93.94  
 &  75.63  
 &  77.52  
 &  \phantom{0000} 
 &  99.51   \\
SC $s_3$ \ $\lambda$=1.0
 &  77.35
 &  90.73    
 &  91.69  
 &  99.99  
 &  99.57  
 &  99.53  
 &  75.50  
 &  78.10  
 &  \phantom{0000}  
 &  99.57   \\
SC $s_3$ \  $\lambda$=2.0
 &  76.63
 &  90.54    
 &  91.24  
 &  99.99  
 &  99.86  
 &  98.93  
 &  75.30  
 &  77.91  
 &  \phantom{0000} 
 &  99.74   \\
\bottomrule
\end{tabularx}
}
\end{sc}
\end{small}
\end{center}
\vskip -0.1in
\end{table*}

\clearpage
\section{Statistics Over Five Runs with Different Seeds}

\begin{table*}[!htbp]
\caption{\textbf{Mean} $\mu$ and \textbf{standard deviation} $\sigma$ of the \textbf{FPR@95\%TPR} measure for different methods and scoring functions over five runs each for models with \textbf{OpenImages} as training out-distribution. The training details are the same as for the results shown in Table~\ref{table:OI_AA_FPR}.
}\label{table:stats_OI_fpr}
\setlength\tabcolsep{.5pt} 
\vskip 0.15in
\begin{center}
\begin{small}
\begin{sc}
\makebox[\textwidth][c]{
\begin{tabularx}{1.0\textwidth}{lC|C|CCCCCCC|C}
\multicolumn{10}{c}{in-distribution: CIFAR-10} \\
\midrule
       & &  Mean &     SVHN    &   LSUN    &   Uni &   Smooth     &   C-100 &  80M &   CelA  &    OpenIm     \\ 
Model    & Acc. &  FPR &    FPR    &   FPR    &   FPR &  FPR    &   FPR & FPR &   FPR &   FPR      \\ \midrule
OE  $\mu$
 &  95.11
 &  \textbf{15.49}    
 &  11.42  
 &  \best{\phantom{0}0.00  }
 &  \best{\phantom{0}0.00  }
 &  \phantom{0}0.09 
 &  \best{54.35  }
 &  \best{42.14}  
 &  \best{\phantom{0}0.40}  
 &  \phantom{0}2.75   \\
\rowcolor[gray]{.85}OE  $\sigma$
 &  0.04
 &  \phantom{0}0.32    
 &  \phantom{0}1.75  
 &  \phantom{0}0.00  
 &  \phantom{0}0.00  
 &  \phantom{0}0.18  
 &  \phantom{0}0.61  
 &  \phantom{0}0.55  
 &  \phantom{0}0.11  
 &  \phantom{0}0.49   \\ \midrule
 \hspace{\subtab}  \backgroundSone \ $\mu$
 &  
 &  19.18    
 &  \best{\phantom{0}3.86}
 &  \best{\phantom{0}0.00  }
 &  \best{\phantom{0}0.00  }
 &  \best{\phantom{0}0.00  }
 &  73.03  
 &  56.76  
 &  \phantom{0}0.60  
 &  \best{\phantom{0}0.05}   \\
\rowcolor[gray]{.85} \hspace{\subtab}  \backgroundSone \ $\sigma$
 &  
 &  \phantom{0}0.32    
 &  \phantom{0}0.97  
 &  \phantom{0}0.00  
 &  \phantom{0}0.00  
 &  \phantom{0}0.00  
 &  \phantom{0}1.17  
 &  \phantom{0}1.19  
 &  \phantom{0}0.32  
 &  \phantom{0}0.01   \\
 \backgroundStwo\ $\mu$
 &  \textbf{95.29}
 &  15.96    
 &  \phantom{0}8.48  
 &  \best{\phantom{0}0.00  }
 &  \phantom{0}0.04  
 &  \phantom{0}0.42  
 &  55.41  
 &  44.52  
 &  \phantom{0}2.87  
 &  \phantom{0}1.09   \\
\rowcolor[gray]{.85} \backgroundStwo\ $\sigma$
 &  0.10
 &  \phantom{0}0.63    
 &  \phantom{0}0.88  
 &  \phantom{0}0.00  
 &  \phantom{0}0.05  
 &  \phantom{0}0.84  
 &  \phantom{0}1.57  
 &  \phantom{0}1.55  
 &  \phantom{0}1.94  
 &  \phantom{0}0.34   \\
 \backgroundSthree\ $\mu$
 &  \textbf{95.29}
 &  15.99    
 &  \phantom{0}8.69  
 &  \best{\phantom{0}0.00}  
 &  \phantom{0}0.04  
 &  \phantom{0}0.47  
 &  55.29  
 &  44.49  
 &  \phantom{0}2.94  
 &  \phantom{0}1.16   \\
\rowcolor[gray]{.85} \backgroundSthree\ $\sigma$
 &  0.10
 &  \phantom{0}0.63    
 &  \phantom{0}0.91  
 &  \phantom{0}0.00  
 &  \phantom{0}0.06  
 &  \phantom{0}0.95  
 &  \phantom{0}1.53  
 &  \phantom{0}1.53  
 &  \phantom{0}1.99  
 &  \phantom{0}0.35   \\ \midrule
\hspace{\subtab} Shared BinDisc  $\mu$ 
 &  
 &  20.45    
 &  \phantom{0}5.30  
 &  \best{\phantom{0}0.00  }
 &  \best{\phantom{0}0.00  }
 &  \best{\phantom{0}0.00  }
 &  77.19  
 &  59.63  
 &  \phantom{0}1.06  
 &  \best{\phantom{0}0.05}   \\
\rowcolor[gray]{.85}\hspace{\subtab} Shared BinDisc  $\sigma$
 & 
 &  \phantom{0}0.49    
 &  \phantom{0}1.02  
 &  \phantom{0}0.00  
 &  \phantom{0}0.00  
 &  \phantom{0}0.00  
 &  \phantom{0}1.09  
 &  \phantom{0}3.00  
 &  \phantom{0}0.46  
 &  \phantom{0}0.02   \\
\hspace{\subtab} Shared Classi  $\mu$ 
 &  95.21
 &  33.21    
 &  26.37  
 &  \phantom{0}9.33  
 &  49.73  
 &  11.31  
 &  57.34  
 &  48.37  
 &  29.98  
 &  36.73   \\
\rowcolor[gray]{.85}\hspace{\subtab} Shared Classi  $\sigma$
 &  0.07
 &  \phantom{0}3.81    
 &  \phantom{0}1.24  
 &  \phantom{0}1.35  
 &  29.63  
 &  12.22  
 &  \phantom{0}1.18  
 &  \phantom{0}1.19  
 &  \phantom{0}3.85  
 &  \phantom{0}1.67   \\
Shared Combi $s_2$  $\mu$ 
 &  95.21
 &  16.28    
 &  \phantom{0}8.77  
 &  \best{\phantom{0}0.00}  
 &  \phantom{0}0.20  
 &  \best{\phantom{0}0.00  }
 &  57.14  
 &  44.92  
 &  \phantom{0}2.92  
 &  \phantom{0}0.95   \\
\rowcolor[gray]{.85}Shared Combi $s_2$  $\sigma$
 &  0.07
 &  \phantom{0}0.39    
 &  \phantom{0}1.11  
 &  \phantom{0}0.00  
 &  \phantom{0}0.38  
 &  \phantom{0}0.00  
 &  \phantom{0}1.39  
 &  \phantom{0}1.37  
 &  \phantom{0}1.42  
 &  \phantom{0}0.23   \\
Shared Combi $s_3$  $\mu$ 
 &  95.21
 &  16.35    
 &  \phantom{0}9.06  
 &  \best{\phantom{0}0.00  }
 &  \phantom{0}0.24  
 &  \best{\phantom{0}0.00}  
 &  57.13  
 &  45.02  
 &  \phantom{0}3.03  
 &  \phantom{0}1.03   \\
\rowcolor[gray]{.85}Shared Combi $s_3$  $\sigma$
 &  0.07
 &  \phantom{0}0.41    
 &  \phantom{0}1.13  
 &  \phantom{0}0.00
 &  \phantom{0}0.46  
 &  \phantom{0}0.00  
 &  \phantom{0}1.38  
 &  \phantom{0}1.38  
 &  \phantom{0}1.47  
 &  \phantom{0}0.24   \\
 \bottomrule
 \\
\multicolumn{10}{c}{in-distribution: CIFAR-100} \\
\midrule
       & &  Mean &     SVHN    &   LSUN    &   Uni &   Smooth     &   C-10 & 80M  &    &   OpenIm      \\ 
Model    & Acc. &  FPR &    FPR    &   FPR    &   FPR &  FPR    &   FPR & FPR &    &   FPR      \\ \midrule
OE $\mu$
 &  77.13
 &  35.16    
 &  45.27  
 &  \best{\phantom{0}0.00}  
 &  \phantom{0}0.13  
 &  \phantom{0}5.15  
 &  83.47  
 &  76.93  
 &  \phantom{0000}   
 &  \phantom{0}1.66   \\
\rowcolor[gray]{.85}OE $\sigma$
 &  0.23
 &  \phantom{0}0.80    
 &  \phantom{0}5.35  
 &  \phantom{0}0.00  
 &  \phantom{0}0.27  
 &  \phantom{0}5.35  
 &  \phantom{0}0.89  
 &  \phantom{0}0.59  
 &  \phantom{0000}   
 &  \phantom{0}0.54   \\ \midrule
 \hspace{\subtab}  \backgroundSone \ $\mu$
 &  
 &  \textbf{31.12}    
 &  \best{11.88}  
 &  \best{\phantom{0}0.00}  
 &  \best{\phantom{0}0.00}  
 &  \phantom{0}0.12  
 &  93.85  
 &  80.85  
 &  \phantom{0000}   
 &  \phantom{0}0.08   \\
\rowcolor[gray]{.85} \hspace{\subtab}  \backgroundSone \ $\sigma$
 &  
 &  \phantom{0}0.41    
 &  \phantom{0}2.20  
 &  \phantom{0}0.00  
 &  \phantom{0}0.00  
 &  \phantom{0}0.24  
 &  \phantom{0}0.15  
 &  \phantom{0}0.98  
 &  \phantom{0000}   
 &  \phantom{0}0.03   \\ 
 \backgroundStwo\ $\mu$
 &  \textbf{77.30}
 &  35.54    
 &  40.11  
 &  \best{\phantom{0}0.00}  
 &  \phantom{0}7.58  
 &  \phantom{0}4.90  
 &  83.44  
 &  77.20  
 &  \phantom{0000}   
 &  \phantom{0}1.67   \\
\rowcolor[gray]{.85} \backgroundStwo\ $\sigma$
 &  0.35
 &  \phantom{0}2.12    
 &  \phantom{0}5.67  
 &  \phantom{0}0.00  
 &  10.26  
 &  \phantom{0}5.35  
 &  \phantom{0}0.80  
 &  \phantom{0}0.73  
 &  \phantom{0000}   
 &  \phantom{0}0.55   \\
 \backgroundSthree\ $\mu$
 &  \textbf{77.30}
 &  35.64    
 &  40.33  
 &  \best{\phantom{0}0.00}  
 &  \phantom{0}7.89  
 &  \phantom{0}4.98  
 &  83.43  
 &  77.21  
 &  \phantom{0000}   
 &  \phantom{0}1.69   \\
\rowcolor[gray]{.85} \backgroundSthree\ $\sigma$
 &  0.35
 &  \phantom{0}2.16    
 &  \phantom{0}5.68  
 &  \phantom{0}0.00  
 &  10.68  
 &  \phantom{0}5.40  
 &  \phantom{0}0.81  
 &  \phantom{0}0.73  
 &  \phantom{0000}   
 &  \phantom{0}0.56   \\ \midrule
\hspace{\subtab} Shared BinDisc  $\mu$
 &  
 &  32.16    
 &  13.43  
 &  \best{\phantom{0}0.00}  
 &  \best{\phantom{0}0.00}  
 &  \best{\phantom{0}0.00}  
 &  95.13  
 &  84.39  
 &  \phantom{0000}   
 &  \best{\phantom{0}0.05}   \\
\rowcolor[gray]{.85}\hspace{\subtab} Shared BinDisc  $\sigma$
 &  
 &  \phantom{0}0.22    
 &  \phantom{0}2.10  
 &  \phantom{0}0.00  
 &  \phantom{0}0.00  
 &  \phantom{0}0.00  
 &  \phantom{0}0.19  
 &  \phantom{0}0.83  
 &  \phantom{0000}   
 &  \phantom{0}0.02   \\
\hspace{\subtab} Shared Classi  $\mu$
 &  77.11
 &  56.52    
 &  64.95  
 &  \phantom{0}2.73  
 &  81.18  
 &  31.85  
 &  \best{81.61}  
 &  76.80  
 &  \phantom{0000}   
 &  22.97   \\
\rowcolor[gray]{.85}\hspace{\subtab} Shared Classi  $\sigma$
 &  0.19
 &  \phantom{0}6.86    
 &  \phantom{0}5.79  
 &  \phantom{0}1.24  
 &  31.58  
 &  20.90  
 &  \phantom{0}0.53  
 &  \phantom{0}0.90  
 &  \phantom{0000}   
 &  \phantom{0}2.95   \\
Shared Combi $s_2$  $\mu$
 &  77.11
 &  33.13    
 &  35.93  
 &  \best{\phantom{0}0.00}  
 &  \best{\phantom{0}0.00}  
 &  \phantom{0}3.17  
 &  83.26  
 &  \best{76.43}  
 &  \phantom{0000}   
 &  \phantom{0}1.13   \\
\rowcolor[gray]{.85}Shared Combi $s_2$  $\sigma$
 &  0.19
 &  \phantom{0}0.59    
 &  \phantom{0}4.68  
 &  \phantom{0}0.00  
 &  \phantom{0}0.00  
 &  \phantom{0}4.35  
 &  \phantom{0}0.51  
 &  \phantom{0}0.82  
 &  \phantom{0000}   
 &  \phantom{0}0.36   \\
Shared Combi $s_3$  $\mu$
 &  77.11
 &  33.18    
 &  36.17  
 &  \best{\phantom{0}0.00}  
 &  \best{\phantom{0}0.00}  
 &  \phantom{0}3.23  
 &  83.24  
 &  \best{76.43}  
 &  \phantom{0000}   
 &  \phantom{0}1.14   \\
\rowcolor[gray]{.85}Shared Combi $s_3$  $\sigma$
 &  0.19
 &  \phantom{0}0.59    
 &  \phantom{0}4.70  
 &  \phantom{0}0.00  
 &  \phantom{0}0.00  
 &  \phantom{0}4.40  
 &  \phantom{0}0.51  
 &  \phantom{0}0.81  
 &  \phantom{0000}   
 &  \phantom{0}0.38   \\
 \bottomrule
\end{tabularx}
}
\end{sc}
\end{small}
\end{center}
\vskip -0.1in
\end{table*}

\begin{table*}[!htbp]
\caption{\textbf{Mean} $\mu$ and \textbf{standard deviation} $\sigma$ of the \textbf{AUROC} measure for different methods and scoring functions over five runs each for models with \textbf{OpenImages} as training out-distribution. The training details are the same as for the results shown in Table~\ref{table:OI_auc}.
}\label{table:stats_OI_auc}
\setlength\tabcolsep{.5pt} 
\vskip 0.15in
\begin{center}
\begin{small}
\begin{sc}
\makebox[\textwidth][c]{
\begin{tabularx}{1.0\textwidth}{lC|C|CCCCCCC|C}
\multicolumn{10}{c}{in-distribution: CIFAR-10} \\
\midrule
       & &  Mean &     SVHN    &   LSUN    &   Uni &   Smooth     &   C-100 &  80M &   CelA  &    OpenIm     \\ 
Model    & Acc. &  AUC &    AUC    &   AUC    &   AUC &  AUC    &   AUC & AUC &   AUC &   AUC      \\ \midrule
OE  $\mu$
 &  95.11
 &  97.25    
 &  98.23  
 &  99.99  
 &  99.96  
 &  99.93  
 &  90.13  
 &  \best{92.58}  
 &  \best{99.91} 
 &  99.52   \\
\rowcolor[gray]{.85}OE  $\sigma$
 &  0.04
 &  \phantom{0}0.05    
 &  \phantom{0}0.26  
 &  \phantom{0}0.01  
 &  \phantom{0}0.04  
 &  \phantom{0}0.07  
 &  \phantom{0}0.08  
 &  \phantom{0}0.15  
 &  \phantom{0}0.01  
 &  \phantom{0}0.06   \\ \midrule
 \hspace{\subtab}  \backgroundSone \ $\mu$
 &  
 &  94.59    
 &  \best{99.16}  
 &   \best{100.00}  
 &  \best{99.98}  
 &  99.97  
 &  78.02  
 &  85.15  
 &  99.85  
 &  \best{99.96}   \\
\rowcolor[gray]{.85} \hspace{\subtab}  \backgroundSone \ $\sigma$
 &  
 &  \phantom{0}0.27    
 &  \phantom{0}0.21  
 &  \phantom{0}0.00  
 &  \phantom{0}0.02  
 &  \phantom{0}0.02  
 &  \phantom{0}1.02  
 &  \phantom{0}0.85  
 &  \phantom{0}0.07  
 &  \phantom{0}0.01   \\
 \backgroundStwo\ $\mu$
 &  \textbf{95.29}
 &  \textbf{97.33}    
 &  98.74  
 &   \best{100.00}  
 &  \best{99.98}  
 &  99.92  
 &  90.48  
 &  \best{92.58}  
 &  99.59  
 &  99.81   \\
\rowcolor[gray]{.85} \backgroundStwo\ $\sigma$
 &  0.10
 &  \phantom{0}0.08    
 &  \phantom{0}0.12  
 &  \phantom{0}0.00  
 &  \phantom{0}0.03  
 &  \phantom{0}0.13  
 &  \phantom{0}0.18  
 &  \phantom{0}0.22  
 &  \phantom{0}0.27  
 &  \phantom{0}0.05   \\
 \backgroundSthree\ $\mu$
 &  \textbf{95.29}
 &  97.32    
 &  98.71  
 &   \best{100.00}  
 &  99.97  
 &  99.91  
 &  \best{90.49}  
 &  \best{92.58}  
 &  99.58  
 &  99.80   \\
\rowcolor[gray]{.85} \backgroundSthree\ $\sigma$
 &  0.10
 &  \phantom{0}0.08    
 &  \phantom{0}0.12  
 &  \phantom{0}0.00  
 &  \phantom{0}0.04  
 &  \phantom{0}0.15  
 &  \phantom{0}0.18  
 &  \phantom{0}0.22  
 &  \phantom{0}0.28  
 &  \phantom{0}0.05   \\  \midrule
\hspace{\subtab} Shared BinDisc $\mu$
 &  
 &  92.10    
 &  98.68  
 &   \best{100.00}  
 &  99.94  
 &  99.98  
 &  67.81  
 &  78.51  
 &  99.75  
 &  99.95   \\
\rowcolor[gray]{.85}\hspace{\subtab} Shared BinDisc $\sigma$
 & 
 &  \phantom{0}0.42    
 &  \phantom{0}0.23  
 &  \phantom{0}0.00  
 &  \phantom{0}0.04  
 &  \phantom{0}0.02  
 &  \phantom{0}1.30  
 &  \phantom{0}1.85  
 &  \phantom{0}0.10  
 &  \phantom{0}0.01   \\
\hspace{\subtab}Shared Classi $\mu$
 &  95.21
 &  95.10    
 &  96.36  
 &  98.45  
 &  94.26  
 &  98.15  
 &  90.29  
 &  92.18  
 &  95.98  
 &  93.27   \\
\rowcolor[gray]{.85}\hspace{\subtab}Shared Classi $\sigma$
 &  0.07
 &  \phantom{0}0.39    
 &  \phantom{0}0.18  
 &  \phantom{0}0.15  
 &  \phantom{0}3.07  
 &  \phantom{0}1.21  
 &  \phantom{0}0.18  
 &  \phantom{0}0.21  
 &  \phantom{0}0.46  
 &  \phantom{0}0.34   \\
Shared Combi $s_2$ $\mu$
 &  95.21
 &  97.24    
 &  98.67  
 &   \best{100.00}  
 &  99.92  
 &  \best{99.98}  
 &  90.07  
 &  92.47  
 &  99.58  
 &  99.84   \\
\rowcolor[gray]{.85}Shared Combi $s_2$ $\sigma$
 &  0.07
 &  \phantom{0}0.03    
 &  \phantom{0}0.17  
 &  \phantom{0}0.00  
 &  \phantom{0}0.08  
 &  \phantom{0}0.02  
 &  \phantom{0}0.23  
 &  \phantom{0}0.23  
 &  \phantom{0}0.20  
 &  \phantom{0}0.03   \\
Shared Combi $s_3$ $\mu$
 &  95.21
 &  97.24    
 &  98.64  
 &   \best{100.00}  
 &  99.91  
 &  \best{99.98}  
 &  90.09  
 &  92.48  
 &  99.56  
 &  99.83   \\
\rowcolor[gray]{.85}Shared Combi $s_3$ $\sigma$
 &  0.07
 &  \phantom{0}0.04    
 &  \phantom{0}0.17  
 &  \phantom{0}0.00  
 &  \phantom{0}0.10  
 &  \phantom{0}0.02  
 &  \phantom{0}0.22  
 &  \phantom{0}0.23  
 &  \phantom{0}0.20  
 &  \phantom{0}0.03   \\
 \bottomrule
 \\
\multicolumn{10}{c}{in-distribution: CIFAR-100} \\
\midrule
       & &  Mean &     SVHN    &   LSUN    &   Uni &   Smooth     &   C-10 & 80M  &    &   OpenIm      \\ 
Model    & Acc. &  AUC &    AUC    &   AUC    &   AUC &  AUC    &   AUC & AUC &    &   AUC      \\ \midrule
OE $\mu$
 &  77.13
 &  90.24    
 &  89.58  
 &  99.98  
 &  \best{99.66} 
 &  98.76  
 &  75.50  
 &  77.96  
 &  \phantom{0000}   
 &  99.54   \\
\rowcolor[gray]{.85}OE $\sigma$
 &  0.23
 &  \phantom{0}0.16    
 &  \phantom{0}1.10  
 &  \phantom{0}0.00  
 &  \phantom{0}0.32  
 &  \phantom{0}0.77  
 &  \phantom{0}0.42  
 &  \phantom{0}0.28  
 &  \phantom{0000}   
 &  \phantom{0}0.13   \\ \midrule
 \hspace{\subtab}  \backgroundSone \ $\mu$
 & 
 &  88.48    
 &  \best{97.17}  
 &   \best{99.99}  
 &  99.61  
 &  \best{99.57}  
 &  60.96  
 &  73.57  
 &  \phantom{0000}   
 &  99.92   \\
\rowcolor[gray]{.85} \hspace{\subtab}  \backgroundSone \ $\sigma$
 &  
 &  \phantom{0}0.17    
 &  \phantom{0}0.38  
 &  \phantom{0}0.00  
 &  \phantom{0}0.14  
 &  \phantom{0}0.21  
 &  \phantom{0}0.51  
 &  \phantom{0}0.38  
 &  \phantom{0000}   
 &  \phantom{0}0.01   \\
 \backgroundStwo\ $\mu$
 &  \textbf{77.30}
 &  90.23    
 &  90.45  
 &   \best{99.99}  
 &  98.60  
 &  98.90  
 &  75.34  
 &  78.12  
 &  \phantom{0000}   
 &  99.55   \\
\rowcolor[gray]{.85} \backgroundStwo\ $\sigma$
 &  0.35
 &  \phantom{0}0.27    
 &  \phantom{0}1.10  
 &  \phantom{0}0.00  
 &  \phantom{0}1.50  
 &  \phantom{0}1.04  
 &  \phantom{0}0.32  
 &  \phantom{0}0.31  
 &  \phantom{0000}   
 &  \phantom{0}0.13   \\
 \backgroundSthree\ $\mu$
 &  \textbf{77.30}
 &  90.22    
 &  90.41  
 &   \best{99.99}  
 &  98.56  
 &  98.89  
 &  75.34  
 &  78.12  
 &  \phantom{0000}   
 &  99.55   \\
\rowcolor[gray]{.85} \backgroundSthree\ $\sigma$
 &  0.35
 &  \phantom{0}0.28    
 &  \phantom{0}1.10  
 &  \phantom{0}0.00  
 &  \phantom{0}1.55  
 &  \phantom{0}1.05  
 &  \phantom{0}0.32  
 &  \phantom{0}0.31  
 &  \phantom{0000}   
 &  \phantom{0}0.13   \\ \midrule
\hspace{\subtab} Shared BinDisc  $\mu$
 &  
 &  84.80    
 &  96.66  
 &   \best{99.99}  
 &  99.64  
 &  99.54  
 &  48.89  
 &  64.09  
 &  \phantom{0000}   
 &  \best{99.93}   \\
\rowcolor[gray]{.85}\hspace{\subtab} Shared BinDisc  $\sigma$
 &  
 &  \phantom{0}0.24    
 &  \phantom{0}0.55  
 &  \phantom{0}0.00  
 &  \phantom{0}0.12  
 &  \phantom{0}0.13  
 &  \phantom{0}1.01  
 &  \phantom{0}1.00  
 &  \phantom{0000}   
 &  \phantom{0}0.01   \\
\hspace{\subtab} Shared Classi  $\mu$
 &  77.11
 &  84.42    
 &  84.18  
 &  99.40  
 &  75.39  
 &  93.54  
 &  \best{75.82}  
 &  78.16  
 &  \phantom{0000}   
 &  94.82   \\
\rowcolor[gray]{.85}\hspace{\subtab} Shared Classi  $\sigma$
 &  0.19
 &  \phantom{0}2.29    
 &  \phantom{0}2.09  
 &  \phantom{0}0.20  
 &  12.57  
 &  \phantom{0}5.07  
 &  \phantom{0}0.14  
 &  \phantom{0}0.25  
 &  \phantom{0000}   
 &  \phantom{0}0.68   \\
Shared Combi $s_2$  $\mu$
 &  77.11
 &  \textbf{90.71}    
 &  91.89  
 &   \best{99.99}  
 &  99.60  
 &  99.23  
 &  75.31  
 &  \best{78.24}  
 &  \phantom{0000}   
 &  99.72   \\
\rowcolor[gray]{.85}Shared Combi $s_2$  $\sigma$
 &  0.19
 &  \phantom{0}0.10    
 &  \phantom{0}1.11  
 &  \phantom{0}0.00  
 &  \phantom{0}0.07  
 &  \phantom{0}0.68  
 &  \phantom{0}0.20  
 &  \phantom{0}0.23  
 &  \phantom{0000}   
 &  \phantom{0}0.08   \\
Shared Combi $s_3$  $\mu$
 &  77.11
 &  90.70    
 &  91.85  
 &   \best{99.99}  
 &  99.59  
 &  99.22  
 &  75.32  
 &  \best{78.24}  
 &  \phantom{0000}   
 &  99.72   \\
\rowcolor[gray]{.85}Shared Combi $s_3$  $\sigma$
 &  0.19
 &  \phantom{0}0.10    
 &  \phantom{0}1.11  
 &  \phantom{0}0.00  
 &  \phantom{0}0.08  
 &  \phantom{0}0.69  
 &  \phantom{0}0.20  
 &  \phantom{0}0.23  
 &  \phantom{0000}   
 &  \phantom{0}0.08   \\
 \bottomrule
\end{tabularx}
}
\end{sc}
\end{small}
\end{center}
\vskip -0.1in
\end{table*}

\end{document}